\newtheorem{theorem}{Theorem}[section]
\newtheorem{proposition}[theorem]{Proposition}
\newtheorem{corollary}[theorem]{Corollary}
\newtheorem{lemma}[theorem]{Lemma}
\newtheorem{definition}[theorem]{Definition}
\newtheorem{assumption}{Assumption}
\theoremstyle{remark}
\newtheorem{remark}{Remark}[section]
\newtheorem{conjecture*}{Conjecture}
\theoremstyle{plain}
\newcommand{\calL}{\mathcal{L}}
\newcommand{\calX}{\mathcal{X}}
\newcommand{\calP}{\mathcal{P}}
\newcommand{\calF}{\mathcal{F}}
\newcommand{\calN}{\mathcal{N}}
\newcommand{\calE}{\mathcal{E}}
\newcommand{\calH}{\mathcal{H}}
\newcommand{\calT}{\mathcal{T}}
\newcommand{\calW}{\mathcal{W}}
\newcommand{\E}{\mathbb{E}}
\newcommand{\R}{\mathbb{R}}
\newcommand{\W}[0]{\mathcal{W}_2}
\newcommand{\fc}[2]{\frac{#1}{#2}}
\newcommand{\nb}[0]{\nabla}
\newcommand{\an}[1]{\left\langle {#1}\right\rangle}
\newcommand{\pa}[1]{\left( {#1} \right)}
\newcommand{\ve}[1]{\left\Vert {#1}\right\Vert}
\newcommand\blfootnote[1]{%
  \begingroup
  \renewcommand\thefootnote{}\footnote{#1}%
  \addtocounter{footnote}{-1}%
  \endgroup
}
\title{Convergence of flow-based generative models via proximal gradient descent in Wasserstein space}
\author[1]{Xiuyuan Cheng}
\author[1,2]{Jianfeng Lu}
\author[1]{Yixin Tan}
\author[3]{Yao Xie}
\affil[1]{{\small Department of Mathematics, Duke University}}
\affil[2]{{\small Department of Physics and Department of Chemistry, Duke University}}
\affil[3]{{\small H. Milton Stewart School of Industrial and Systems Engineering, Georgia Institute of Technology}}
\date{\blfootnote{The authors are listed alphabetically.}
\vspace{-30pt}}
\begin{document}
\maketitle

\begin{abstract}
Flow-based generative models enjoy certain advantages in computing the data generation and the likelihood, and have recently shown competitive empirical performance. Compared to the accumulating theoretical studies on related score-based diffusion models, analysis of flow-based models, which are deterministic in both forward (data-to-noise) and reverse (noise-to-data) directions, remain sparse. In this paper, we provide a theoretical guarantee of generating data distribution by a progressive flow model, the so-called JKO flow model, which implements the Jordan-Kinderleherer-Otto (JKO) scheme in a normalizing flow network. Leveraging the exponential convergence of the proximal gradient descent (GD) in Wasserstein space, we prove the Kullback-Leibler (KL) guarantee of data generation by a JKO flow model to be  $O(\varepsilon^2)$ when using $N \lesssim \log (1/\varepsilon)$ many JKO steps ($N$ Residual Blocks in the flow) where $\varepsilon $ is the error in the per-step first-order condition. The assumption on data density is merely a finite second moment, and the theory extends to data distributions without density and when there are inversion errors in the reverse process where we obtain KL-$\mathcal{W}_2$ mixed error guarantees. The non-asymptotic convergence rate of the JKO-type $\mathcal{W}_2$-proximal GD is proved for a general class of convex objective functionals that includes the KL divergence as a special case, which can be of independent interest.
The analysis framework can extend to other first-order Wasserstein optimization schemes applied to flow-based generative models.
\end{abstract}

\section{Introduction}

Generative models, from generative adversarial networks (GAN) \cite{GAN,WassersteinGAN,CGAN} and variational auto-encoder (VAE) \cite{kingma2013auto,VAE_review}
to normalizing flow \cite{nflow_review},
have achieved many successes in applications and have become a central topic in deep learning.
More recently, diffusion models \cite{song2019generative,ho2020denoising,song2021score} and closely related flow-based models \cite{lipman2023flow,albergo2023building,albergo2023stochastic,fan2022variational,xu2022jko} have drawn much research attention, given their state-of-the-art performance in image generations.
Compared to score-based diffusion models, which are designed for sampling,
flow models have certain advantages due to their direct capability in estimating likelihood, a basis for statistical inference. 
However, despite the empirical successes, the theoretical understanding and guarantees for flow-based generative models remain limited.

In this paper, we provide a theoretical guarantee of generating data distribution by a ``progressive'' flow model, 
mainly following the JKO flow model in \cite{xu2022jko} but similar models have been proposed in, e.g., \cite{alvarez2022optimizing,mokrov2021large,vidal2023taming}.
We prove the exponential convergence rate of such flow models in both (data-to-noise and noise-to-data) directions.  
Below, we give an overview of the main results. 
We provide a brief introduction of the most related types of flow-based models, particularly the progressive one, in Section \ref{subsec:intro-flow}.
A more complete
literature survey can be found in Section \ref{subsec:literature}.

An abundance of theoretical works has provided the generation guarantee of score-based diffusion models \cite{lee2022convergence, lee2023convergence, chen2022sampling, de2022convergence, benton2024nearly, chen2023improved,pedrotti2024improved}.
In comparison, the theoretical study of flow-based generative models is much less developed.
Most recent works on the topic focused on the generation guarantee of the Ordinary Differential Equation (ODE) 
reverse process (deterministic sampler) once a score-based model is trained from the {\it forward} Stochastic Differential Equation (SDE) 
diffusion process \cite{chen2023restoration,chen2024probability,li2024towards}.
For generative models which are flow-based in the forward process,
generation guarantee for flow-matching models under continuous-time formulation was shown in \cite{albergo2023building} under $\W$, 
and  in \cite{albergo2023stochastic} under the Kullback–Leibler (KL) divergence by incorporating additional SDE diffusion.
The current paper focuses on obtaining the theoretical guarantee of the JKO flow model \cite{xu2022jko}, which is progressively trained over the Residual Blocks (steps) and generates a discrete-time flow in both forward and reverse directions. The mathematical formulation of the JKO flow is summarized in Section \ref{sec:setup}, where we introduce needed theoretical assumptions on the learning procedure.

Our analysis is based on first proving the convergence of the forward process (the JKO scheme by flow network), which can be viewed as an approximate proximal Gradient Decent (GD) in the Wasserstein-2 space to minimize $G(\rho)$, a functional on the space of probability distributions. 
While the convergence analyses of Wasserstein GD and proximal GD have appeared previously in literature \cite{salim2020wasserstein,kent2021modified}, our setup differs in several ways, primarily in that we consider the JKO scheme, which is a ``fully-backward'' discrete-time GD.
For the $N$ step discrete-time proximal GD, which produces a sequence of transported distributions $p_n$, we prove the convergence of 
both $\W(p_n, q)$ and the objective gap $G(p_n) - G(q)$ at an exponential rate, where $q$ is the global minimum of $G$ (Theorem \ref{thm:N-step-forward-no-inv-error}). The convergence applies to a general class of (strongly) convex $G$ that includes the KL divergence ${\rm KL}(p || q)$ as a special case. 
This result echos the classical proximal GD convergence in vector space where one expects an exponential convergence rate for strongly convex minimizing objectives. 
While exponential convergence is a natural result from the point of view of gradient flow, this convergence result of JKO-type $\W$-proximal GD did not appear in previous literature to the authors' best knowledge and can be of independent interest.

After obtaining a small $G(p_n) = {\rm KL}(p_n || q)$ from the convergence of the forward process, we directly obtain the KL guarantee of the generated density from the data density by the invertibility of the flow and 
the data processing inequality, 
and this implies the total variation (TV) guarantee (Corollary \ref{cor:KL-P-P2r}). 
The requirement for data distribution is to have a finite second moment and a density (with respect to the Lebesgue measure). 
The TV and KL guarantees are of $O(\varepsilon)$ and $O(\varepsilon^2)$, respectively, where $\varepsilon$ is the bound for the magnitude of the Wasserstein (sub-)gradient of the loss function (hence error in the first order condition) in each of the $N$ JKO steps (Assumption \ref{assump:1st-order-condition-error}), and the process achieves the error bound in $N \lesssim \log (1/\varepsilon)$ many steps (each step is a Residual Block).

To handle the situation when the data distribution only has a finite second moment but no density, we apply a short-time initial diffusion and start the forward process from the smoothed density. This short-time diffusion was adopted in practice and prior theoretical works. 
We then obtain KL and TV guarantee to generate the smoothed density, which can be made arbitrarily close to the data distribution in $\W$ when the initial diffusion time duration tends to zero (Corollary \ref{cor:KL-mixed-P2}). The above results are obtained when the reverse process is computed exactly with no inversion error. 
Our analysis can also extend to the case of small inversion error by proving a $\W$-guarantee between the generated density from the exact reverse process and that from the actual computed one. Theoretically, the $\W$-error can be made $O(\varepsilon)$ or smaller assuming that the inversion error can be made $O(\varepsilon^{\alpha})$ for some exponent $\alpha$ (Corollary \ref{cor:mixed-bound-inv}).

\begin{figure*}[t]
\centering
\includegraphics[height=.275\linewidth]{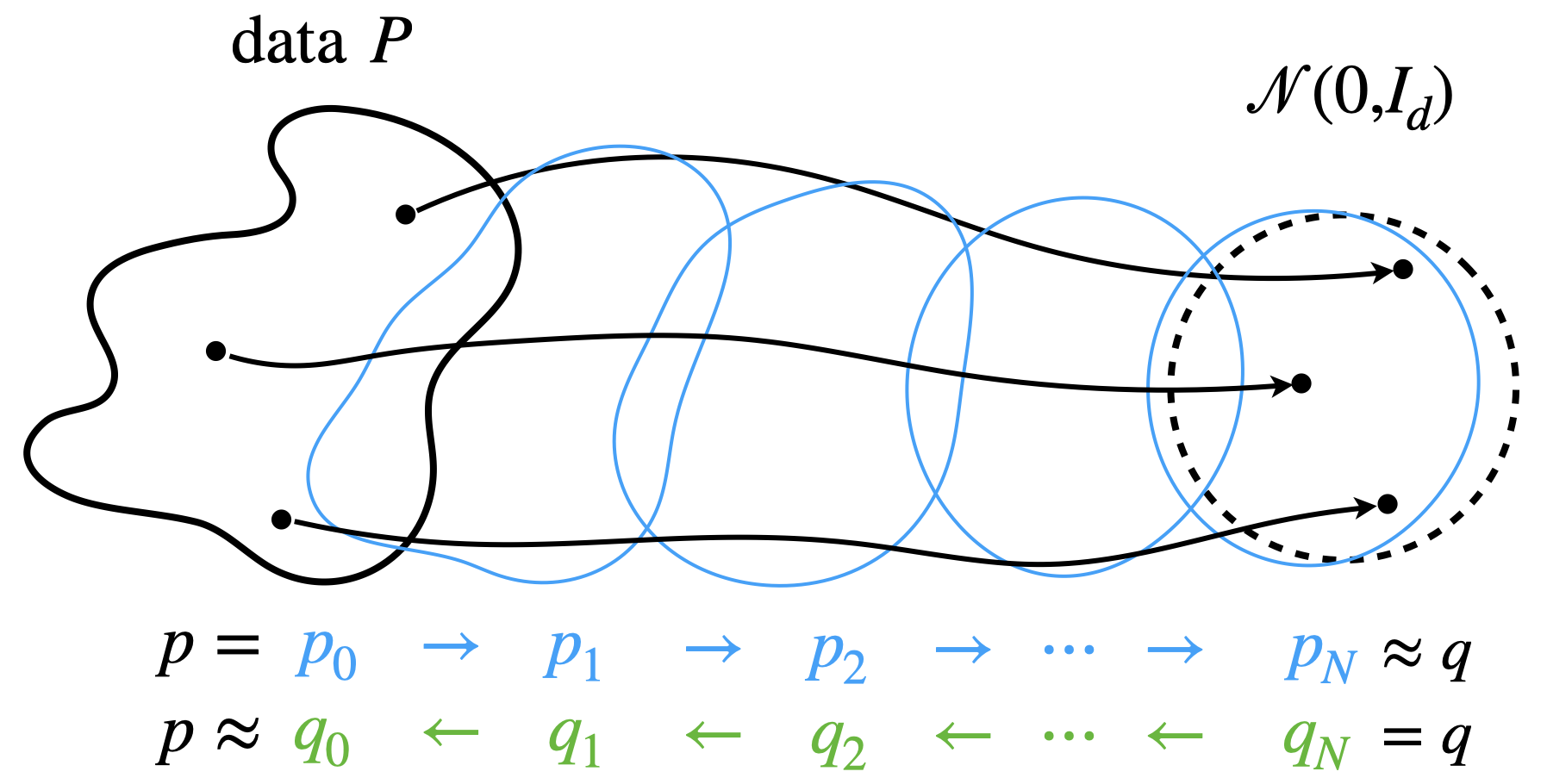} 
\caption{
The arrows indicate the forward-time flow from data distribution $P$ to normal distribution $q$.
The forward and reverse processes  \eqref{eq:fwd-bwd-process}
consist of the sequence of transported densities at discrete time stamps.
}
\label{fig:pn}
\end{figure*}

\subsection{Normalizing Flow models}\label{subsec:intro-flow}

\paragraph{Normalizing flow.}
Normalizing flow is a class of deep generative models for efficient sampling and density estimation. 
Compared to diffusion models, Continuous Normalizing Flow (CNF) models \cite{nflow_review} appear earlier in the generative model literature. Largely speaking, CNF  models fall into two categories: discrete-time and continuous-time.
The discrete-time CNF models adopt the structure of a Residual Network (ResNet) \cite{he2016deep} and typically consist of a sequence of mappings:
\begin{equation}\label{eq:resnet}
x_{l} = x_{l-1} +  f_l (x_{l-1}), \quad l = 1, \cdots, L,
\end{equation}
where $f_l$ is the neural network mapping parameterized by the $l$-th ``Residual Block'', and $x_l$ is the output of the  $l$-th block. 
Continuous-time CNFs are implemented under the neural ODE framework \cite{chen2018neural},
where the neural network features $x(t)$ is computed by integrating an ODE
\begin{equation}\label{eq:flownet}
\dot{x}(t)= v(x(t), t), \quad t \in [0,T],
\end{equation}
and $v_t(x) = v(x,t)$ is parametrized by a neural ODE network. 
The discrete-time CNF \eqref{eq:resnet} can be viewed as computing the numerical integration of the neural ODE \eqref{eq:flownet} on a sequence of time stamps via the forward Euler scheme. 

In both categories, a CNF model computes a deterministic transport from the data distribution towards a target distribution $q$ typically normal, $q = \calN(0, I_d)$, per the name ``normalizing.'' The forward time flow is illustrated in Figure \ref{fig:pn}.
Taking the continuous-time formulation \eqref{eq:flownet}, 
let $P$ be the data distribution with density $p$, $x(0) \sim p$,
and denote by $p_t (x) = p(x,t)$ the probability density of $x(t)$.
Then $p_t$ solve the continuity equation (CE)
\begin{equation}\label{eq:liouville}
\partial_t p_t + \nabla \cdot (p_t  v_t) = 0,
\end{equation}
from $p_0 = p$. 
If the algorithm can find a  $v_t$ such that $p_T$ at some time $T$ is close to $q$, then one would expect the reverse-time flow from $t=T$ to $t=0$ to transport from $q$ to a distribution close to $p$.
Note that in the continuous-time flow, invertibility is presumed since the neural ODE can be integrated in two directions of time alike.
 For discrete-time flow \eqref{eq:resnet}, invertibility needs to be ensured either by special designs of the neural network layer type  \cite{dinh2014nice,dinh2016density,kingma2018glow},  
 or by regularization techniques such as spectral normalization \cite{iResnet}
 or transport cost regularization \cite{onken2021otflow,xu2022invertible}.
 
 A notable advantage of the flow model is the computation of the likelihood. 
 For discrete-time flow \eqref{eq:resnet}, this involves the computation of the log-determinant of the Jacobian of $ f_l$.
 For continuous-time flow \eqref{eq:flownet}, this is by the relation
 \[\log p_t( x(t)) - \log p_s( x(s))  =  - \int_{s}^{t} \nabla \cdot v( x(\tau), \tau) d\tau,\] 
 which involves the time-integration of the trace of the Jacobian of $ v_t$ \cite{grathwohl2018ffjord}. While these computations may encounter challenges in high dimensions, the ability to evaluate the (log) likelihood is fundamentally useful; in particular, it allows for evaluating the maximum likelihood training objective on finite samples. This property is also adopted in the deterministic reverse process in diffusion models \cite{song2021score}, called the ``probability flow ODE'' (see more in Section \ref{subsec:lit-sbdm}), so the likelihood can be evaluated once a forward diffusion model has been trained. 
 
\paragraph{Progressive flow models.} Another line of works, developed around the same time as diffusion models,
 explored the variational form of normalizing flow as a Wasserstein gradient flow and proposed the so-called {\it progressive} training of the flow model.
The progressive training of ResNet, i.e., training block-wise by a per-block variational loss, was proposed by \cite{johnson2019framework} at an earlier time under the GAN framework.
Later on, the Jordan-Kinderleherer-Otto (JKO) scheme, as a time-discretized Wasserstein gradient flow (see more in Section \ref{sub:pre-sde-jko}), was explored in several flow-based generative models:
\cite{alvarez2022optimizing,mokrov2021large} implemented the JKO scheme using input convex neural networks \cite{amos2017input};
\cite{fan2022variational} proposed a  forward progressive flow from noise to data, showing empirical success in generating high-dimensional real datasets;
\cite{xu2022jko} developed  the JKO flow model under the invertible continuous-time CNF framework, 
achieving competitive generating performance on high-dimensional real datasets 
at a significantly reduced computational and memory cost from previous CNF models;
an independent concurrent work \cite{vidal2023taming} proposed a block-wise JKO flow model utilizing the framework of \cite{onken2021otflow}.
Many other flow models related to diffusion and Optimal Transport (OT) exist in the literature; see more in Section \ref{subsec:lit-more-flows}.
Our theoretical analysis will focus on the progressive flow models, and we primarily follow the invertible flow framework in \cite{xu2022jko}.
 
 To be more specific, a progressive flow model represents the flow on  $[0,T]$ as the composition of $N$ sub-flow models
  where each one computes the flow on a sub-interval $[t_{n-1}, t_n]$, $n=1, \dots, N$. 
The training is ``progressive'', meaning that at one time, only one sub-model is trained,
and the training of the $n$-th sub-model is conducted once the previous $n-1$ sub-models are trained and fixed. 
The progressive block-wise training is in contrast of the end-to-end training, where the flow on $[0,T]$ (or $N$ Residual Blocks) is trained simultaneously by a single objective.
The sub-flow model on $[t_{n-1}, t_n]$ can take different forms,
e.g., a ResNet block or a continuous-time neural ODE, 
and the $N$ sub-intervals always provide a time-discretization of the flow. 
In this context, we call the sub-model on the $n$-th sub-interval a ``Residual Block''.

\subsection{Additional related works}\label{subsec:literature}

\subsubsection{Score-based diffusion models}\label{subsec:lit-sbdm}

In score-based diffusion models, the algorithm first simulates a forward process, which is a (time-discretized) SDE,
from which a score function parameterized as a neural network is trained.
The reverse (SDE or ODE) process is simulated using the learned score model to generate data samples.

\paragraph{SDE in diffusion models}

As a typical example, in the {\it variance preserving} Denoising Diffusion Probabilistic Modeling (DDPM) process \cite{ho2020denoising,song2021score}, the forward process produces a sequence of $X_n$,
\begin{equation}\label{eq:VP-DDPM-1}
X_n = \sqrt{1-\beta_n } X_{n-1} + \sqrt{ \beta_n } Z_{n-1}, 
\quad n = 1, \cdots, N,
\end{equation}
where $Z_{n} \sim \calN(0, I_d)$ i.i.d.
and 
$X_0 \sim P $ is drawn from data distribution.
With large $N$, the continuum limit of the discrete dynamic \eqref{eq:VP-DDPM-1} is a continuous-time SDE, 
\begin{equation}\label{eq:VP-DDPM=SDE}
dX_t = -\frac{1}{2}\beta(t)  X_t dt + \sqrt{\beta (t)} dW_t,
\quad  t\in [0, T], 
\end{equation}
where $\beta(t) > 0$ is a function and
$W_{t}$ is a standard Wiener process (Brownian motion).
Since $\beta(t)$ in \eqref{eq:VP-DDPM=SDE} corresponds to a time reparametrization of $t$, after changing the time ($t \mapsto \int_0^t \beta(s)/2 ds$), \eqref{eq:VP-DDPM=SDE} becomes  the following SDE 
\begin{equation}\label{eq:OU-SDE}
    dX_t = - X_t dt + \sqrt{2} dW_t,
\end{equation}
which is the Ornstein-Uhlenbeck (OU) process in $\R^d$.
We consider the time-parametrization in \eqref{eq:OU-SDE} for exhibition simplicity.
More generally, one can consider a diffusion process 
 \begin{equation}\label{eq:diffusion-sde-2}
 dX_{t}= - \nabla V(X_{t} )\,dt+ \sqrt{2}\,dW_{t}, \quad X_0 \sim P,
 \end{equation}
and the OU process is a special case with $V(x) = \| x\|^2/2$.

We denote by $\rho_t = {\calL}_t (P)$ the marginal distribution of $X_t$ for $ t >0$. 
The time evolution of $\rho_t$ is described by the Fokker–Planck Equation (FPE)  written as
\begin{equation}\label{eq:FPE}
\partial_{t}\rho_t = \nabla\cdot(\rho_t \nabla V + \nabla \rho_t).
\end{equation}

\paragraph{Forward and reverse processes}
When simulating the forward  process, the diffusion models train a neural network to learn the score function 
$s_t( x) : = \nabla \log \rho_t$
by score matching \cite{hyvarinen2005estimation,vincent2011connection}. The training objective can be expressed as the mean-squared error defined as $\int_0^T \int \| \hat s_t (x) - s_t (x) \|^2 \rho_t (x) dx dt $, which facilitates training and is scalable to high dimension data such as images (in the original pixel space). 

Once the neural-network score function $\hat s_t$ is learned, 
the algorithm simulates a reverse-time SDE $\tilde X_t$ (with time discretization in practice) \cite{song2021score},
such that from $\tilde X_T \sim \calN(0,I)$ the distribution of $\tilde X_0$ is expected to the close to the data distribution $P$. 
It has also been proposed in \cite{song2021score} to compute the reverse process by integrating the following ODE reverse in time
\begin{equation}\label{eq:prob-flow-ode}
\dot{\tilde x} (t) = - \nabla V( \tilde{x}(t)) - s_t( \tilde x(t) ),
\end{equation}
and \eqref{eq:prob-flow-ode} was called the ``probability flow ODE.'' The validity of this ODE reverse process can be justified by the observation that the
CE \eqref{eq:liouville} and FPE \eqref{eq:FPE} are the same when setting $v_t(x) =  - (\nabla V(x) + s_t(x))$.
This equivalence between density evolutions by SDE and ODE 
has been known in the literature of diffusion processes and solving FPE, dating back to the 90s \cite{degond1990deterministic,degond1989weighted}.

\subsubsection{Flow models related to diffusion and OT}\label{subsec:lit-more-flows}

\paragraph{Flow-matching models}
After diffusion models gained popularity, several flow-based models (in the reverse and forward directions) closely related to the diffusion model emerged. 
In particular, the Flow-Matching ODE model was proposed in \cite{lipman2023flow} using the formulation conditional probability paths, where a neural ODE parameterized $\hat v(x,t)$ is trained to match a velocity field $v(x,t)$ whose corresponding CE \eqref{eq:liouville} can evolve the density $p_t$ towards normality.
The algorithm can adopt diffusion paths, where the CE will equal the density evolution equation \eqref{eq:FPE} of an SDE forward process, as well as non-diffusion paths.
A similar approach was developed under the ``stochastic interpolant'' framework in \cite{albergo2023building}, where the terminal distribution $q$ can be arbitrary (not necessarily the normal distribution) and only accessible via finite samples. These models train a continuous-time CNF by minimizing a ``matching'' objective instead of the maximum likelihood objective as in \cite{grathwohl2018ffjord}, thus avoiding the computational challenges of the latter.

\paragraph{Optimal Transport flows}
Apart from diffusion models and Wasserstein gradient flow, 
Wasserstein distance and OT have inspired another line of works on flow models where the Wasserstein distance, or a certain form of transport cost, is used to regularize the flow model and to compute the OT map between two distributions. Transport cost regularization of neural network models was suggested in several places: 
\cite{ruthotto2020machine} provided a general framework for solving high-dimensional mean-field games (MFG) and control problems, \cite{finlay2020train} proposed a kinetic regularization aiming to stabilize neural ODE training, 
\cite{onken2021otflow} and \cite{xu2022invertible} developed the transport regularization in CNF and invertible ResNet, respectively,
and \cite{huang2023bridging} applied to MFG and flow models. 
Other works developed flow models to compute the optimal coupling or the optimal transport between two distributions.
For example, Rectified Flow \cite{liu2022rectified} proposed an iterative method to adjust
the flow towards the optimal coupling. 
The method is closely related to the stochastic interpolant approach \cite{albergo2023building} which, in principle, can solve the OT trajectory if the interpolant map can be optimized. A flow model to compute the dynamic OT between two high dimensional distributions from data samples was proposed in \cite{xu2023computing} by refining the flow using the transport cost from a proper initialization. Despite the wealth of methodology developments and empirical results, the theoretical guarantees of these flow models are yet to be developed.

\subsubsection{Theoretical guarantees of generative models}

\paragraph{Approximation and estimation of GAN}
On theoretical guarantees of generative models, earlier works focused on the approximation and estimation analysis under the GAN framework. 
The expressiveness of a deep neural network to approximate high dimensional distributions was established in a series of works, e.g.,  \cite{lee2017ability,lu2020universal,perekrestenko2021high,yang2022capacity}, among others.  
The neural network architectures in these universal approximation results are typically feed-forward, like the generator network (G-net) proposed in the original GAN.
The approximation and estimation of the discriminator network (D-net) in GAN were studied in \cite{cheng2022classification}, and the problem can be cast and analyzed as the learning of distribution divergences in high dimension \cite{sreekumar2022neural}.
Convergence analysis of GAN was studied in several places, e.g., \cite{huang2022error}.

\paragraph{Guarantees of diffusion models}
An earlier work \cite{tzen2019theoretical} studied
the expressiveness of a generative model using a latent diffusion process and proved guarantees for sampling and inference; however, the approach only involves a forward process and differs from the recent diffusion models. 
Motivated by the prevailing empirical success of score-based diffusion models, recent theoretical works centralized on the generation guarantee of such models using both SDE and ODE samplers, i.e., the reverse process.

For the SDE reverse process, the likelihood guarantee of the score-based diffusion model was first derived in \cite{song2021maximum} without time discretization. 
Taking into account the time discretization, which significantly influences the efficiency in practice,
a series of theoretical studies have established polynomial convergence bounds for such models~\cite{lee2022convergence, lee2023convergence, chen2022sampling, de2022convergence, benton2024nearly, chen2023improved, pedrotti2024improved}.
In particular, \cite{lee2022convergence} were the first to attain polynomial convergence without succumbing to the curse of dimensionality, although this required a log-Sobolev Inequality on the data distribution. For a general data distribution,~\cite{chen2022sampling} achieved polynomial error bounds in Total Variation (TV) distance under the Lipschitz assumption, leveraging Girsanov's theorem. In parallel,~\cite{lee2023convergence} derived similar polynomial convergence bounds, employing a technique for converting $L^\infty$-
accurate score estimates into $L^2$-accurate score estimation. Further advancements by~\cite{chen2023improved} established a
more refined bound, reducing the requirement of smoothness of data distribution. Most recently,
\cite{pedrotti2024improved} improved the convergence rates under mild assumptions by introducing prediction-correction,
and ~\cite{benton2024nearly} established the first convergence bounds for diffusion models, which are linear in the data dimension (up to logarithmic factors) without requiring any smoothness of the data distribution.

\paragraph{Guarantees of ODE flows} 
Within the studies of score-based diffusion models (note that the forward process is always SDE), theoretical findings for the ODE reverse process are relatively fewer. 
To the best of our knowledge,
\cite{chen2023restoration} established the first non-asymptotic polynomial convergence rate where the error bound involves an exponential factor in the flow time;
\cite{chen2024probability} provided the first polynomial-time convergence guarantees for the probability flow ODE implementation with a corrector step.
Recently,~\cite{li2024towards} established bounds for both deterministic (ODE) and non-deterministic (SDE) samplers under certain additional assumptions on learning the score.
The analysis is done by directly tracking the density ratio between the law of the diffusion process and that of the generated process in discrete time, leading to various non-asymptotic convergence rates.

Compared to score-based diffusion models, the guarantees of flow models (in both forward and reverse processes) are significantly less developed. We are aware of two recent works:
The error bounds for the flow-matching model \cite{albergo2023building} were proved in \cite{benton2024error} and applied to probability flow ODE in score-based diffusion models;
for neural ODE models trained by likelihood maximization (the framework in \cite{grathwohl2018ffjord}), \cite{marzouk2023distribution} proved non-parametric statistical convergence rates to learn a distribution from data. Both works used a continuous-time formulation, and the flow models therein are trained end-to-end.
Compared to end-to-end training, progressive flow models can offer advantages in training efficiency and accuracy, in addition to other advantages such as smaller model complexity. 
For the analysis, the formulation of progressive flow models is variational and time-discretized in nature. Theoretical studies of time-discretized ODE flow models in both forward and reverse directions remain rudimentary.

\subsubsection{Optimization in Wasserstein space}

Continuing the classical literature in optimization and information geometry, several recent works established a convergence guarantee of first-order optimization in probability space in various contexts, leveraging the connection to the Wasserstein gradient flow. These analyses can potentially be leveraged under the theoretical framework of this paper to develop new (progressive) flow models as well as theoretical guarantees of generative models.

\paragraph{Optimization in probability distribution space}
Convergence and rate analysis for first-order methods for vector-space optimization, primarily gradient descent and stochastic gradient descent
--- sometimes referred to as the Sample Average Approximation (SAA) approach --- for convex and strongly convex problems have been established in the original works \cite{NemirovskiYudin83,nemirovski2009robust}, and extended in various contexts in subsequent papers.
Optimization in the space of probability distributions (which forms a manifold) naturally arises in many learning problems and has become an important field of study in statistics and machine learning. 
In particular, the seminal work of Amari \cite{amari2016information,amari2008information} introduced information geometry emerging from studies of a manifold of probability distributions. 
It includes convex analysis and its duality as a special but important component; however, the line of work did not develop error analysis or convergence rates for algorithms on the probabilistic manifold.
More recently, a Frank-Wolfe procedure in probability space was proposed in \cite{kent2021modified} motivated by applications in nonparametric estimation and was shown to converge exponentially fast under general mild assumptions on the objective functional.

\paragraph{Wasserstein proximal gradient descent} 
The landmark work \cite{jordan1998variational} showed 
the solution to the Fokker Planck equation as the gradient flow of the KL divergence under the $\W$-distance. 
The proof in \cite{jordan1998variational} employed a time discretization of the gradient flow now recognized as the JKO scheme. 
Making a connection between Langevin Monte Carlo and Wasserstein gradient flow, 
\cite{bernton2018langevin} proposed a proximal version of the Unadjusted Langevin Algorithm 
corresponding to a splitting scheme of the discrete Wasserstein GD 
and derived non-asymptotic convergence analysis. 
To analyze the convergence of discrete-time $\W$-gradient flow, \cite{salim2020wasserstein} introduced a Forward-Backward time discretization in the proximal Wasserstein GD
 and proved convergence guarantees akin to the GD algorithm in Euclidean spaces. 
 We comment on the difference between \cite{salim2020wasserstein} and our scheme in more detail later in Remark \ref{remark-forward-backward}.

The JKO scheme also inspired recent studies in variational inferences (VI). In the context of Gaussian VI, 
  \cite{lambert2022variational} proposed gradient flow of the KL divergence on the Bures-Wasserstein (BW) space,
  namely the space of Gaussian distributions on $\R^d$ endowed with the $\W$-distance.
  The algorithm enjoys the explicit solution of the JKO scheme in the BW space, and convergence of the continuous-time gradient flow was proved. In a follow-up work \cite{diao2023forward}, the forward-backward splitting was adopted in the proximal Wasserstein GD in the BW space,
  leading to convergence guarantees of the discrete-time GD to first-order stationary solutions. 
  The closed-form solution of the JKO operator only applies to the BW space, while the JKO flow network tries to learn a transport map to solve the JKO scheme in each step, leveraging the expressiveness of neural networks.  
  Theoretically, we consider distributions with finite second moments in this work.

\subsection{Notations}
Throughout the paper, we consider distributions over $\calX$ and the domain $\calX = \R^d$.
We denote by $\calP_2$, meaning $\calP_2(\R^d)$, the space of probability distributions on $\R^d$ that has finite second moment.
Specifically, for a distribution $P$, define $M_2(P) := \int_{\R^d} \| x \|^2 dP(x)$. 
When $P$ has a density (with respect to the Lebesgue measure $dx$), we also write $M_2(P)$ as $M_2(p)$.
Then $\calP_2 = \{ P \text{ on $\R^d$}, \, s.t., M_2(P) < \infty \}$.
We denote by $\calP_2^r$ the distributions in $\calP_2$ that have densities, namely
$\calP_2^r = \{ P \in \calP_2, \, P \ll dx \}$. We also say a density $p \in \calP_2^r$ when $dP(x) = p(x)dx$ is in $\calP_2^r$.
In this paper, we consider distributions that have densities in most places. When there is no confusion, we use the density $p$ to stand for both the density and the distribution $dP(x) = p(x)dx$,  e.g., we say that a random variable $X \sim p$.

Given a (measurable) map $v: \R^d \to \R^d$ and $P$ a distribution on $\R^d$, 
its $L^2$ norm is denoted as  $ \| v\|_P := ( \int_{\R^d} \|v (x) \|^2 dP(x) )^{1/2}$.
When $P$ has density $p$, we also denote it as $\| v \|_p $.
For $P \in \calP_2$, we denote by $L^2(P)$ (and also by $L^2(p)$ when $P$  has density $p$) 
the $L^2$ space of vector fields, that is, $L^2(P) := \{ v : \R^d \to \R^d, \, \| v \|_P <\infty \}$.
For $ u, v \in L^2(P)$, define $\langle u, v \rangle_P : = \int_{\R^d} u(x)^T v(x) dP(x)$, which is also denoted as $\langle u, v \rangle_p$ when $p$ is the density.
The notation ${\rm I_d}$ stands for the identity map, which is always in $L^2(P)$ for $P \in \calP_2$.
For $T:\R^d \to \R^d$, 
the {\it pushforward} of a distribution $P$ is denoted as $T_\# P$, 
such that 
$T_\# P(A) = P( T^{-1}(A))$ for any measurable set $A$.
When $P$ has density $p$ and $T_\# P$ also has a density, 
we also denote by $T_\# p$ the density of  $T_\# P$.
For two maps $S, T: \R^d \to \R^d$, $S \circ T$ 
is the function composition.

\section{Preliminaries}

\subsection{Wasserstein-2 distance and optimal transport}

We first review the definitions of the Wasserstein-2 distance and optimal transport (OT) map, which are connected by the Brenier Theorem (see, e.g., \cite[Section 6.2.3]{ambrosio2005gradient}).

Given two distributions $\mu, \nu \in \calP_2$, the Wasserstein-2 distance $\W( \mu, \nu)$ is defined as
\begin{equation}\label{eq:ot}
    \W^2( \mu, \nu ) := \inf_{\pi \in \Pi ( \mu, \nu)}
        \int_{ \R^d \times \R^d} \ve{x-y}^2 d\pi(x,y),
\end{equation}
where $\Pi (\mu, \nu)$ denotes the family of all joint distributions with $\mu$ and $\nu$ as marginal distributions.
When $P$ and $Q$ are in $\calP_2^r$ and  have densities $p$ and $q$ respectively, we also denote $\W(P,Q)$ as $\W(p,q)$. 
When at least one of $\mu$ and $\nu$ has density, we have the Brenier Theorem, which allows us to define the optimal transport (OT) map from $\mu$ to $\nu$.

\begin{theorem}[Brenier Theorem]\label{thm:brenier}
Let $\mu \in \calP_2^r$ and $ \nu \in \calP_2$. Then
\begin{itemize}
\item[(i)]
There exists a unique minimizer $\pi$ of~\eqref{eq:ot}, which is characterized by a uniquely determined $\mu$-$a.e.$ map $T_\mu^\nu: \R^d \rightarrow \R^d $ such that $\pi = ( {\rm I_d}, T_\mu^\nu)_\# \mu$, where $( {\rm I_d}, T_\mu^\nu)$ maps $(x,y)$ to $(x, T_\mu^\nu (y))$. 
Moreover, there exists a convex function $\varphi: \R^d \to \R$ such that $T_\mu^\nu = \nb \varphi$ $\mu$-a.e.

\item[(ii)] 
The minimum of \eqref{eq:ot} equals that of the Monge problem, namely
\[
\W^2( \mu, \nu) = \inf_{T: \R^d \to \R^d, \, T_\# \mu = \nu }
    \int \| x -T(x) \|^2 d\mu(x).
\]

\item[(iii)] If moreover $\nu \in \calP_2^r$, then we also have the OT map  $T_\nu^\mu$ defined $\nu$-a.e., and  $T_\nu^\mu \circ T_\mu^\nu = {\rm I_d}$ $\mu$-a.e.,  $ T_\mu^\nu \circ T_\nu^\mu  = {\rm I_d}$ $\nu$-a.e.
\end{itemize}
\end{theorem}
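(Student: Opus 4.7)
The plan is to follow the classical route via Kantorovich duality and cyclical monotonicity. First I would establish existence of a minimizer of~\eqref{eq:ot}: the coupling set $\Pi(\mu,\nu)$ is tight (the marginals are fixed and lie in $\calP_2$) and weakly sequentially compact, while $\pi \mapsto \int \|x-y\|^2 d\pi$ is lower semicontinuous with respect to weak convergence, so a minimizer $\pi^*$ exists. Since both marginals have finite second moment, the quadratic cost decomposes as $\|x-y\|^2 = \|x\|^2 + \|y\|^2 - 2 x^T y$, and minimizing~\eqref{eq:ot} is equivalent to maximizing $\int x^T y\, d\pi$, which is the form most convenient for the duality and cyclical monotonicity arguments to follow.

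The core step is to show that $\mathrm{supp}(\pi^*)$ is cyclically monotone: for any finite family $(x_i,y_i)_{i=1}^k \subset \mathrm{supp}(\pi^*)$ and any permutation $\sigma$, one must have $\sum_i \|x_i-y_i\|^2 \le \sum_i \|x_i - y_{\sigma(i)}\|^2$; otherwise one can locally rewire $\pi^*$ through a disintegration argument and strictly decrease the cost, contradicting optimality. Rockafellar's theorem then produces a proper lower semicontinuous convex function $\varphi: \R^d \to \R \cup \{+\infty\}$ such that $\mathrm{supp}(\pi^*) \subset \{(x,y) : y \in \partial \varphi(x)\}$. The hypothesis $\mu \in \calP_2^r$ now plays its decisive role: $\varphi$ is locally Lipschitz on the interior of its domain, hence differentiable Lebesgue-a.e.\ by Rademacher's theorem, and therefore $\mu$-a.e.\ by absolute continuity of $\mu$. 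This forces $\pi^*$ to be concentrated on the graph of $\nabla \varphi$, so $\pi^* = (\mathrm{Id},\, T_\mu^\nu)_\# \mu$ with $T_\mu^\nu := \nabla \varphi$. Uniqueness of the minimizer follows by applying the same argument to any other optimizer $\tilde\pi$: the convex combination $\tfrac{1}{2}(\pi^*+\tilde\pi)$ is still optimal and hence concentrated on a single graph, which forces $\pi^* = \tilde\pi$.

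For (ii), any admissible transport map $T$ with $T_\# \mu = \nu$ yields the coupling $(\mathrm{Id},T)_\# \mu \in \Pi(\mu,\nu)$, so the Monge infimum dominates $\W^2(\mu,\nu)$; conversely the Kantorovich minimizer constructed above is itself of Monge form with map $T_\mu^\nu$, giving the reverse inequality and hence equality. For (iii), under $\nu \in \calP_2^r$ we apply (i) with the roles of $\mu$ and $\nu$ swapped to obtain $T_\nu^\mu = \nabla \psi$ for some convex $\psi$. Because the cost in~\eqref{eq:ot} is symmetric in the two marginals, both $(T_\mu^\nu, \mathrm{Id})_\# \mu$ and $(\mathrm{Id},T_\nu^\mu)_\# \nu$ are optimal couplings between $\mu$ and $\nu$ (viewed as couplings of $\nu$ with $\mu$); by the uniqueness from (i) they coincide, and disintegrating against $\mu$ and $\nu$ respectively yields $T_\nu^\mu \circ T_\mu^\nu = \mathrm{Id}$ $\mu$-a.e.\ and $T_\mu^\nu \circ T_\nu^\mu = \mathrm{Id}$ $\nu$-a.e.

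The step I expect to be most delicate is the passage from cyclical monotonicity of $\mathrm{supp}(\pi^*)$ to the graph representation. One must carefully handle the Lebesgue-null set on which $\varphi$ fails to be differentiable using only absolute continuity of $\mu$ (rather than any stronger regularity such as boundedness of the density), and check that $\pi^*$ assigns no mass to the preimage of this set so that $\nabla \varphi$ gives a genuinely $\mu$-a.e.\ defined map. The rest of the argument is a fairly mechanical combination of lower semicontinuity, Rockafellar, and Rademacher.
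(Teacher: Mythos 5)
The paper does not prove this statement: it is quoted as a classical result with a pointer to \cite[Section 6.2.3]{ambrosio2005gradient}, so there is no in-paper proof to compare against. Your outline is the standard textbook proof of Brenier's theorem (existence by tightness and lower semicontinuity, cyclical monotonicity of the support of an optimal plan, Rockafellar's theorem to produce the convex potential, Rademacher plus absolute continuity of $\mu$ to get the $\mu$-a.e.\ defined gradient map, and the convex-combination trick for uniqueness), and it is correct, including the parts (ii) and (iii) deductions; you have also correctly identified the one genuinely delicate point, namely that the non-differentiability set of $\varphi$ (together with the boundary of its domain) is Lebesgue-null and hence $\mu$-null.
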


In most places in our analysis, we will consider the OT between $\mu$ and $\nu$ both in $\calP_2^r$, and we will frequently use the Brenier Theorem (iii) to obtain the pair of OT maps which are inverse of each other in the a.e. sense.

\subsection{Differential and convexity of functionals on $\calP_2$}\label{subsec:prelim-cal-P2}

Consider a proper lower semi-continuous functional $\phi: \calP_2 \to (-\infty, \infty]$ 
and we denote the domain to be  ${\rm Dom}(\phi) = \{ \mu \in \calP_2,\, \phi( \mu) < \infty \} $.
The subdifferential of $\phi$ was defined in the Fr\'echet sense, see, e.g., Definition 10.1.1 of \cite{ambrosio2005gradient}. We recall the definition of strong subdifferential as below.

\begin{definition}[Strong subdifferential]\label{def:strong-subdiff}
Given $\mu \in \calP_2$, 
a vector field $\xi \in L^2(\mu) $  is a strong (Fr\'echet) subdifferential of $\phi$ at $\mu$ if for $v \in L^2(\mu)$,
\[
\phi( ( {\rm I_d} + v )_\# \mu) - \phi( \mu) \ge \langle \xi, v \rangle_\mu + o( \| v \|_\mu).
\]
We denote by $\partial_{\W} \phi (\mu)$ the set of strong Fréchet subdifferentials of $\phi$ at $\mu$ (which may be empty).
\end{definition}

There can be different ways to introduce convexity of functions on $\calP_2$. The most common way is the convexity along geodesics, also known as ``displacement convexity.'' In our analysis, we technically need the notation of convexity {\it along generalized geodesics} (a.g.g.), which is stronger than geodesic convexity. 
In short, displacement convexity is along the geodesic from $\mu_1$ to $\mu_2$, which,
in the simple case where there is a unique OT map $T_1^2$ from $\mu_1$ to $\mu_2$, 
is defined using interpolation $(1-t){\rm I_d} + t T_1^2$ for $t \in [0,1]$.
In contrast, convexity a.g.g. involves a third distribution $\nu$ and is defined using interpolation of the two OT maps from $\nu$ to $\mu_1$ and $\mu_2$ respectively. 

Specifically,  let $\nu \in \calP_2^r$, $\mu_i \in \calP_2$, $i=1,2$, 
and let $T_\nu^i$ be the OT map from $\nu$ to $\mu_i$ respectively. 
A  {\it general geodesic} joining $\mu_1$ to $\mu_2$ (with base $\nu$) is a curve of type
\begin{equation}\label{eq:def-mut-1-2}
\mu_t^{1 \to 2} : = ( (1-t) T_\nu^1 + t T_\nu^2 )_\# \nu, \quad t \in [0,1].
\end{equation}
\begin{definition}[Convexity along generalized geodesics]\label{def:convex-agg}
For $\lambda \ge 0$, a functional $\phi$ on $\calP_2$ is said to be $\lambda$-convex along generalized geodesics (a.g.g.) if for any $\nu \in \calP_2^r$ and $\mu_1, \mu_2 \in \calP_2$
and $\forall t \in [0,1]$,
\begin{equation}
    \phi( \mu_t^{1\to 2} ) \le (1-t) \phi( \mu_1 ) + t \phi( \mu_2) - 
    \frac{\lambda }{2} t(1-t) \calW_\nu^2(\mu_1, \mu_2),
\end{equation}
where $\mu_t^{1 \to 2} $ is as in \eqref{eq:def-mut-1-2} and 
\begin{equation}
    \calW_\nu^2(\mu_1, \mu_2) : = \int_{\R^d} \| T_\nu^1(x) - T_\nu^2(x) \|^2 d\nu(x)
    \ge \W^2(\mu_1, \mu_2).
\end{equation}
\end{definition}
Note that the definition implies the following property which is useful in our analysis, $\forall t \in [0,1]$,
\begin{equation}\label{eq:convex-agg-2}
    \phi( \mu_t^{1 \to 2} ) \le (1-t) \phi( \mu_1 ) + t \phi( \mu_2) - 
    \frac{\lambda }{2} t(1-t) \calW_2^2(\mu_1, \mu_2).
\end{equation}
The definition of convexity a.g.g. in \cite[Section 9.2]{ambrosio2005gradient} is for the more general case when $\nu$ may not have density and the OT maps from $\nu$ to $\mu_i$ need to be replaced with optimal plans, and then the generalized geodesics may not be unique. In this paper, we only consider the case where $\nu$ has a density, so we simplify the definition, see \cite[Remark 9.2.3]{ambrosio2005gradient} (and make it slightly weaker, but there is no harm for our purpose). 

We also note that many functionals $\phi(\mu)$ on $\calP_2$ that are geodesically convex actually also satisfy the convexity a.g.g. in Definition \ref{def:convex-agg}. Examples include
$\phi(\mu) = \int V(x) d\mu(x)$ with $V$ convex on $\R^d$, 
$\phi(\mu) = \iint W(x_1, x_2) d\mu(x_1) d\mu(x_2)$ with convex $W$,
and
 $\phi(\rho) = \int F(\rho(x)) dx$, $\rho$ being the density, where $F$ is convex on $[0, \infty)$. (In these examples, $V$, $W$, and $F$ need to satisfy other technical conditions.) The last example includes negative entropy as a special case, where $F(x) = x \log x$. 
The primary case for our work is when $\phi$ is the KL divergence, which will be discussed in more detail in Section \ref{subsec:general-conditions}. We refer to \cite[Section 9]{ambrosio2005gradient} for other examples and detailed discussions of convex a.g.g.{} functionals.

\subsection{JKO scheme for Fokker-Planck equations}\label{sub:pre-sde-jko}

Consider the diffusion process \eqref{eq:diffusion-sde-2} starting from  $P \in \calP_2$.
It is known that under generic conditions, as $t\to \infty$, $\rho_t$ converges to the equilibrium distribution of \eqref{eq:diffusion-sde-2} which has density 
\begin{equation}
q \propto e^{-V},    
\end{equation}
 and the convergence is exponentially fast \cite{bolley2012convergence}. 
 The function $V$ is called the  {\it potential function} of $q$.

The evolution of $\rho_t$ by FPE of the diffusion process
can be interpreted as a continuous-time gradient flow under the $\W$-metric in the probability space $\calP_2$. 
The JKO scheme \cite{jordan1998variational} computes a Wasserstein proximal GD which is a time discretization of the gradient flow. 
Specifically, define $G: \calP_2^r \to \R$ as the KL divergence w.r.t. $q$, i.e.,
\begin{equation}\label{eq:def-KL-G}
  \begin{split}
G(\rho) 
&    = {\rm KL} (\rho || q)
    = \calH (\rho) + \calE (\rho),  \\
\calH(\rho)
& =   \int \rho \log \rho,
\quad 
\calE (\rho) = c + \int V \rho,
\end{split}  
\end{equation}
where $c$ is a constant. 
More general $G$ can be considered, see Section \ref{subsec:general-conditions}, and in this work we mainly focus on the case where $G$ is the KL divergence as being considered in \cite{jordan1998variational}.

Under certain regularity condition of $V$, the JKO scheme computes a sequence of distributions $\rho_n$, $n=0,1,...$, starting from $\rho_0\in \calP_2$. 
For a fixed  step  size $\gamma > 0$, and the scheme at the $n$-th step can be written as
\begin{align}\label{eq:JKO-obj-1}
    \rho_{n+1} = \text{arg} \min_{\rho\in \calP_2 }  
    G(\rho) + \fc{1}{2 \gamma} \W^2(\rho_n, \rho) .
\end{align}
The scheme computes the $\W$-proximal Gradient Descent (GD) of $G$ with step size $\gamma$, and can be written as
\begin{align}
    \rho_{n+1} = 
    \text{Prox}_{ \gamma  G}( \rho_n). 
\end{align}

The original JKO paper \cite{jordan1998variational} proved the convergence of the discrete-time solution $\{ \rho_n \}$ (after interpolation over time) to the continuous-time solution $\rho_t$ of the FPE \eqref{eq:FPE} when step size $\gamma \to 0+$. 
In the context of flow-based generative models by neural networks, the discrete-time JKO scheme with finite $\gamma$ was adopted and implemented as a flow network in \cite{xu2022jko}.
Our analysis in this work will prove the exponential convergence of $\rho_n$ to $q$ by the JKO scheme (including learning error), echoing the exponential convergence of the continuous-time dynamic (the FPE). 
This result leads to the guarantee of generating data distributions up to (TV) error $O(\varepsilon)$ in $O(\log (1/\varepsilon))$ JKO steps. 
We will summarize the flow model and introduce theoretical assumptions in Section \ref{sec:setup}.

\section{Setup of JKO flow model and assumptions}\label{sec:setup}

In this section, we summarize the mathematical setup for the JKO flow model and introduce the necessary theoretical assumptions for our analysis.
The guarantee of generating the data distribution will be derived in Section \ref{sec:theory-reverse}
based on the exponential convergence of the $\W$-proximal GD (JKO scheme) in Section \ref{sec:theory-foward}.

\subsection{Forward and reverse processes of JKO flow model}

As has been introduced in Section \ref{subsec:intro-flow},
the flow model implements an ODE model (transport equation), where both the forward process and the reverse process are computed by an invertible Residual Network \cite{iResnet} or a neural-ODE network \cite{chen2018neural,grathwohl2018ffjord}.
The forward process consists of $N$ steps, where each step is computed by a Residual Block --- in the neural-ODE model, this is the neural ODE integration on a sub-time-interval $[t_{n}, t_{n+1}]$, and we also call it a Residual Block. 
The backward process consists of the $N$ steps of the same flow network ``backward in time,'' where each step computes the inverse map of the Residual Block, 
and in the neural-ODE model, this is via integrating the ODE in reverse time. 

The forward and reverse processes (without inversion error)
 are induced by a sequence of transport maps, $T_n$, $n = 1, \ldots, {N}$, which we will define more formally later.
The two processes are  summarized in \eqref{eq:fwd-bwd-process},
\begin{equation}\label{eq:fwd-bwd-process} 
\begin{split}
\text{(forward)} \quad 
& 
p = p_0  
\xrightarrow{T_1}{p_1}  
\xrightarrow{T_2}{} 
\cdots 
\xrightarrow{T_{N}}{p_N} 
\approx q,  \\ 
\text{(reverse)} \quad 
&  p \approx
q_0  \xleftarrow{T_1^{-1}}{ q_1}
\xleftarrow{T_2^{-1}}{ }
\cdots 
\xleftarrow{T_{N}^{-1}}{ q_N}
= q. 
\end{split}
\end{equation}
where $p$ is the density of data distribution (when exists, otherwise a smoothified density by a short time diffusion), and $q$ is the equilibrium density, typically chosen as Gaussian.
Inversion error in the reverse process is considered in Section \ref{subsec:rev-process-inv-err}.

\paragraph{Forward process.}
In the forward process, the algorithm learns a sequence of $T_n$, which transports from data distribution $P$ to the equilibrium distribution $Q$, which is typically the normal distribution, $\calN(0,I)$. 
We denote by $q$ the density of $Q$, and $p$ the density of the data distribution $P$ when there is one.

Following the neural-ODE framework used in \cite{xu2022jko}, each step computes a transport map $T_{n+1}: \R^d \to \R^d$ which is the solution map of the ODE from time $t_n$ to $t_{n+1}$, i.e., 
\begin{equation}\label{eq:def-ode-tn}
T_{n+1}( x_n ) = x(t_{n+1}),
\end{equation}
where $x(t)$ solves $\dot x(t) = \hat{v}( x(t), t )$ on $[t_{n},t_{n+1}]$, $x(t_{n}) = x_n$,
and $\hat v(x,t)$ is the velocity field on $\R^d$ parametrized by the $n$-th Residual Block. 
Equivalently, we have
\begin{equation}\label{eq:def-Tn-2}
    T_{n+1}(x_n) = x_n + \int_{t_n}^{t_{n+1}} \hat v( x(t), t) dt, \quad x(t_n) = x_n.
\end{equation}
In the implementation of the JKO scheme in a flow network, the learning of the $N$ Residual Blocks is conducted progressively for $n=1,\cdots, N$ by minimizing a training objective per step \cite{xu2022jko}. We emphasize that, 
unlike other normalizing flow models, which are trained end-to-end, 
the training procedure here is done step-wise and progressively over the $N$ Residual Blocks. 

Once $T_{n+1}$ is learned, it pushes from $p_n$ to $p_{n+1}$, i.e.,
\begin{equation}\label{eq:pn-pn+1}
    p_{n+1} = (T_{n+1})_\# p_n.
\end{equation}
In our problem, we want the distributions in the intermediate steps to have a density.
To guarantee that this is the case for $p_{n+1}$,
we technically need $T_{n+1}$ to be {\it non-degenerate}. Intuitively, a non-degenerate map cannot collapse a set of finite (Lebesgue) measures into a set of measure zero.
\begin{definition}[Non-degenerate map]\label{eq:def-ND}
Denote by Leb the Lebesgue measure.
$T: \R^d \to \R^d$ is non-degenerate if for any set $A \subset \R^d$ s.t. $Leb (A ) =0$,
then $Leb(T^{-1} (A) ) = 0$.
\end{definition}
If a transport map is non-degenerate, then it pushes forward a density to a distribution that also has density, as shown in the following lemma proved in Appendix \ref{app:proofs-sec-3}.
\begin{lemma}\label{lemma:NG-density}
Suppose $T: \R^d \to \R^d$ is non-degenerate, $P \ll Leb$, then $T_\# P \ll Leb$.
\end{lemma}

Assuming  $T_n$ are all non-degenerate, then the sequence of $p_n$ produced by \eqref{eq:pn-pn+1} all have densities,
starting from $p_0 = p$ the data density.
When data distribution has no density,
we will introduce an initial short-time diffusion that mollifies the data distribution into $\rho_\delta$ which we set to be $p_0$ (see more in Section \ref{subsubsec:P-P2-short-diffusion}). The learning aims that after $N$ steps, the final $p_N$ is close to the equilibrium density $q$.

\paragraph{Reverse process (without inversion error).}
The reverse process computes the inverse of the $N$-steps transport by inverting each $T_n$ in the forward process. We first assume that $T_n$ can be exactly inverted in computation, which allows for a simplified analysis. 
In practice, $T_n^{-1}$ can be implemented by fixed-point iteration \cite{iResnet} or reverse-time ODE integration \cite{grathwohl2018ffjord}. The case when the inverse cannot be exactly computed is discussed in Section \ref{subsec:rev-process-inv-err}, where we need additional assumptions on the closeness of the computed inverse to the true inverse of $T_n$ for our analysis.

The reverse process outputs generated samples, which are aimed to be close in distribution to the data samples, by drawing samples from $q$ and pushing them through the reverse $N$ steps. 
In terms of the sequence of probability densities generated by the process, the reverse process computes
\begin{equation}
    q_{n} = (T_{n+1}^{-1})_\# q_{n+1},
\end{equation}
starting from $q_N = q$ and the output density is $q_0$.
Theoretically, the data processing inequality for the KL divergence applied to invertible transforms (Lemma \ref{lemma:sym-KL}) 
guarantees that if $p_N$ is close to $q_N = q$, then $q_0$ is close to $p_0$, which is the data density (possibly after short-time smoothing). 
This allows us to prove the guarantee of $q_0 \approx p_0$ once we can prove that of $p_N \approx q$, the latter following the convergence of the $\W$-proximal GD up to a hopefully small learning error to be detailed below.

\subsection{Learning assumptions of the forward process}\label{subsec:fwd-process-learning-assumption}

We consider the sequence of densities $p_n$ in the forward process in \eqref{eq:fwd-bwd-process}.
Recall from Section \ref{sub:pre-sde-jko} that for fixed step-size $\gamma > 0$, the $n$-th step classical JKO scheme finds $p_{n+1}$ by minimizing 
\begin{align}\label{eq:JKO-obj-1b}
 \min_{\rho \in \calP_2 } 
 F_{n+1} ( \rho) := 
 G(\rho) + \fc{1}{2 \gamma} \W^2( p_n, \rho),
\end{align}
where $G(\rho) = {\rm KL}( \rho || q)$.
The learning in the $n$-th Residual Block in a JKO flow network computes the minimization via parameterizing the transport $T_{n+1}$. 
Here we briefly review the rational of solving \eqref{eq:JKO-obj-1b} by solving for $T_{n+1}$,
which leads to our assumption of the learned forward process.

\paragraph{JKO step by learning the transport.}
In the right hand side of \eqref{eq:JKO-obj-1b}, when both $p_n$ and $\rho$ are in $\calP_2^r$, 
the Brenier Theorem (Theorem \ref{thm:brenier})
implies the existence of a unique OT map $T$ from $p_n$ to $\rho$.
Consider the following minimization over the transport $T$,
\begin{align}\label{eq:JKO-obj-1c}
   \min_{T : \R^d \to \R^d} 
    G( T_\# p_n  ) 
    + \fc{1}{2 \gamma } \E_{x\sim p_n} \ve{x-T(x)}^2.
\end{align}
The following lemma, proved in Appendix \ref{app:proofs-sec-3},
shows that the minimizer $T$ makes $T_\# p_n \in \calP_2^r$:
\begin{lemma}\label{lemma:pn+1inP2r-exact-T-min}
Suppose $p_n \in \calP_2^r$ makes $G(p_n) < \infty$, 
and $T$ is a minimizer of \eqref{eq:JKO-obj-1c}, 
then $T_\# p_n \in \calP_2^r$.
\end{lemma}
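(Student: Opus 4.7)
\textbf{Proof proposal for Lemma \ref{lemma:pn+1inP2r-exact-T-min}.}
The plan is to argue by contradiction: I exhibit a competitor that makes the objective \eqref{eq:JKO-obj-1c} finite, and then show that any $T$ with $T_\# p_n \notin \calP_2^r$ forces the objective to be $+\infty$, contradicting minimality. The key fact I will use is that $G(\rho) = \KL(\rho \,\|\, q)$ with $q \propto e^{-V}$, and that under the standing setting $q$ is a density positive on all of $\R^d$, so that $\rho \ll q$ is equivalent to $\rho$ being absolutely continuous w.r.t.\ Lebesgue measure.

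First I would show the infimum in \eqref{eq:JKO-obj-1c} is finite. Plugging in $T = {\rm I_d}$ gives $G((\mathrm{Id})_\# p_n) + \tfrac{1}{2\gamma} \E_{x\sim p_n}\|x-x\|^2 = G(p_n)$, which is finite by hypothesis. Hence any minimizer $T$ satisfies $G(T_\# p_n) + \tfrac{1}{2\gamma} \E_{x\sim p_n}\|x-T(x)\|^2 \le G(p_n) < \infty$, so in particular both summands are finite.

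Next I would unpack what finiteness of each summand gives. Finiteness of the quadratic term means $T\in L^2(p_n)$, hence by the triangle inequality in $\calP_2$,
\[
\sqrt{M_2(T_\# p_n)} \le \sqrt{M_2(p_n)} + \|x - T(x)\|_{p_n} < \infty,
\]
so $T_\# p_n \in \calP_2$. Finiteness of $G(T_\# p_n) = \KL(T_\# p_n \,\|\, q)$ requires $T_\# p_n \ll q$. Since $q$ has a strictly positive Lebesgue density, $q$ and Lebesgue measure are mutually absolutely continuous on $\R^d$, and therefore $T_\# p_n \ll q$ implies $T_\# p_n \ll \mathrm{Leb}$, i.e.\ $T_\# p_n$ has a density. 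Combined with the moment bound this yields $T_\# p_n \in \calP_2^r$, which is the conclusion.

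The main (mild) obstacle is making sure the argument does not tacitly use anything about $T$ being smooth or injective: the proof as sketched uses only the measure-theoretic facts that pushforwards preserve absolute continuity with respect to any reference measure (with respect to which $T_\# p_n \ll q$ is inherited), together with the positivity of $q$. No Brenier or OT structure is needed at this step — the statement holds for any measurable minimizer $T$, not just gradients of convex potentials. I would also briefly note that the standing assumption $G(p_n)<\infty$ is what supplies the competitor and rules out the degenerate case in which \eqref{eq:JKO-obj-1c} has value $+\infty$ identically.
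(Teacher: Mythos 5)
Your proposal is correct and follows essentially the same route as the paper's proof: use $T={\rm I_d}$ as a competitor to show the optimal value is at most $G(p_n)<\infty$, deduce absolute continuity of $T_\# p_n$ from finiteness of the $G$-term (the paper argues via the entropy part $\calH$ diverging, you via $\KL(\cdot\|q)<\infty \Rightarrow T_\# p_n \ll q \ll \mathrm{Leb}$ — note only this one direction of absolute continuity is needed, so the positivity of $q$ is not actually required), and deduce $M_2(T_\# p_n)<\infty$ from finiteness of the transport-cost term together with $M_2(p_n)<\infty$. The only differences are cosmetic (triangle inequality in $L^2(p_n)$ versus the elementary bound $\|a+b\|^2\le 2\|a\|^2+2\|b\|^2$).
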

Thus, in \eqref{eq:JKO-obj-1c} it is equivalent to minimize over $T$ that renders $T_\# p_n \in \calP_2^r$ and this means that 
the minimizer $T$ is the OT map. 
One can also verify that \eqref{eq:JKO-obj-1c} is equivalent to \eqref{eq:JKO-obj-1b} in the sense that 
a minimizer $T^*$ of \eqref{eq:JKO-obj-1c} makes $(T^*)_\# p_n$ a minimizer of \eqref{eq:JKO-obj-1b},
and for a minimizer $\rho^*$ of \eqref{eq:JKO-obj-1b} the OT map from $p_n$ to $\rho^*$ is a minimizer of \eqref{eq:JKO-obj-1c}
\cite[Lemma A.1]{xu2022jko}.

\paragraph{Learning error in JKO flow network.}
In the $n$-step of the JKO flow network,  
the transport $T$ in \eqref{eq:JKO-obj-1c}
  is parameterized by a Residual block, and the learning cannot find the $T$ that exactly minimizes  \eqref{eq:JKO-obj-1c}
  (and equivalently \eqref{eq:JKO-obj-1b}) for three reasons: 

\begin{itemize}
\item[(i)] Approximation error: 
The minimization of \eqref{eq:JKO-obj-1c} is over $T$ constrained inside some neural network family $\calT_\Theta$. 
When the function family $\calT_\Theta$ is large enough to express the desired optimal transport from $p_n$ to ${\rm Prox}_{\gamma G} (p_n)$, the solution can approximate the exact minimizer of \eqref{eq:JKO-obj-1b}, but this usually cannot be guaranteed.

\item[(ii)] Finite-sample effect:
The training is computed on empirical data samples, while in this analysis, we focus on the minimization of population loss.

\item[(iii)] Imperfect optimization:
The learning of neural networks is a non-convex optimization typically implemented by Stochastic Gradient Descent (SGD) over mini-batches and there is no guarantee of achieving a minimizer of the empirical loss.
\end{itemize}

As a result,  the learned transport $T_{n+1}$ finds a 
$p_{n+1} = (T_{n+1})_\# p_n$ 
that at most approximately minimizes  \eqref{eq:JKO-obj-1b}. 
While the learned $T_{n+1}$ is usually not the exact minimizer, 
we assume that it is regular enough such that $p_{n+1}$ is still in $\calP_2^r$.
This would hold if $T_{n+1}$ is non-degenerate and also in $L^2(p_n)$ by the following lemma proved in Appendix \ref{app:proofs-sec-3}.

\begin{lemma}\label{lemma:Tp-in-P2r-new}
Let $p \in \calP_2^r$,
then $\{T: \R^d \to \R^d, \, T \in L^2(p) \} = \{T: \R^d \to \R^d, \, T_\# p  \in \calP_2 \} $.
As a result, if $T \in L^2(p)$ and is non-degenerate, then $T_\# p \in \calP_2^r$.
\end{lemma}

When $p_n$ and $p_{n+1}$ are both in $\calP_2^r$, we have a unique invertible OT map from $p_n$ to $p_{n+1}$ by Brenier Theorem, and its (a.e.) inverse is the OT map from $p_{n+1}$ to $p_n$. Specifically, we define
\begin{equation}\label{eq:def-Tn-n+1}
\begin{split}
    & T_n^{n+1} \text{ is the OT map from $p_n$ to $p_{n+1}$, ~~~ $p_n$-a.e.}, \\
    & T_{n+1}^{n} \text{ is the OT map from $p_{n+1}$ to $p_{n}$, ~~~ $p_{n+1}$-a.e.},
\end{split}
\end{equation}
and we have $T_{n+1}^{n} \circ T_n^{n+1} = {\rm I_d}$ $p_n$-a.e., 
$T_{n}^{n+1} \circ T_{n+1}^{n} = {\rm I_d}$ $p_{n+1}$-a.e.
Note that $T_n^{n+1}$ differs from the learned map $T_{n+1}$, see Remark \ref{rk:Tn-vs-OT}.

\paragraph{Assumption on approximate first-order condition.}
For our analysis, we theoretically characterize the error in learning $T_{n+1}$ by quantifying the error in the first-order condition.
Specifically, the $\W$-gradient (strictly speaking, a sub-differential) of $F_{n+1}$ at $\rho$ can be identified as 
\begin{equation}\label{eq:expression-W2-grad-Fn+1}
\nabla_{\W} F_{n+1}( \rho )
= \nabla_{\W} G( \rho )- \frac{ T_{\rho}^n - {\rm I_d} }{\gamma},  
\quad \rho\text{-a.e.}
\end{equation}
where $T_{\rho}^n$ is the OT map from $\rho$ to $p_n$. 
Here we use  $\nabla_{\W} \phi$ to denote the sub-differential $\partial_{\W} \phi$ assuming unique existence to simplify exhibition. 
The formal statement in terms of subdifferential is provided in Lemma \ref{lemma:ot_F} 
(which follows the argument of \cite[Lemma 10.1.2]{ambrosio2005gradient}) 
for more general $G$ (which includes KL divergence as a special case).
For KL divergence $G$, if $V$ is differentiable, the sub-differential $\partial_{\W} G$ is reduced to the unique $\W$-gradient written as
\begin{equation}
\nabla_{\W} G(\rho) = \nabla V + \nabla \log \rho,    
\end{equation}
when $ \rho \in \calP_2^r$ has a well-defined score function.

If $\rho$ is the exact minimizer of \eqref{eq:JKO-obj-1b}, we will have $\nabla_{\W} F_{n+1}( \rho ) = 0$
(and for sub-differential the condition is $0 \in \partial_{\W} F_{n+1}( \rho )$).
At $p_{n+1}$ which is pushed-forward by the learned $T_{n+1}$, we denote the $\W$-gradient of $F_{n+1}$ by the following (recall the definition of $T_{n+1}^n$ as in \eqref{eq:def-Tn-n+1})
\begin{align}
\xi_{n+1}
& := \nabla_{\W} F_{n+1}( p_{n+1} ) \nonumber \\
& = \nabla_{\W} G( p_{n+1} )- \frac{ T_{n+1}^n - {\rm I_d}}{\gamma},
\quad p_{n+1}\text{-a.e.}
\label{eq:def-xi-n+1}
\end{align}
and it is interpreted as sub-differential when needed.
Making an analogy to the (approximate) first-order condition in vector-space optimization, 
we assume that $p_{n+1}$ is close to the exact minimizer such that the (sub-)gradient $\xi_{n+1}$ is small.
In practice, the SGD algorithm to minimize the training objective \eqref{eq:JKO-obj-1c}
(assuming $\calT_\Theta$ is expressive enough to approximate the exact minimizer $T^*$) 
would stop making progress
when the $\W$-gradient vector field $\xi_{n+1}$ evaluated on data samples collectively give a small magnitude.
We characterize this by a small $L^2(p_{n+1})$ norm of the (sub-)gradient $ \xi_{n+1} $ in our theoretical assumption.
The assumptions on the learned transport $T_{n+1}$ are summarized as follows:

\begin{assumption}[Approximate $n$-th step solution]\label{assump:1st-order-condition-error}
The learned transport $T_{n+1}$ is non-degenerate and in $L^2(p_n)$;
it is invertible on $\R^d$ and  $T_{n+1}^{-1}$ is also non-degenerate.
In addition, for some $\varepsilon>0$, 
$\exists \xi_{n+1} \in \partial_{\W} F_{n+1}( p_{n+1} )$ s.t.
\begin{equation}\label{eq:A1-small-|xi_n|}
 \|\xi_{n+1} \|_{p_{n+1}} \le \varepsilon.  
\end{equation}
\end{assumption}

We experimentally verify the smallness of $ \|\xi_{n+1} \|_{p_{n+1}} $ in neural network training of one JKO step. Figure \ref{fig:wass-2d-exp} shows the decrease of the squared $L^2$ norm $ \|\xi_{n+1} \|_{p_{n+1}}^2 $ over training iterations. 
The Wasserstein gradient $ \xi_{n+1} $ is plotted as a vector field in Figure \ref{fig:vec_field-exp}, whose magnitude gradually decreases over training.
See Appendix  \ref{app:exp} for more details.

The error magnitude $\varepsilon$ can be viewed as an algorithmic parameter that  reflects the  accuracy of first-order methods,
and similar assumptions have been made in the analysis of stochastic (noisy) gradient descent in vector space, see, e.g.,  \cite{NemirovskiYudin83, nemirovski2009robust}. 
We emphasize that theoretically, $\varepsilon$ does not need to be small but will enter the final error bound.
For  $T_{n+1}$ satisfying Assumption \ref{assump:1st-order-condition-error},
from $p_n \in \calP_2^r$, $p_{n+1} = (T_{n+1})_\# p_n$  is also in $\calP_2^r$ 
by Lemma \ref{lemma:Tp-in-P2r-new}.
Then the subdifferential $\partial_{\W} F_{n+1}$ can be defined at $p_{n+1}$ and characterized by Lemma \ref{lemma:ot_F}.

\begin{remark}[Assumptions on $T_n$]
The $L^2$ integrability condition of $T_n$ is natural and 
together with non-degeneracy ensures that the next $p_{n}$ is in $\calP_2^r$.
When $T_n^{-1}$ is also non-degenerate, 
the $p_n$'s in the forward process
 and $q_n$'s in the reverse process in \eqref{eq:fwd-bwd-process} all have densities, and this allows us to apply the data processing inequality in both directions (Lemma \ref{lemma:sym-KL}).
 When there is an inversion error in the reverse process,
 we will further impose Lipschitzness of $T_{n}^{-1}$ on $\R^d$ 
 and the Lipschitz constant will theoretically enter the error bound,
see more in Section \ref{subsec:theory-with-inv}.
\end{remark}

\begin{remark}[$T_{n+1}$ and $T_n^{n+1}$]\label{rk:Tn-vs-OT} 
Recall that $T_{n+1}$ is the learned transport map and $T_{n}^{n+1}$ is the OT map.
In our setting (of imperfect minimization in the $n$-th step), 
both $T_{n+1}$ and $T_{n}^{n+1}$ push $p_n$ to $p_{n+1}$ but they are not necessarily the same.
The notion of $T_{n}^{n+1}$ is introduced only for theoretical purposes (the existence and invertibility are by Brenier Theorem), and our theory do not need $T_{n+1}$ to equal $T_{n}^{n+1}$.
On the other hand, one would expect $T_{n+1}$ to approximate $T_{n}^{n+1}$ when the JKO-step optimization \eqref{eq:JKO-obj-1b} is approximately solved, leading to a small $\varepsilon$ in \eqref{eq:A1-small-|xi_n|}.
\end{remark}

\subsection{Reverse process with inversion error}\label{subsec:rev-process-inv-err}

Considering potential inversion error in the reverse process, we denote the sequence of transports as $S_n$ and 
the transported densities as  $\tilde q_n$, that is, 
\begin{equation}
    \tilde q_{n} = (S_{n+1})_\# \tilde  q_{n+1},
\end{equation}
from $\tilde q_N =  q_N= q$. 
The reverse process with and without inversion error is summarized as
\begin{equation}\label{eq:bwd-process-error}
\begin{aligned}
\text{(exact reverse)} 
& &  q_0  \xleftarrow{T_1^{-1}}{ q_1}
\xleftarrow{T_2^{-1}}{ }
\cdots 
\xleftarrow{T_{N}^{-1}}{ q_N} = q, \\
\text{(computed reverse)} 
& &  \tilde q_0  \xleftarrow{S_1}{\tilde q_1}
\xleftarrow{S_2}{ }
\cdots 
\xleftarrow{S_{N}}{ \tilde q_N} = q.
\end{aligned} 
\end{equation}
The computed transport $S_n$ is not the same as $T_n^{-1}$ but the algorithm aims to make the inversion error small. For our theoretical analysis, we make the following assumption on the error. 

\begin{assumption}[Inversion error]
\label{assump:inv-error}
For $n=N, \cdots, 1$, the computed reverse transport $S_n$ 
is non-degenerate, in $ L^2( \tilde q_n)$, and satisfies that
\begin{equation}\label{eq:assump-inv-error}
\| T_n \circ S_n- {\rm I_d} \|_{\tilde q_{n}} \le \varepsilon_{\rm inv}.
\end{equation}
\end{assumption}

In practice, the quantity $\| T_n \circ S_n- {\rm I_d} \|_{\tilde q_{n}}^2 $ can be empirically estimated by sample average, namely the mean-squared error
\begin{align*}
{\rm MSE}_{\rm inv} 
& = \frac{1}{n_{\rm inv}} \sum_{i} \| T_n \circ S_n( x_i ) - x_i \|^2, \\
 x_i & =  S_{n+1} \circ \cdots \circ S_{N-1}  (z_i), \quad z_i \sim q, 
\end{align*}
computed from $n_{\rm inv}$ test samples. 
With sufficiently large $n_{\rm inv}$, one can use ${\rm MSE}_{\rm inv} $ to monitor the inversion error of the reverse process and enhance numerical accuracy when needed. It was empirically shown in \cite{xu2022jko} that the inversion error computed on testing samples (though all the $N$ blocks) can be made small towards the floating-point precision in the neural-ODE model. 

The inversion error objective \eqref{eq:assump-inv-error} resembles the ``cycle consistency'' loss 
in Cycle-GAN \cite{zhu2017unpaired}.
Our theory in Section \ref{subsec:theory-with-inv} suggests that 
keeping the inversion error small is crucial for the success of generating a close-to-data distribution $\tilde q_0$ in the reverse process.
Thus, there may be benefits by introducing the objective \eqref{eq:assump-inv-error} as a regularization in the training of flow-based models, similar to the cycle consistency mechanisms in the GAN literature.

\section{Convergence of forward process}\label{sec:theory-foward}

The current paper mainly concerns the application to flow generative networks where $G$ is the KL divergence. 
In this section, we prove the exponentially fast convergence of the forward process, which applies to potentially a more general class of $G$ as long as 
the subdifferential calculus in $\calP_2$ can be conducted (see Section 10.1 of \cite{ambrosio2005gradient})
and $G$ is strongly convex along generalized geodesics (a.g.g., see Definition \ref{def:convex-agg}), 
which may be of independent interest.

We will revisit the KL divergence $G$ as a special case and prove the generation guarantee of the reverse process in Section \ref{sec:theory-reverse}. 
All proofs and technical lemmas are provided in Appendix \ref{app:proof-sec-4}.

\subsection{Conditions on $G$ and $V$}\label{subsec:general-conditions}

We introduce the more general condition of $G$ needed by the forward process convergence. 
\begin{assumption}[General condition of $G$]\label{assump:genera-G}
    $G:\calP_2 \to (-\infty, +\infty]$ is lower semi-continuous,
    ${\rm Dom} ( G) \subset \calP_2^r$;
    $G$ is $\lambda$-convex a.g.g. in $\calP_2$.
\end{assumption}

The first part of Assumption \ref{assump:genera-G} ensures that the strong subdifferential $\partial_{\W} G(\rho)$ can be defined, see Definition \ref{def:strong-subdiff}.
The strong convexity of $G$ is used to prove the exponential convergence of the (approximated) $\W$-proximimal Gradient Descent in the forward process. 

Next, we show that the KL divergence $G$ satisfies the general condition under certain general conditions of the potential function $V$ plus its strong convexity. We also introduce an upper bound of $\lambda$ due to a rescaling argument.

\begin{figure*}[t]
\begin{minipage}{0.35\textwidth}
\centering
\includegraphics[height=.4\linewidth]{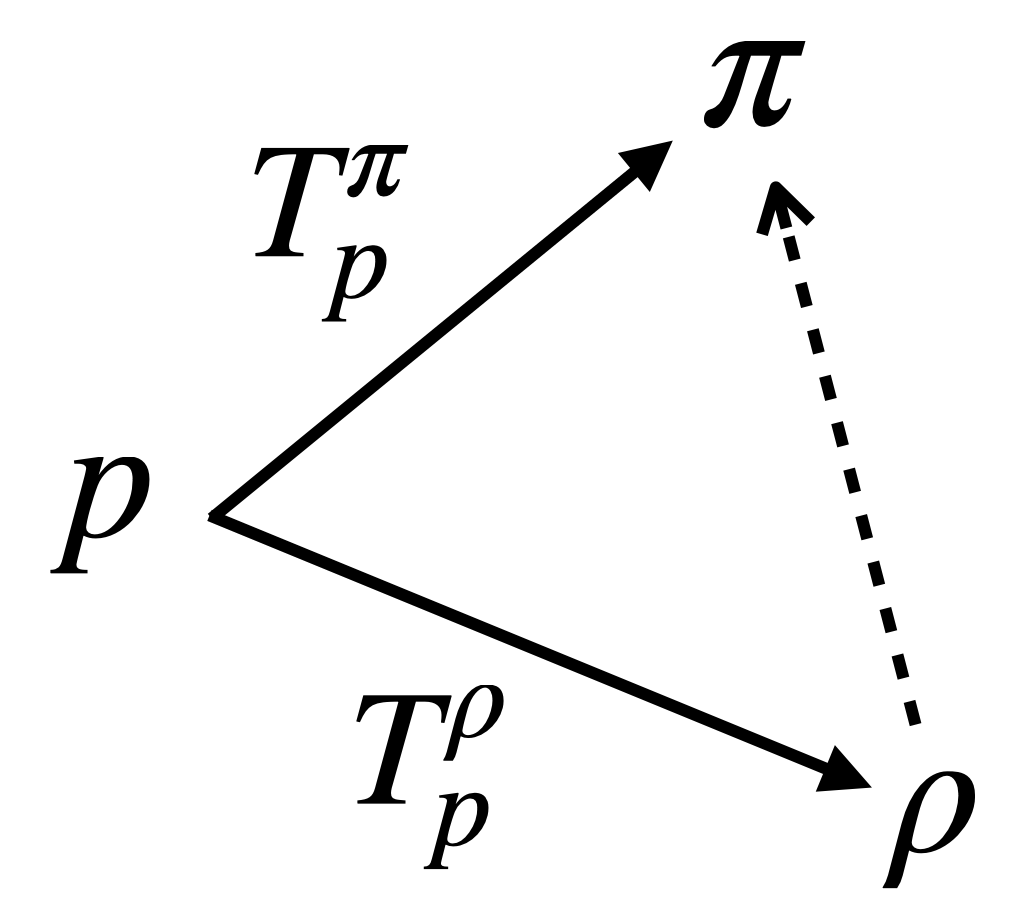} 
\end{minipage}
\hspace{-10pt}
\begin{minipage}{0.6\textwidth}
\renewcommand{\arraystretch}{1.5}
\begin{tabular}{ c }
   $p$, $\pi$, $\rho \in \calP_2^r$, $G$ is $\lambda$-convex a.g.g.,   \\
$ G(\pi) - G(\rho) 
    \geq  \an{ \nabla_{\W} G(\rho) \circ {T_p^\rho}, T_p^\pi - T_p^\rho}_p + \frac{\lambda}{2}\W^2(\pi, \rho) $  \vspace{5pt} \\
    \hline
     $\pi$, $\rho \in \R^d$, $g$ is $\lambda$-convex, \\
    $g(\pi) - g(\rho) 
    \geq  \langle \nabla g(\rho), \pi - \rho \rangle + \frac{\lambda}{2} \| \pi -  \rho \|^2 $
\end{tabular}
\end{minipage}
\caption{
The monotonicity of a.g.g.-convex $G$ in $\calP_2$ proved in Lemma \ref{lemma:mono-G}, as an analog to strong convexity in vector space. We remark that in the usual vector space, the convexity definition does not involve a third vector, since the inner product is uniform; while in probability space, inner product is defined at tangent space associated with $p$.
The dotted line indicates the general geodesic between $\rho$ and $\pi$, see the definitions in Section \ref{subsec:prelim-cal-P2}.
}
\label{fig:mono}
\end{figure*}

\paragraph{KL divergence and $f$-divergence $G$.}

Recall that the KL divergence $G(\rho) = \calH(\rho) + \calE(\rho)$ as defined in \eqref{eq:def-KL-G}, where $\calE(\rho) = \int V \rho$ involves the potential function $V$ of the equilibrium density. 
We introduce the following assumption on $V$:

\begin{assumption}[Condition of $V$ the potential of $q$]\label{assump:V-45}
The potential function $V: \R^d \to (-\infty, \infty]$ is proper, lower semi-continuous, and $V^-:= \max\{-V, 0 \}$ is bounded;
$V(x)$ is $\lambda$-strongly convex on $\R^d$,
and $q \propto e^{-V}$ is in $\calP_2^r$.
\end{assumption}

The first part of Assumption \ref{assump:V-45} is for the sub-differential calculus of $\calE(\rho) = \int V \rho$  in $\calP_2$.
The $\lambda$-strong convexity is used to make $\calE(\rho)$ (and subsequently $G(\rho)$) $\lambda$-convex a.g.g.
Under such condition of $V$, the KL divergence $G(\rho)$ satisfies Assumption \ref{assump:genera-G}, which is verified in Lemma \ref{lemma:KL-satisfies-assump}. Thus our result applies to the KL divergence as being used in the JKO flow model.
For the important case when $q$ is standard normal, $V(x) = \|x \|^2/2$, and $\lambda = 1$.

More generally, consider the $f$-divergence $G(\rho) =D_f( \rho ||q):= \int f(\rho /q) q$, $f: (0, \infty) \to \R$ being convex, lower-semicontinuous and $f(1)=0$.
KL divergence corresponds to a special case where $f(x) = x\log x$.
Under Assumption \ref{assump:V-45}, we have $q \propto e^{-V}$ and is log-concave, and then $G(\rho)$ is convex a.g.g. in $\calP_2^r$ (under additional technical conditions on $f$) \cite[Theorem 9.4.12]{ambrosio2005gradient}. 
We think it is possible to establish the $\lambda$-convexity a.g.g. of $f$-divergence $G$ for a certain class of $f$, and details are postponed here.

\paragraph{Positive and bounded $\lambda$.}
Note that for strongly convex $V$ we can use a scaling argument to make $\lambda$ bounded to be $O(1)$ without loss of generality. Specifically, for $V$ that is $\lambda$-convex on $\R^d$,  the function $x \mapsto V( ax )$ for $a>0$ is ($a^2\lambda$)-convex. This means that for $q$ that has a strongly convex $V$ as the potential function, one can rescale samples from $q$ to make $V$ strongly convex with $\lambda \le 1$. 
In the case where $G$ is KL divergence, the $\lambda$-convexity of $G$ has the same $\lambda$ as that of $V$.
Thus, for the general $G$ we assume its $\lambda$ is also bounded by 1. 

\begin{assumption}[$ \lambda$ bounded]\label{assump:lambda<=1}
In Assumptions \ref{assump:genera-G} and \ref{assump:V-45}, $ 0 < \lambda \le 1$.
\end{assumption}

Our technique can potentially extend to analyze the $\lambda = 0 $ case, where an algebraic $O(1/n)$ convergence rate is expected instead of the exponential rate proved in Theorem \ref{thm:N-step-forward-no-inv-error}. 
For the application to flow-based generative model, one would need the equilibrium density $q \propto e^{-V}$ convenient to sample from, and thus the normal density (corresponding to $V(x) = \| x\|^2/2$) is the most common choice and the other choices usually render $\lambda > 0$ (to enable fast sampling of the starting distribution). 
We thus leave the $\lambda =0$ case to future work.

\subsection{Evolution Variational Inequality and convergence of the forward process}\label{subsec:convergence-forward}

The a.g.g. $\lambda$-convexity of $G$ leads to the following lemma, which is important for our analysis.
All proofs in this section are provided in Appendix \ref{app:proof-sec-4}.

\begin{lemma}[Monotonicity of $G$]
\label{lemma:mono-G}
 Let $p, \rho \in \calP_2^r$, $\pi \in \calP_2$, 
 and denote by $T_p^\rho$  and $T_p^\pi$
 the OT maps from $p$ to $\rho$ and to $\pi$ respectively. 
 Suppose $G$ satisfies Assumption \ref{assump:genera-G},
 then for any $\eta \in \partial_{\W} G(\rho)$, 
\begin{align*}
    G(\pi) - G(\rho) 
    \geq  \an{\eta \circ {T_p^\rho}, T_p^\pi - T_p^\rho}_p + \frac{\lambda}{2}\W^2(\pi, \rho).
\end{align*}
\end{lemma}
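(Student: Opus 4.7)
The plan is to combine the $\lambda$-convexity along generalized geodesics with the strong subdifferential inequality at $\rho$, evaluating a one-sided difference quotient in the limit $t \to 0^+$ along an appropriately chosen generalized geodesic with base $p$. Concretely, I take
\[
\mu_t := ((1-t) T_p^\rho + t T_p^\pi)_\# p, \quad t \in [0,1],
\]
which interpolates from $\mu_0 = \rho$ to $\mu_1 = \pi$. Definition \ref{def:convex-agg} applied with base $\nu = p$, in its strengthened form \eqref{eq:convex-agg-2}, gives
\[
G(\mu_t) \le (1-t) G(\rho) + t G(\pi) - \fc{\lambda}{2} t(1-t) \calW_p^2(\rho, \pi),
\]
which upon rearrangement and passing to the limit $t \to 0^+$ will produce the $\fc{\lambda}{2} \W^2$-term in the conclusion (after using $\calW_p^2 \ge \W^2$).

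To obtain the matching lower bound on $G(\mu_t) - G(\rho)$ via the subdifferential, I need to recognize $\mu_t$ as a perturbation of $\rho$ of the form $({\rm I_d} + tv)_\# \rho$ for some $v \in L^2(\rho)$. Since $\rho \in \calP_2^r$, Brenier's theorem part (iii) supplies an inverse OT map $T_\rho^p$ satisfying $T_\rho^p \circ T_p^\rho = {\rm I_d}$ $p$-a.e. Setting
\[
v := (T_p^\pi - T_p^\rho) \circ T_\rho^p,
\]
a change of variables gives $\ve{v}_\rho^2 = \ve{T_p^\pi - T_p^\rho}_p^2 = \calW_p^2(\rho, \pi) < \infty$, so $v \in L^2(\rho)$ (extended by zero outside the support of $\rho$ if needed). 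For $p$-a.e.\ $x$, writing $y = T_p^\rho(x)$ one checks
\[
(1-t) T_p^\rho(x) + t T_p^\pi(x) = y + t v(y) = ({\rm I_d} + tv)(y),
\]
which yields the key identity $\mu_t = ({\rm I_d} + tv)_\# \rho$.

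With this representation in hand, Definition \ref{def:strong-subdiff} applied to $\eta \in \partial_\W G(\rho)$ with perturbation $tv$ gives
\[
G(\mu_t) - G(\rho) \ge t \an{\eta, v}_\rho + o(t) \quad \text{as } t \to 0^+.
\]
Substituting this into the rearranged a.g.g.-convexity bound
\[
G(\pi) - G(\rho) \ge \fc{G(\mu_t) - G(\rho)}{t} + \fc{\lambda}{2}(1-t) \calW_p^2(\rho,\pi)
\]
and sending $t \to 0^+$ yields $G(\pi) - G(\rho) \ge \an{\eta, v}_\rho + \fc{\lambda}{2} \calW_p^2(\rho,\pi)$. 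The claim then follows from $\calW_p^2(\rho,\pi) \ge \W^2(\rho,\pi)$ together with the change of variables $y = T_p^\rho(x)$, which rewrites $\an{\eta, v}_\rho = \an{\eta \circ T_p^\rho, T_p^\pi - T_p^\rho}_p$.

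The main obstacle I anticipate is precisely the reparametrization step in the middle: the strong subdifferential is defined with respect to perturbations of the identity on $\R^d$, not of $T_p^\rho$ on $p$, so one must genuinely construct a vector field $v$ on $\R^d$ making $\mu_t = ({\rm I_d} + tv)_\# \rho$ hold. This is exactly where the hypothesis $\rho \in \calP_2^r$ enters critically, via the inverse map $T_\rho^p$ from Brenier. Everything else is bookkeeping: matching the upper and lower bounds on $G(\mu_t) - G(\rho)$ and one final change of variables.
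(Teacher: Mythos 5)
Your proposal is correct and follows essentially the same route as the paper's proof: the same perturbation field $v = T_p^\pi \circ T_\rho^p - {\rm I_d}$ (your $(T_p^\pi - T_p^\rho)\circ T_\rho^p$ is the same map $\rho$-a.e.), the same identification of the generalized geodesic with $({\rm I_d}+tv)_\#\rho$, and the same comparison of the subdifferential lower bound with the a.g.g.-convexity upper bound as $t\to 0^+$. The only cosmetic difference is that you carry $\calW_p^2$ through and weaken to $\W^2$ at the end, whereas the paper invokes the weakened form \eqref{eq:convex-agg-2} directly.
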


The relationship among $p, \rho, \pi$ is illustrated in Figure~\ref{fig:mono}, which also includes an analog to the strong-convex function in Euclidean space.
This lemma extends Lemma 4 in \cite{salim2020wasserstein}
and originally the argument in Section 10.1.1.B of \cite{ambrosio2005gradient}.
We include a proof in Appendix \ref{app:proof-sec-4} for completeness.

Based on the monotonicity lemma and the condition of small strong subdifferential $\xi_{n+1}$ in Assumption \ref{assump:1st-order-condition-error}, 
we are ready to derive the discrete-time {\it Evolution Variational Inequality} (EVI) \cite[Chapter 4]{ambrosio2005gradient}
for the (approximate) JKO scheme.

\begin{lemma}[EVI for approximate JKO step]
\label{lemma:evi}
Given $\pi \in \calP_2$,
suppose $G$ satisfies Assumption \ref{assump:genera-G} with  $\lambda \in (0,1]$,
and $0 < \gamma < 2$.
If $p_0 \in \calP_2^r$, and Assumption \ref{assump:1st-order-condition-error} holds for $n=0, 1, \cdots $, then for all $n$, 
\begin{align}\label{eq:lemma-evi}
   &\pa{1+\frac{ \gamma \lambda }{2}}\W^2(p_{n+1}, \pi ) 
   +  2\gamma \pa{ G(p_{n+1}) - G(\pi)}\notag\\ 
   & ~~~
   \leq 
   \W^2(p_n,\pi) 
   + \frac{2 \gamma }{\lambda} \varepsilon^2.
\end{align}
\end{lemma}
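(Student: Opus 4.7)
The plan is to reduce the one-step EVI to the monotonicity lemma (Lemma \ref{lemma:mono-G}) by evaluating it along the generalized geodesic with base $p_n$, and then to choose a Young's inequality parameter carefully tailored to the constants in the claimed bound. The induction on $n$ is trivial: the Lipschitz invertibility of $T_{n+1}$ in Assumption \ref{assump:1st-order-condition-error} combined with Lemma \ref{lemma:Tp-in-P2r} propagates $p_n \in \calP_2^r$ forward, so it suffices to prove the one-step inequality.

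First, from Assumption \ref{assump:1st-order-condition-error} we have $\xi_{n+1} \in \partial_{\W} F_{n+1}(p_{n+1})$. Because the $\W$-subdifferential of $\rho \mapsto \frac{1}{2\gamma}\W^2(p_n, \rho)$ at $p_{n+1}$ equals $-\frac{T_{n+1}^n - {\rm I_d}}{\gamma}$, the sum rule (as encoded by \eqref{eq:def-xi-n+1}) yields $\eta := \xi_{n+1} + \frac{T_{n+1}^n - {\rm I_d}}{\gamma} \in \partial_{\W} G(p_{n+1})$. I would then apply Lemma \ref{lemma:mono-G} with $p = p_n$, $\rho = p_{n+1}$, and target $\pi$, using this $\eta$. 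Since Brenier's theorem gives $T_{n+1}^n \circ T_n^{n+1} = {\rm I_d}$ $p_n$-a.e., the composition $\eta \circ T_n^{n+1}$ simplifies to $\xi_{n+1}\circ T_n^{n+1} + \gamma^{-1}({\rm I_d} - T_n^{n+1})$.

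Next, I would apply the polarization identity $2\langle a,b\rangle = \|a\|^2 + \|b\|^2 - \|a-b\|^2$ to the $\gamma^{-1}$ term with $a = {\rm I_d} - T_n^{n+1}$ and $b = T_{p_n}^\pi - T_n^{n+1}$. This identifies $\|a\|_{p_n}^2 = \W^2(p_n, p_{n+1})$, $\|b\|_{p_n}^2 = \calW_{p_n}^2(\pi, p_{n+1})$, and $\|a-b\|_{p_n}^2 = \W^2(p_n,\pi)$, yielding the exact identity
\[
\langle {\rm I_d} - T_n^{n+1}, T_{p_n}^\pi - T_n^{n+1}\rangle_{p_n} = \tfrac{1}{2}\W^2(p_n,p_{n+1}) - \tfrac{1}{2}\W^2(p_n,\pi) + \tfrac{1}{2}\calW_{p_n}^2(\pi, p_{n+1}).
\]
The cross-term is then bounded by Young's inequality: for any $\alpha > 0$,
\[
\bigl|\langle \xi_{n+1}\circ T_n^{n+1}, T_{p_n}^\pi - T_n^{n+1}\rangle_{p_n}\bigr| \leq \tfrac{1}{2\alpha}\varepsilon^2 + \tfrac{\alpha}{2}\calW_{p_n}^2(\pi, p_{n+1}),
\]
using $\|\xi_{n+1}\circ T_n^{n+1}\|_{p_n} = \|\xi_{n+1}\|_{p_{n+1}} \leq \varepsilon$ by change of variables.

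The main obstacle, and the decisive calculation, is to pick $\alpha$ so that three things happen simultaneously: (i) the coefficient $(\gamma^{-1} - \alpha)$ in front of the residual $\calW_{p_n}^2(\pi, p_{n+1})$ is nonnegative so that the lower bound $\calW_{p_n}^2 \ge \W^2$ from Definition \ref{def:convex-agg} may be applied; (ii) after adding the $\frac{\lambda}{2}\W^2(\pi, p_{n+1})$ term from the monotonicity lemma, the coefficient on $\W^2(\pi, p_{n+1})$ becomes exactly $\frac{1}{2\gamma}(1 + \tfrac{\gamma\lambda}{2})$; (iii) the coefficient on $\varepsilon^2$ is at most $\frac{1}{\lambda}$ after multiplying the inequality by $2\gamma$. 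Matching (ii) forces $\gamma^{-1} - \alpha + \lambda = \gamma^{-1} + \lambda/2$, i.e.\ $\alpha = \lambda/2$, and this choice automatically gives (iii) as an equality $\gamma/\alpha = 2\gamma/\lambda$, while (i) reduces to $\gamma\lambda \le 2$, which holds under the hypotheses $\gamma < 2$ and $\lambda \in (0,1]$. After substituting $\alpha = \lambda/2$, multiplying through by $2\gamma$, rearranging, and discarding the nonnegative leftover $\W^2(p_n, p_{n+1})$, one recovers \eqref{eq:lemma-evi} exactly.
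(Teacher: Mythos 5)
Your proposal is correct and follows essentially the same route as the paper's proof: the sum rule for the subdifferential (the paper's Lemma \ref{lemma:ot_F}), the monotonicity Lemma \ref{lemma:mono-G} applied with base $p_n$, the identity for $\langle {\rm I_d}-T_n^{n+1}, T_{p_n}^\pi - T_n^{n+1}\rangle_{p_n}$ (the paper expands the square $\|T_n^{n+1}-T_{p_n}^\pi\|_{p_n}^2$ instead of invoking polarization, which is the same algebra), the Young parameter $\alpha=\lambda/2$, the use of $\calW_{p_n}^2\ge\W^2$ under $\gamma\lambda<2$, and the discarding of the nonnegative $\W^2(p_n,p_{n+1})$ term. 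No gaps.
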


The condition of  $\lambda \le 1$ and $\gamma < 2$ can be replaced with other constants, and our analysis will give similar results. 
Specifically, the upper bound 1 of $\lambda$ in Assumption \ref{assump:lambda<=1} is a generic choice and we keep it, then replacing the requirement of $\gamma < 2$ with $\gamma < \gamma_{\rm max}$ for another $\gamma_{\rm max} > 1$ will only affect the constant in the final bound and does not affect the order.
For exhibition simplicity, we give our analysis under $\lambda \le 1$ and $\gamma < 2$ without loss of generality.
We also provide some rationale for upper bounding the step size $\gamma$ motivated by practice:
First, it has been empirically observed that successful computation of the JKO flow model in practice needs the step size not to exceed a certain maximum value, which is an algorithmic parameter \cite{xu2022jko}. Setting the step size too large may lead to difficulty in training the Residual blocks as well as in maintaining small inversion errors.
Meanwhile, from the formulation of the JKO scheme, it can be seen that for large $\gamma$, 
the proximal GD in \eqref{eq:JKO-obj-1} approaches the global minimization of $G(\rho)$, which asks for the flow to transport from the current density to the target density $q$ in one step. 
Though the proximal GD (as a backward Euler scheme) does not impose a step-size constraint, the optimization problem \eqref{eq:JKO-obj-1} is, in principle, easier with a small (but no need to converge to zero) step size.

The EVI directly leads to  the $N$-step convergence of the forward process,
which achieves $O(\varepsilon)$  $\W$-error
and  $O(\varepsilon^2)$ gap from the optimal objective value
in $N \lesssim \log (1/\varepsilon)$ JKO steps.

\begin{theorem}[Convergence of forward process]\label{thm:N-step-forward-no-inv-error}
Suppose $q \in \calP_2$ is the global minimum of $G$, 
and the other assumptions are the same as Lemma \ref{lemma:evi}, then for $n = 1, 2, \cdots$,
\begin{align}\label{eq:thm-exp-rate}
    \W^2(p_n, q)  \leq  \left( 1+ \frac{\gamma \lambda}{2} \right)^{-n} \W^2(p_0, q) + \fc{4\varepsilon^2}{\lambda^2}.
\end{align}
In particular, if
\begin{equation}\label{eq:thm-exp-n-threshold}
n \ge  \frac{8 }{  \gamma \lambda }\left( \log  \W(p_0, q)  + \log ({\lambda}/{\varepsilon})  \right),
\end{equation}
then 
\begin{equation}\label{eq:thm-exp-bound}
\W(p_n, q) \le  \sqrt{5} \frac{\varepsilon}{\lambda},
 \quad
G(p_{n+1}) - G(q) \le  \frac{9}{2 \gamma } \left(\frac{\varepsilon}{\lambda} \right)^2.
\end{equation}
\end{theorem}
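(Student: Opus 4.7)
The plan is to apply the EVI from Lemma \ref{lemma:evi} with the test distribution chosen to be $\pi = q$, exploiting the hypothesis that $q$ globally minimizes $G$. Since this makes $G(p_{n+1}) - G(q) \ge 0$, the EVI \eqref{eq:lemma-evi} reduces to a one-step linear recursion on $a_n := \W^2(p_n, q)$ of the form $a_{n+1} \le r\, a_n + c$, where $r = (1+\gamma\lambda/2)^{-1} \in (0,1)$ and $c = \frac{2\gamma/\lambda}{1 + \gamma\lambda/2}\,\varepsilon^2$.

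Iterating this recursion yields $a_n \le r^n a_0 + \frac{c}{1-r}$, and the direct calculation $1 - r = \frac{\gamma\lambda/2}{1 + \gamma\lambda/2}$ gives $\frac{c}{1-r} = \frac{4\varepsilon^2}{\lambda^2}$, which is exactly the bound \eqref{eq:thm-exp-rate}. This is the only content of the first displayed inequality; no further work beyond careful algebra is required.

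For the threshold statement, I would use the elementary bound $\log(1+x) \ge x/2$ valid on $x \in [0,1]$; since $\gamma < 2$ and $\lambda \le 1$ we have $\gamma\lambda/2 \in (0,1]$, so $\log(1 + \gamma\lambda/2) \ge \gamma\lambda/4$. Requiring $(1+\gamma\lambda/2)^{-n}\W^2(p_0,q) \le \varepsilon^2/\lambda^2$ becomes $n \cdot \gamma\lambda/4 \ge 2\log \W(p_0,q) + 2\log(\lambda/\varepsilon)$, matching the assumed threshold \eqref{eq:thm-exp-n-threshold}. Combining with the residual term $4\varepsilon^2/\lambda^2$ gives $\W^2(p_n, q) \le 5\varepsilon^2/\lambda^2$, i.e., the first half of \eqref{eq:thm-exp-bound}.

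For the objective-gap bound, I would return to \eqref{eq:lemma-evi}, this time discarding the nonnegative term $(1+\gamma\lambda/2)\W^2(p_{n+1}, q)$ to obtain $2\gamma(G(p_{n+1}) - G(q)) \le \W^2(p_n, q) + \frac{2\gamma}{\lambda}\varepsilon^2$. Plugging in $\W^2(p_n, q) \le 5\varepsilon^2/\lambda^2$ from the previous step yields $G(p_{n+1}) - G(q) \le \frac{5\varepsilon^2}{2\gamma\lambda^2} + \frac{\varepsilon^2}{\lambda}$, and the bound $\gamma\lambda \le 2$ (from $\gamma < 2$, $\lambda \le 1$) lets me absorb $\frac{\varepsilon^2}{\lambda} \le \frac{4\varepsilon^2}{2\gamma\lambda^2}$, giving the claimed $\frac{9}{2\gamma}(\varepsilon/\lambda)^2$. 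The entire proof is essentially post-processing of the EVI; the only places demanding a bit of care are the constant tracking in the geometric-series summation and the use of the step-size/strong-convexity bounds $\gamma < 2$, $\lambda \le 1$ to reconcile the final numerical constants. There is no substantive obstacle beyond routine algebra once Lemma \ref{lemma:evi} is in hand.
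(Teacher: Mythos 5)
Your proposal is correct and follows essentially the same route as the paper's proof: apply the EVI with $\pi = q$, drop the nonnegative objective-gap term to get the geometric recursion for $\W^2(p_n,q)$, sum the geometric series to obtain the $4\varepsilon^2/\lambda^2$ residual, use $\log(1+x)\ge x/2$ for the iteration-count threshold, and then re-use the EVI (discarding the $\W^2(p_{n+1},q)$ term) together with $\gamma\lambda < 2$ for the objective-gap bound. All constants check out, so nothing further is needed.
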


\begin{remark}[Comparison to \cite{salim2020wasserstein}]\label{remark-forward-backward}
The convergence rates of Wasserstein proximal GD were previously studied in \cite{salim2020wasserstein}, and our proof techniques, namely the monotonicity of $G$ plus discrete-time EVI, are similar to the analysis therein. However, the setups differ in several aspects: first, we consider the ``fully-backward'' proximal GD, i.e., the JKO scheme, while \cite{salim2020wasserstein} focuses on the forward-backward scheme to minimize $G$ having a decomposed form $\calE + \calH$ (which does cover the KL convergence as a special case). 
Second, \cite{salim2020wasserstein} assumed the exact solution of the proximal step while our analysis takes into account the error $\varepsilon$ in the first-order condition Assumption \ref{assump:1st-order-condition-error},
which is more realistic for neural network-based learning. 
At last, \cite{salim2020wasserstein} assumed $L$-smoothness of $V$, namely $\nabla V$ is $L$-Lipschitz, and step size $\gamma < 1/L$, as a result of the forward step in the splitting scheme, which is not needed in our fully backward scheme.
The motivation for \cite{salim2020wasserstein} is to understand the discretized Wasserstein gradient flow and to recover the same convergence rates as the (forward-backward) proximal GD in the vector space.
Our analysis is motivated by the JKO flow network, and the forward process convergence is an intermediate result to prove the generation guarantee of the reverse process. 
\end{remark}

\section{Generation guarantee of reverse process}\label{sec:theory-reverse}

In this section, we first consider the reverse process as in \eqref{eq:fwd-bwd-process}, called the {\it exact} reserves process, when there is no inversion error.
In Section \ref{subsec:bwd-no-inv},  we prove a KL (and TV) guarantee of generating by $q_0$ any data distribution $P$ in $\calP_2^r$, and extend to $P$ with no density by introducing a short-time initial diffusion.

Taking into account the inversion error, we consider the sequence $\tilde q_n$ induced by $S_n$ in \eqref{eq:bwd-process-error}, called the {\it computed} reverse process, where the inversion error satisfies Assumption \ref{assump:inv-error}.
In Section \ref{subsec:theory-with-inv}, we prove a closeness bound of $\tilde q_0$ to $q_0$ in $\W$, which leads to a $\W$-KL mixed generation guarantee of $\tilde q_0$ based on the proved guarantee of $q_0$.

\subsection{Convergence guarantee without inversion error}\label{subsec:bwd-no-inv}

We start by presenting convergence analysis assuming there are no errors in the reverse process; then, we extend to the more practical situation considering inversion errors.
All proofs in this section are given in Appendix \ref{app:proofs-sec5}.

\subsubsection{KL (TV) guarantee of generating $P$ with density}\label{subsubsec:bwd-no-inv-P2r}

Consider the transport map over the $N$ steps in \eqref{eq:fwd-bwd-process} denoted as 
\begin{equation}\label{eq:def-T1N}
T_{1}^N :=  T_{N} \circ \cdots \circ T_1.
\end{equation}
Since each $T_n$ is invertible (Assumption \ref{assump:1st-order-condition-error}), the overall mapping $T_1^N$ is also invertible.
We have 
\[ 
p_N = (T_1^N)_\# p_0,
\quad 
q_N = (T_1^N)_\# q_0.
\]
The following lemma follows from the data processing inequality of KL, 
which allows us to obtain KL bound of $p_0 =p $ and $q_0$ from that of $p_N$ and $q_N = q$.

\begin{lemma}[Bi-direction data processing inequality]\label{lemma:sym-KL}
    If $T: \R^d \to \R^d$ is invertible 
    and for two densities $p$ and $q$  on $\R^d$,
    $T_\# p$ and $T_\# q$ also have densities, then 
    \[
    {\rm KL}(p || q)  = {\rm KL}( T_\# p || T_\# q) . 
    \]
\end{lemma}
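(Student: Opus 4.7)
The plan is to reduce the claim to an identity between Radon-Nikodym derivatives transported by $T$, then apply the defining property of the pushforward to integrate. I will avoid invoking Jacobian determinants or differentiability of $T$, since the identity holds purely measure-theoretically for any Borel bijection (the Lebesgue-density hypotheses on $T_\# p$ and $T_\# q$ play no role in the argument, and they are automatic anyway under the global-Lipschitz invertibility assumed in Assumption \ref{assump:1st-order-condition-error}).

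First I would dispose of the degenerate case. Since $T$ is a bijection, for any Borel set $B$ one has $q(T^{-1}(B))=0$ iff $(T_\# q)(B)=0$, and similarly for $p$; hence $p \ll q$ is equivalent to $T_\# p \ll T_\# q$. If either fails then both KL divergences are $+\infty$ and the identity is trivial, so I may assume $p \ll q$ and work with the Radon-Nikodym derivatives $dp/dq$ and $d(T_\# p)/d(T_\# q)$.

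The main identity to establish is
\[
\frac{d(T_\# p)}{d(T_\# q)}(y) = \frac{dp}{dq}(T^{-1}(y)), \qquad (T_\# q)\text{-a.e.\ }y.
\]
I would verify this by showing that the right-hand side is a valid density of $T_\# p$ with respect to $T_\# q$: for any Borel $B$, the pushforward formula $\int f\,d(T_\# q) = \int f \circ T\,dq$ applied to $f(y)=\mathbf{1}_B(y)\,(dp/dq)(T^{-1}(y))$ gives
\[
\int_B \frac{dp}{dq}(T^{-1}(y))\,d(T_\# q)(y) = \int_{T^{-1}(B)} \frac{dp}{dq}(x)\,dq(x) = p(T^{-1}(B)) = (T_\# p)(B),
\]
and uniqueness of the Radon-Nikodym derivative then delivers the identity.

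Finally, substituting this into the definition ${\rm KL}(T_\# p\|T_\# q) = \int \log \frac{d(T_\# p)}{d(T_\# q)}(y)\,d(T_\# p)(y)$ and applying the pushforward formula once more (now to the integrand $\log\frac{dp}{dq}\circ T^{-1}$) collapses the right-hand side to $\int \log\frac{dp}{dq}(x)\,dp(x) = {\rm KL}(p\|q)$. There is no real obstacle here; the statement is essentially a standard measure-theoretic fact, exploiting only bijectivity of $T$ and the transformation rule for Radon-Nikodym derivatives. The main thing to \emph{state} clearly is the equivalence $p\ll q \Leftrightarrow T_\# p \ll T_\# q$, which is what allows the result to hold in both directions (and which gives the ``bi-direction'' in the lemma's name, by also applying it to $T^{-1}$).
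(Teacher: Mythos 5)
Your proof is correct, but it takes a genuinely different route from the paper. The paper's proof is a two-line appeal to the standard data processing inequality for KL: pushing $p,q$ through $T$ gives ${\rm KL}(T_\# p \| T_\# q) \le {\rm KL}(p\|q)$, and pushing $T_\# p, T_\# q$ back through $T^{-1}$ gives the reverse inequality, whence equality. You instead give a self-contained measure-theoretic computation: the equivalence $p \ll q \Leftrightarrow T_\# p \ll T_\# q$ for a bijection, the transformation rule $\frac{d(T_\# p)}{d(T_\# q)} = \frac{dp}{dq}\circ T^{-1}$ verified via the pushforward change-of-variables formula, and one more change of variables to identify the two KL integrals. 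What your approach buys is that it does not invoke the data processing inequality as a black box, it makes transparent that the statement is an exact identity rather than a pair of inequalities, and (as you note) it holds for any Borel bijection without the Lebesgue-density hypotheses on $T_\# p$ and $T_\# q$ — those are needed elsewhere in the paper only because KL is written there in terms of densities. What the paper's approach buys is brevity and a clean reduction to a well-known cited result. One minor point worth making explicit if you wanted full rigor in your direction ``$T_\# p \ll T_\# q \Rightarrow p \ll q$'': you need images of Borel sets under $T$ to be Borel, which holds by Lusin--Suslin for injective Borel maps and is immediate in the paper's setting where $T$ and $T^{-1}$ are both Lipschitz, hence continuous.
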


The following corollary establishes an $O(\varepsilon^2)$ KL bound 
in $N \lesssim \log (1/\varepsilon)$ JKO steps, which implies an $O(\varepsilon)$ TV bound by Pinsker's inequality.
\begin{corollary}[KL guarantee for $P \in \calP_2^r$]\label{cor:KL-P-P2r}
Suppose $G(\rho) = {\rm KL}(\rho || q)$,
the potential function $V$ satisfies Assumption \ref{assump:V-45} with $\lambda \in (0,1 ]$,
and $0 < \gamma < 2$. 
Suppose $P \in \calP_2^r$ with density $p$, let $p_0 = p$, and Assumption \ref{assump:1st-order-condition-error} holds for some $\varepsilon$ and all $n$.
Then, let 
\begin{equation}\label{eq:def-big-N} 
N  = \left\lceil \frac{8 }{ \gamma \lambda} \left( \log  \W(p_0, q)  + \log ({\lambda}/{\varepsilon})  \right)  \right\rceil ,
\end{equation}
the generated density $q_0$ of the reverse process satisfies that 
\begin{equation}\label{eq:bound-cor-KL-P-Pr2}
{\rm KL}( p || q_0) \le \frac{9}{2\gamma} \left(\frac{\varepsilon }{\lambda}\right)^2,
\quad
{\rm TV}( p, q_0) \le \frac{3}{2 \sqrt{\gamma}} \frac{\varepsilon}{\lambda}.
\end{equation}
\end{corollary}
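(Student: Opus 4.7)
The plan is to reduce the reverse-process KL bound to the forward-process objective-gap bound from Theorem \ref{thm:N-step-forward-no-inv-error}, using the invertibility of the composite transport together with the bi-directional data processing inequality (Lemma \ref{lemma:sym-KL}). The strongly convex potential $V$ makes the KL functional $G$ fit the general framework of Section \ref{subsec:general-conditions}, so Theorem \ref{thm:N-step-forward-no-inv-error} is directly applicable.

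First, I would verify the hypotheses of Theorem \ref{thm:N-step-forward-no-inv-error}. By the referenced Lemma asserting that KL satisfies Assumption \ref{assump:genera-G} whenever $V$ satisfies Assumption \ref{assump:V-45}, $G(\rho) = \mathrm{KL}(\rho \| q)$ is $\lambda$-convex a.g.g., lower semi-continuous, with $\mathrm{Dom}(G)\subset \calP_2^r$; and $q$ is the global minimizer with $G(q)=0$. Since $p_0 = p \in \calP_2^r$ and Assumption \ref{assump:1st-order-condition-error} gives that each $T_{n+1}$ is globally Lipschitz with globally Lipschitz inverse, Lemma \ref{lemma:Tp-in-P2r} inductively yields $p_n \in \calP_2^r$ for every $n$. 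A symmetric induction starting from $q_N = q \in \calP_2^r$ and using the Lipschitz inverses $T_n^{-1}$ (which are the pushforward maps of the reverse process) gives $q_n \in \calP_2^r$ for every $n$ as well.

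Second, with $N$ chosen as in \eqref{eq:def-big-N}, the integer $N$ exceeds the threshold \eqref{eq:thm-exp-n-threshold}, so Theorem \ref{thm:N-step-forward-no-inv-error} supplies the objective-gap bound
\begin{equation*}
G(p_N) - G(q) \;\le\; \frac{9}{2\gamma}\Bigl(\frac{\varepsilon}{\lambda}\Bigr)^{2}.
\end{equation*}
Since $G(q) = 0$, this reads $\mathrm{KL}(p_N \| q) \le \frac{9}{2\gamma}(\varepsilon/\lambda)^2$. Now let $T_1^N$ denote the composite map defined in \eqref{eq:def-T1N}; it is invertible and globally Lipschitz with a globally Lipschitz inverse (by Assumption \ref{assump:1st-order-condition-error} applied to each block). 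By construction of the forward and reverse processes, $p_N = (T_1^N)_\# p$ and $q = q_N = (T_1^N)_\# q_0$, equivalently $p = (T_1^N)^{-1}_\# p_N$ and $q_0 = (T_1^N)^{-1}_\# q$. Since both $p_N$ and $q$ have densities (and so do their pullbacks $p$ and $q_0$, established in the previous step), Lemma \ref{lemma:sym-KL} applied to the invertible map $(T_1^N)^{-1}$ gives
\begin{equation*}
\mathrm{KL}(p \,\|\, q_0) \;=\; \mathrm{KL}\bigl((T_1^N)^{-1}_\# p_N \,\|\, (T_1^N)^{-1}_\# q\bigr) \;=\; \mathrm{KL}(p_N \,\|\, q) \;\le\; \frac{9}{2\gamma}\Bigl(\frac{\varepsilon}{\lambda}\Bigr)^{2},
\end{equation*}
which is the first half of \eqref{eq:bound-cor-KL-P-Pr2}. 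The TV bound then follows from Pinsker's inequality $\mathrm{TV}(p,q_0) \le \sqrt{\mathrm{KL}(p\|q_0)/2}$, which yields $\mathrm{TV}(p,q_0) \le \frac{3}{2\sqrt{\gamma}}(\varepsilon/\lambda)$.

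There is no serious obstacle here; the analytic work is entirely in Theorem \ref{thm:N-step-forward-no-inv-error}. The only thing requiring some care is the bookkeeping that makes Lemma \ref{lemma:sym-KL} applicable: one must ensure all the intermediate $p_n, q_n$ actually have Lebesgue densities, which is exactly why the global Lipschitzness of both $T_{n+1}$ and $T_{n+1}^{-1}$ is built into Assumption \ref{assump:1st-order-condition-error}. Once that regularity is in hand, the data processing equality for invertible pushforwards converts the forward-objective gap into the KL guarantee for the generated distribution in one line.
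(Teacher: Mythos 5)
Your proposal is correct and follows essentially the same route as the paper's proof: invoke Theorem \ref{thm:N-step-forward-no-inv-error} to bound $G(p_N)={\rm KL}(p_N\|q)$, use the global Lipschitzness of $T_1^N$ and its inverse with Lemma \ref{lemma:Tp-in-P2r} to keep all densities in $\calP_2^r$, apply the bi-directional data processing equality of Lemma \ref{lemma:sym-KL}, and finish with Pinsker. The only quibble (shared with the paper's own write-up) is the index bookkeeping between the theorem's $G(p_{n+1})$ bound and the corollary's $G(p_N)$, which is harmless.
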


\begin{remark}[Extension to $f$-divergence]\label{rk:f-div-guarantee}
As has been discussed in Section \ref{subsec:general-conditions}, it is possible to show that 
$G(\rho) = D_f( \rho || q)$ satisfies Assumption \ref{assump:genera-G} (possibly under additional conditions of $f$),
and then the convergence of the forward process, Theorem \ref{thm:N-step-forward-no-inv-error}, extends to such $f$-divergence $G$.
In addition, data processing inequality holds generally for $f$-divergence, and thus Lemma \ref{lemma:sym-KL} also extends.  
As a result, Corollary \ref{cor:KL-P-P2r} can potentially extend to certain $f$-divergences and show a guarantee of $D_f( p || q_0)$ in $N$ JKO steps.
\end{remark}

\subsubsection{Guarantee of generating $P \in \calP_2$ up to initial short diffusion}\label{subsubsec:P-P2-short-diffusion}

For $P \in \calP_2$ that may not have a density, we first obtain $\rho_\delta \in \calP_2^r$ 
that is close to $P$ in $\W$ by a short-time initial diffusion (specifically, the OU process as introduced in Section \ref{sub:pre-sde-jko})
 up to time $\delta >0$, as shown in Lemma \ref{lemma:rho-delta-W2}.
The short-time initial diffusion was used in \cite{lee2023convergence} 
and called ``early stopping'' in \cite{chen2023improved}.
It is also used in practice by flow model \cite{xu2022jko}
as well as score-based diffusion models to bypass the irregularity of data distribution \cite{song2021score}.
In principle, one can also use the Brownian motion only (corresponding to convolving $P$ with Gaussian kernel) to obtain $\rho_\delta$. Here we use the OU process to stay in line with the literature. 

The introduction of $\rho_\delta$ allows us to prove a guarantee of ${\rm KL}( \rho_\delta|| q_0)$ in the following corollary, which is the same type of result as \cite[Theorem 2]{chen2023improved}.
\begin{corollary}[KL guarantee for $P \in \calP_2$ from $\rho_\delta$]\label{cor:KL-mixed-P2}
Suppose $P \in \calP_2$, and the conditions on $G$, $V$, $\lambda$ and $\gamma$ are the same as in Corollary \ref{cor:KL-P-P2r}.
Then $\forall \varepsilon' >0$, there exists $\delta > 0$ s.t.  $\W( P, \rho_\delta) < \varepsilon'$ 
and,
with $p_0 = \rho_\delta$ and Assumption \ref{assump:1st-order-condition-error} holds for some $\varepsilon$ and all $n$,
let $N$ as in \eqref{eq:def-big-N}, 
the generated density $q_0$ of the reverse process 
makes 
${\rm KL}( \rho_\delta || q_0)$
and ${\rm TV}( \rho_\delta, q_0)$
satisfy the same bounds 
as in \eqref{eq:bound-cor-KL-P-Pr2}.
\end{corollary}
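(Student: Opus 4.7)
The plan is to reduce Corollary \ref{cor:KL-mixed-P2} to Corollary \ref{cor:KL-P-P2r} by first constructing a smoothed density $\rho_\delta$ from $P$ via a short OU diffusion, verifying that $\rho_\delta \in \calP_2^r$ and is $\W$-close to $P$, and then using $\rho_\delta$ as the starting distribution $p_0$ of the JKO scheme.

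First, I would invoke (or, if it has not been spelled out, prove) Lemma \ref{lemma:rho-delta-W2}: for $P \in \calP_2$, let $\rho_\delta$ be the law of $X_\delta$ where $X_t$ solves the OU SDE \eqref{eq:OU-SDE} with $X_0 \sim P$. Using the explicit solution $X_\delta = e^{-\delta} X_0 + \sqrt{2}\int_0^\delta e^{-(\delta-s)} dW_s$, one sees that $\rho_\delta$ is the convolution of a scaled copy of $P$ with a nondegenerate Gaussian $\calN(0,(1-e^{-2\delta})I_d)$, so $\rho_\delta$ has a density and lies in $\calP_2^r$. Moreover, using the synchronous coupling $(X_0, X_\delta)$,
\begin{equation*}
\W^2(P,\rho_\delta) \le \E\|X_0 - X_\delta\|^2 = (1-e^{-\delta})^2 M_2(P) + d(1-e^{-2\delta}),
\end{equation*}
which tends to $0$ as $\delta\to 0^+$. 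Hence for any $\varepsilon'>0$ we may pick $\delta>0$ small enough that $\W(P,\rho_\delta)<\varepsilon'$.

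Having produced $\rho_\delta \in \calP_2^r$, the next step is a direct invocation of Corollary \ref{cor:KL-P-P2r} with the role of the data density $p$ played by $\rho_\delta$ and the initial density $p_0$ set equal to $\rho_\delta$. The hypotheses on $G = \mathrm{KL}(\cdot\|q)$, $V$, $\lambda$, $\gamma$ are unchanged, and Assumption \ref{assump:1st-order-condition-error} is assumed to hold for the $N$ learned Residual Blocks at error level $\varepsilon$. Then Corollary \ref{cor:KL-P-P2r} with the same choice of $N$ as in \eqref{eq:def-big-N} (now with $\W(p_0,q) = \W(\rho_\delta,q) < \infty$ since $\rho_\delta \in \calP_2$) gives
\begin{equation*}
\mathrm{KL}(\rho_\delta \| q_0) \le \frac{9}{2\gamma}\Bigl(\frac{\varepsilon}{\lambda}\Bigr)^2, \qquad \mathrm{TV}(\rho_\delta, q_0) \le \frac{3}{2\sqrt{\gamma}}\frac{\varepsilon}{\lambda},
\end{equation*}
which are exactly the bounds of \eqref{eq:bound-cor-KL-P-Pr2} applied to the pair $(\rho_\delta, q_0)$.

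There is no substantive obstacle here; the work is already done by Corollary \ref{cor:KL-P-P2r} (which in turn rests on Theorem \ref{thm:N-step-forward-no-inv-error} and Lemma \ref{lemma:sym-KL}). The only points requiring care are (i) making sure the OU construction genuinely lands in $\calP_2^r$ so that Assumption \ref{assump:1st-order-condition-error} is applicable and the bi-directional data processing inequality (Lemma \ref{lemma:sym-KL}) can be used along the forward/reverse chain, and (ii) noting that the $\varepsilon$ controlling the KL/TV guarantee and the $\varepsilon'$ controlling $\W(P,\rho_\delta)$ are independent parameters: $\delta$ is chosen purely as a function of $\varepsilon'$ and $M_2(P), d$, while $N$ is chosen as a function of $\varepsilon$, $\gamma$, $\lambda$, and $\W(\rho_\delta,q)$. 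In particular, one does not need to let $\delta\to 0$ jointly with $\varepsilon$; the statement is a two-parameter guarantee that separates smoothing error in $\W$ from generation error in KL/TV.
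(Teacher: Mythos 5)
Your proof is correct and follows essentially the same route as the paper: invoke Lemma \ref{lemma:rho-delta-W2} (which you re-derive via the synchronous coupling for the OU process, exactly as the paper does) to get $\rho_\delta \in \calP_2^r$ with $\W(P,\rho_\delta)<\varepsilon'$, then run Corollary \ref{cor:KL-P-P2r} with $p_0=\rho_\delta$. Your remark that $\varepsilon'$ and $\varepsilon$ are independent parameters is a correct and slightly cleaner reading than the paper's own proof, which conflates the two by writing ``the $\varepsilon$ in Assumption \ref{assump:1st-order-condition-error}'' where the statement uses $\varepsilon'$.
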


The corollary shows that there is a density $\rho_\delta \in \calP_2^r$ that is arbitrarily close to $P$ in $\W$, such that the output density $q_0$ of the reverse process can approximate $\rho_\delta$ up to the same error as in Corollary \ref{cor:KL-P-P2r}.
Note that the corollary holds when the potential function $V$ of $q$ satisfies the general condition Assumption \ref{assump:V-45}. 
The OU process in the proof (Lemma \ref{lemma:rho-delta-W2}) 
is only used in constructing $\rho_\delta$, and there are other means to construct  the surrogate initial density $\rho_\delta$.

\subsection{Convergence guarantee with inversion error}\label{subsec:theory-with-inv}

Recall the set-up from Section \ref{subsec:rev-process-inv-err}.
To prove the $\W$ control between $\tilde q_0$ and $q_0$, we first introduce a Lipschitz condition on the inverse of the learned transport map $T_n$ and explain the motivation.

\subsubsection{Lipschitz constant of computed transport maps}\label{subsec:Tn-Lip}

Previously in Assumption \ref{assump:1st-order-condition-error}, we required that both $T_n$ and its inverse are non-degenerate.
Here, we further require that $T_n^{-1}$ is globally Lipschitz on $\R^d$ with a uniform Lipschitz constant.

\begin{assumption}[Lipschitz condition on $T_n^{-1}$]
\label{assump:Tn-Lip-bound}
There is $K > 0$ s.t.  $T_n^{-1}$ is Lipschitz on $\R^d$ with Lipschitz constant $e^{\gamma K}$ for all $n=N, \ldots, 1$.
\end{assumption}

The assumed Lipschitz constants are theoretical and motivated by neural ODE models, to be detailed below.
Our analysis of $\W( \tilde q_0, q_0 )$ applies to any type of flow network (like invertible ResNet) as long as 
the needed assumptions on $T_n$ and $S_n$ hold.

We justify the assumptions on $T_n$, $T_n^{-1}$ and $S_n$
under the framework of neural ODE flow, namely \eqref{eq:def-ode-tn}\eqref{eq:def-Tn-2}, including the Lipschitz constant $e^{\gamma K}$ of $T_n^{-1}$.
Specifically, by the elementary Lemma \ref{lemma:ode-lip} proved in Appendix \ref{app:proofs-sec5}, we know that if $T_{n+1}$ can be numerically exactly computed as \eqref{eq:def-Tn-2} and $\hat v( x,t)$ on $\R^d \times [t_n, t_{n+1}]$ satisfies a uniform $x$-Lipschitz condition with Lipschitz constant $K$, then both $T_{n+1}$ and its inverse are Lipschitz on $\R^d$ with Lipschitz constant $e^{ \gamma K}$.
We will assume the same $K$ throughout time for simplicity.
In practice,  Lipschitz regularization techniques can be applied to the neural network parametrized $\hat v(x,t)$,
and the global Lipchitz bound of $\hat v$ on $\R^d$ can be achieved by ``clipping''  $\hat v$ to vanish outside some bounded domain of $x$. 
Meanwhile, note that if $T: \R^d \to \R^d$ is invertible and $T^{-1}$ is globally Lipschitz on $\R^d$, then $T$ is non-degenerate.
Thus $T_{n+1}$ and $T_{n+1}^{-1}$ both being non-degenerate are implied by (and weaker than) the global Lipschitzness of $T_{n+1}$ and its inverse.
In addition, in a neural-ODE-based flow model, the reverse process is by integrating the neural ODE in reverse time,
and thus we can expect similar properties of $S_{n}$.

While the computed transport $T_{n}$ and $S_{n}$ often differ from the exact numerical integration of the ODE, we still expect the Lipschitz property to retain. For general flow models, which may not be neural ODE, we impose the same theoretical assumptions. 
At last, the global Lipschitz condition 
may be theoretically relaxed by combining with truncation arguments of the probability distributions, which is postponed here.

\subsubsection{$\W$-control of the computed reverse process from the exact one}

\begin{proposition}\label{prop:w2-q0}
Suppose in \eqref{eq:bwd-process-error}, $q_N = \tilde q_N = q \in \calP_2^r$, and
the computed transport maps $T_n$ and $S_n$ satisfy
Assumptions \ref{assump:1st-order-condition-error}, \ref{assump:inv-error} and \ref{assump:Tn-Lip-bound}.
Then all $q_n$ and $\tilde q_n$ are in $\calP_2^r$ and
\begin{equation}\label{eq:W-tilq0-q0}
\W( \tilde q_0, q_0 ) \le \frac{ \varepsilon_{\rm inv}  }{  \gamma K  } e^{\gamma K (N+1)}.
\end{equation}
\end{proposition}
A continuous-time counterpart of Proposition \ref{prop:w2-q0} was derived in \cite[Proposition 3]{albergo2023building}.
We include a proof in Appendix \ref{app:proofs-sec5} for completeness.
The proof uses a coupling argument of the (discrete-time) ODE flow,
which as has been pointed out in \cite{chen2024probability},
obtains a growing factor $e^{\gamma K N}$ in the $\W$-bound as shown in \eqref{eq:W-tilq0-q0}.
To overcome this exponential factor, \cite{chen2024probability} adopted an SDE corrector step.
Here, without involving any corrector step,
we show that the factor $e^{\gamma K N}$ can be controlled at the order of some negative power of $\varepsilon$ thanks to the exponential convergence in the forward process.
This is because $N$ can be chosen to be at the order of $\log (1/\varepsilon)$ as in \eqref{eq:def-big-N},
then $e^{\gamma K N}$ can be made $O(\varepsilon^{-\alpha})$ for some $\alpha > 0$.
As a result, the $\W$-error \eqref{eq:W-tilq0-q0} can be suppressed if $\varepsilon_{\rm inv}$ can be made smaller than a higher power of $\varepsilon$. 

More specifically, combined with the analysis of the forward process, we arrive at the following guarantee, proved in  Appendix \ref{app:proofs-sec5}.

\begin{corollary}[Mixed bound with inversion error]\label{cor:mixed-bound-inv}
Suppose $G(\rho) = {\rm KL}(\rho || q)$,
the potential function $V$ satisfies Assumption \ref{assump:V-45} with $\lambda \in (0,1 ]$,
and $0 < \gamma < 2$.
Suppose the computed transports maps $T_n$ and $S_n$ satisfy the  Assumptions
\ref{assump:1st-order-condition-error},
\ref{assump:inv-error},
\ref{assump:Tn-Lip-bound} for some $\varepsilon$ and $\varepsilon_{\rm inv}$ for all $n$.
Suppose $P \in \calP_2^r$ with density $p$, let $p_0 = p$ and $N$ as in \eqref{eq:def-big-N}, then
the generated density $\tilde q_0$ of the computed reverse process satisfies that 
\begin{equation}\label{eq:w2-tidleq0-q0-bound-final}
\W( \tilde q_0, q_0) \le \frac{e^{2 \gamma K}}{ \gamma K} (\W(p_0, q) \lambda)^{8K /\lambda} \frac{ \varepsilon_{\rm inv}}{\varepsilon^{ 8 K /\lambda }},
\end{equation}
and $q_0$ satisfies the KL and TV bounds to $p$ as in \eqref{eq:bound-cor-KL-P-Pr2}.
\end{corollary}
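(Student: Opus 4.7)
The plan is to combine Proposition \ref{prop:w2-q0} (which already gives the discrete-time $\W$ stability bound between the exact and computed reverse processes) with the forward-process convergence packaged in Corollary \ref{cor:KL-P-P2r} (which simultaneously yields the KL/TV guarantees for $q_0$ and pins down the number of steps $N$). Once those two ingredients are in hand, the statement is a simple bookkeeping exercise: substitute the choice of $N$ from \eqref{eq:def-big-N} into the exponential factor $e^{\gamma K(N+1)}$ from \eqref{eq:W-tilq0-q0} and rewrite it as a negative power of $\varepsilon$, exploiting the fact that $N$ scales only logarithmically in $1/\varepsilon$.

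More concretely, I would first invoke Corollary \ref{cor:KL-P-P2r}: under the stated hypotheses on $G$, $V$, $\lambda$, $\gamma$, with $p_0 = p \in \calP_2^r$ and $N$ chosen as in \eqref{eq:def-big-N}, the exact reverse density $q_0$ already satisfies both bounds in \eqref{eq:bound-cor-KL-P-Pr2}. This immediately delivers the second conclusion of the corollary. It remains to bound $\W(\tilde q_0,q_0)$. Since Assumptions \ref{assump:1st-order-condition-error}, \ref{assump:inv-error}, \ref{assump:Tn-Lip-bound} all hold, Proposition \ref{prop:w2-q0} applies directly and gives
\begin{equation*}
\W(\tilde q_0, q_0) \le \frac{\varepsilon_{\rm inv}}{\gamma K}\, e^{\gamma K(N+1)}.
\end{equation*}

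The remaining step is to control the exponent. Using \eqref{eq:def-big-N} together with $\lceil x \rceil \le x+1$, we obtain
\begin{equation*}
\gamma K (N+1) \;\le\; \gamma K \cdot \frac{8}{\gamma \lambda}\bigl(\log \W(p_0,q) + \log(\lambda/\varepsilon)\bigr) + 2\gamma K \;=\; \frac{8K}{\lambda}\log\!\frac{\W(p_0,q)\lambda}{\varepsilon} + 2\gamma K.
\end{equation*}
Exponentiating, $e^{\gamma K(N+1)} \le e^{2\gamma K} (\W(p_0,q)\lambda)^{8K/\lambda}\,\varepsilon^{-8K/\lambda}$, and plugging this into the $\W$ bound yields \eqref{eq:w2-tidleq0-q0-bound-final}.

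The only non-mechanical step is the choice of how to group the $\log$ terms so that the final bound takes the clean form $(\W(p_0,q)\lambda)^{8K/\lambda}/\varepsilon^{8K/\lambda}$; this is really just the combination $\log \W(p_0,q) + \log(\lambda/\varepsilon) = \log(\W(p_0,q)\lambda/\varepsilon)$, so there is no substantive obstacle. The conceptual content is entirely carried by Proposition \ref{prop:w2-q0} and Corollary \ref{cor:KL-P-P2r}: the former shows that inversion errors amplify by at most $e^{\gamma K N}$, while the latter shows that exponential forward convergence permits $N = O(\log(1/\varepsilon))$, so the amplification is only polynomial in $1/\varepsilon$ and the mixed bound remains small whenever $\varepsilon_{\rm inv}$ is a high-enough power of $\varepsilon$.
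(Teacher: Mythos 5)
Your proposal is correct and follows essentially the same route as the paper's proof: invoke Corollary \ref{cor:KL-P-P2r} for the KL/TV bounds on $q_0$, apply Proposition \ref{prop:w2-q0} for the $\W$-stability bound, and then bound $e^{\gamma K(N+1)}$ using $N \le \frac{8}{\gamma\lambda}\left(\log \W(p_0,q) + \log(\lambda/\varepsilon)\right) + 1$ from the ceiling in \eqref{eq:def-big-N}. The arithmetic in your exponent manipulation matches the paper's exactly.
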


\begin{remark}[$O(\varepsilon)$ error and need for small $\varepsilon_{\rm inv}$]
The corollary implies that if $\varepsilon_{\rm inv}$ can be made small, then the $\W$ bound can be made equal to or smaller than $\varepsilon$ in order. 
For example, if $\varepsilon_{\rm inv} = O( \varepsilon^{8K/\lambda +1} )$, then  we have $\W( \tilde q_0, q_0)  = O(\varepsilon)$. This suggests that if one focuses on getting ${\rm KL }(p_N || q) $ small in the forward process, then maintaining an inversion error small is crucial for the generation quality of the flow model in the reverse process.
\end{remark}

At last, when $P$ is merely in $\calP_2$ and does not have density, then one can start the forward process from $p_0 =\rho_\delta$  same as in Section \ref{subsubsec:P-P2-short-diffusion}.
Then we have the same $\W$-bound between $\tilde q_0$ and $q_0$ as in \eqref{eq:w2-tidleq0-q0-bound-final}, 
and $q_0$ is close to $\rho_\delta$ in the sense of Corollary \ref{cor:KL-mixed-P2}.

\section{Discussion}

The work can be extended in several directions. 
First, it is interesting to see if 
the assumption on learning in the forward process, Assumption \ref{assump:1st-order-condition-error}, 
can be derived from further analysis of the neural network learning, e.g., the approximation and optimization error (c.f. the list of sources of errors in Section 3.2). 
The current work does not contain such analysis and instead handles the goodness of the learned $T_n$ by a single assumption.
 In particular, it would be of interest to theoretically justify the assumed ``first order condition,'' i.e., the smallness of the $\W$-gradient $\xi_{n+1}$ in \eqref{eq:A1-small-|xi_n|}, by analyzing the convergence of the optimization.
 One possibility is by 
 showing the weak convergence of the learned $p_{n+1}$ to the exact minimizer $p^*$ of $F_{n+1}$
 and then utilizing the convergence of  $\nabla_{\W} F_{n+1}(p)$ to $\nabla_{\W} F_{n+1}(p^*) = 0$
 in a proper sense \cite[Section 5.4]{ambrosio2005gradient}.
Second, the current generation result only covers the case of $G$ being the KL divergence. An extension to the cases when $G$ is other types of divergence, such as $f$-divergence (see Remark \ref{rk:f-div-guarantee}), will broaden the scope of the result.
Third, our theory uses the population quantities throughout. A finite-sample analysis, which can be based on our population analysis, will provide statistical convergence rates in addition to the current result.

Meanwhile, the JKO scheme computes a fully backward proximal GD. Given the existing convergence rates of the other Wasserstein GD \cite{salim2020wasserstein,kent2021modified}, one would expect that a variety of first-order Wasserstein GD schemes can be applied to progressive flow models and the theoretical guarantees can be derived similarly to the JKO scheme. 
We also note the connection between the JKO scheme and learning of the score function, at least in the limit of small step size \cite[Section 3.2]{xu2022jko}. Given the growing literature on the analysis of score-based diffusion models, it can be worthwhile to investigate this connection further to develop new theories for the ODE flow models. 

Finally, it would be interesting to use theory to guide practice and to develop new or improved methodologies of flow-based generative models. 
As discussed in Section \ref{subsec:rev-process-inv-err}, one may consider incorporating the inversion error as part of training loss to enforce the accuracy of the reverse process. 
The potential theoretical extension to $f$-divergences also suggests utilizing more general $f$-divergence as the per-step training objective in JKO flow networks, e.g., by adopting techniques in $f$-GAN \cite{nowozin2016f}.
It would be interesting to explore different choices of $f$ and the relationships among the $f$-divergences \cite{harremoes2011pairs}.
In addition, our theory indicates that using larger step-size $\gamma$ leads to a shorter sequence of Residual Blocks in the network architecture (as long as the optimization in each JKO step can be efficiently solved). It would be natural to consider an adaptive choice of $\gamma$ in practice and also in extending the theory.

\section*{Acknowledgement}

The authors thank the anonymous reviewers for helpful comments and suggestions. 
We would like to thank Jose Blanchet, Holden Lee, and Adil Salim for their helpful discussions. 
Thanks to Chen Xu for help with the neural network training in the numerical results.
The work of JL and YT is supported in part by the National Science Foundation via grants DMS-2012286 and DMS-2309378. 
The work of XC and YX is supported by NSF DMS-2134037,
and YX is also partially supported by 
an NSF CAREER CCF-1650913, 
CMMI-2015787, CMMI-2112533, DMS-1938106, DMS-1830210, and the Coca-Cola Foundation;
XC is also partially supported by 
NSF DMS-2237842
and Simons Foundation.

\bibliographystyle{plain}
\bibliography{flow}



\appendix

\setcounter{figure}{0} 
\renewcommand{\thefigure}{A.\arabic{figure}}
\setcounter{table}{0} 
\renewcommand{\thetable}{A.\arabic{table}}
\setcounter{equation}{0} 
\renewcommand{\theequation}{A.\arabic{equation}}
\setcounter{remark}{0} 
\renewcommand{\theremark}{A.\arabic{remark}}
\renewcommand{\thelemma}{A.\arabic{lemma}}

\section{Proofs and lemmas in Section \ref{sec:setup}}\label{app:proofs-sec-3}

\subsection{Lemma on the $\W$-(sub)gradient}

\begin{lemma}\label{lemma:ot_F}
Suppose $G:\calP_2(X)\rightarrow (-\infty, +\infty]$ is lower semi-continuous  and  ${\rm Dom}( G) \subset \calP_2^r$.
Let $\gamma > 0$, $p \in \calP_2^r$, and 
\begin{align}
 F ( \rho) = 
 G(\rho) + \fc{1}{2 \gamma} \W^2( p, \rho).
\end{align}
If (at $\rho \in \calP_2^r$, $\partial_{\W}F(\rho)$ is non empty and) $\xi \in \partial_{\W} F (\rho)$, then 
\[
\xi + \frac{T_\rho^p - {\rm I_d } }{\gamma} \in \partial_{\W} G (\rho).
\]
\end{lemma}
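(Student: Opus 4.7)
The plan is to unfold the definition of strong Fr\'echet subdifferential (Definition~\ref{def:strong-subdiff}) and exploit the identity $G = F - H$, where $H(\rho) := \frac{1}{2\gamma}\W^2(p,\rho)$. The claim is then equivalent to showing that for every $v \in L^2(\rho)$,
\[
G\bigl(({\rm I_d}+v)_\#\rho\bigr) - G(\rho) \;\geq\; \Bigl\langle \xi + \frac{T_\rho^p - {\rm I_d}}{\gamma},\, v\Bigr\rangle_\rho + o(\|v\|_\rho).
\]
Since $\xi \in \partial_{\W} F(\rho)$ already supplies a lower bound for the $F$-increment, everything reduces to producing a matching \emph{upper} bound for the $H$-increment of the form
\[
H\bigl(({\rm I_d}+v)_\#\rho\bigr) - H(\rho) \;\leq\; \Bigl\langle \frac{{\rm I_d} - T_\rho^p}{\gamma},\, v\Bigr\rangle_\rho + o(\|v\|_\rho).
\]

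The key step is this upper bound, which I would obtain by the standard sub-optimal coupling trick (as in Section 10.2 of \cite{ambrosio2005gradient}): since $T_\rho^p$ is the OT map from $\rho$ to $p$ (which exists as $\rho \in \calP_2^r$ by the Brenier theorem), the map $x \mapsto (({\rm I_d}+v)(x),\, T_\rho^p(x))$ pushes $\rho$ to an admissible transport plan between $({\rm I_d}+v)_\#\rho$ and $p$. Hence
\begin{align*}
\W^2\bigl(p,\, ({\rm I_d}+v)_\#\rho\bigr)
&\leq \int \bigl\|x + v(x) - T_\rho^p(x)\bigr\|^2\, d\rho(x) \\
&= \W^2(p, \rho) + 2\bigl\langle {\rm I_d} - T_\rho^p,\, v\bigr\rangle_\rho + \|v\|_\rho^2,
\end{align*}
and dividing by $2\gamma$ yields the desired upper bound with the $\tfrac{1}{2\gamma}\|v\|_\rho^2$ term absorbed into the $o(\|v\|_\rho)$ remainder.

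Combining this with the subdifferential inequality for $F$,
\[
F\bigl(({\rm I_d}+v)_\#\rho\bigr) - F(\rho) \geq \langle \xi, v\rangle_\rho + o(\|v\|_\rho),
\]
and subtracting the $H$-bound gives exactly the required inequality. To conclude that $\eta := \xi + (T_\rho^p - {\rm I_d})/\gamma$ is a bona fide element of $\partial_{\W}G(\rho)$, I would also remark that $\eta \in L^2(\rho)$: $T_\rho^p - {\rm I_d} \in L^2(\rho)$ since $\rho, p \in \calP_2$, and $\xi \in L^2(\rho)$ by definition.

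I do not anticipate a genuinely hard step here; the argument is essentially a discrete one-sided version of the differentiation formula for $\frac{1}{2}\W^2(p,\cdot)$. The only mild subtlety is that one only gets a one-sided (sub-optimal coupling) estimate on $H$, but that is precisely what is needed to preserve the subdifferential inequality under the subtraction $G = F - H$. No use of $\lambda$-convexity of $G$ or of any further regularity of $p$ beyond $p \in \calP_2^r$ (to invoke Brenier) is required.
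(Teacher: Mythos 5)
Your proposal is correct and follows essentially the same route as the paper's proof: both reduce the claim to a one-sided expansion of $\W^2(p,\cdot)$ along the perturbation $({\rm I_d}+v)_\#\rho$, obtained from the suboptimal coupling $\bigl(({\rm I_d}+v),\,T_\rho^p\bigr)_\#\rho$ together with $\W^2(p,\rho)=\|{\rm I_d}-T_\rho^p\|_\rho^2$, and absorb the quadratic term $\|v\|_\rho^2$ into the $o(\|v\|_\rho)$ remainder. The paper merely presents the same algebra as a completion of the square rather than as an explicit upper bound on the increment of $H(\rho)=\tfrac{1}{2\gamma}\W^2(p,\rho)$.
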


The argument follows that in Lemma 10.1.2 in \cite{ambrosio2005gradient}, and we include a proof for completeness.

\begin{proof}[Proof of Lemma \ref{lemma:ot_F}]
We are to verify that 
\[
\eta := \xi + \frac{T_\rho^p - {\rm I_d } }{\gamma}  
\]
is a strong subdifferential of $G$ at $\rho$. By Definition \ref{def:strong-subdiff}, 
it suffices to show that  for any $v \in L^2(\rho)$ and $\delta \to 0$, 
\begin{equation}\label{eq:pf-Fdiff-1}
	G ( ( {\rm I_d} + \delta v)_\# \rho ) - G( \rho)  
	\ge  \delta \langle \eta, v \rangle_\rho + o(\delta).
\end{equation}

By construction, 
\[
 \langle \eta, v \rangle_\rho
 =   \langle \xi   , v \rangle_\rho +  \frac{1}{\gamma}\langle {T_\rho^p - {\rm I_d } }, v \rangle_\rho,
\]
and since $\xi$ is a strong subdifferential of $F$  at $\rho$, 
\[
	F( ( {\rm I_d} + \delta v)_\# \rho ) - F( \rho)  
	\ge  \delta \langle \xi, v \rangle_\rho + o(\delta).
\]
Combining the two and by the definition of $F$, we can deduce \eqref{eq:pf-Fdiff-1} as long as we can show that 
\begin{equation}\label{eq:pf-Fdiff-2}
\frac{1}{2} \W(p, ( {\rm I_d} + \delta v)_\# \rho)^2 + o(\delta)
\le \frac{1}{2} \W(p, \rho)^2 -  \langle T_\rho^p - {\rm I_d}, \delta v \rangle_\rho .
\end{equation}

To show \eqref{eq:pf-Fdiff-2}, note that by Brenier Theorem (ii),
\[
\W(p, \rho)^2 = \int \| x - T_\rho^p(x) \|^2 \rho(x) dx = \| {\rm I_d}  - T_\rho^p \|_\rho^2.
\]
Thus,
\begin{align}
&\quad\  \frac{1}{2} \W(p, \rho)^2  -  \langle T_\rho^p - {\rm I_d}, \delta v \rangle_\rho \nonumber\\
& = \frac{1}{2} \| {\rm I_d}  - T_\rho^p \|_\rho^2 + \langle {\rm I_d} - T_\rho^p, \delta v \rangle_\rho  \nonumber \\
& =\frac{1}{2} \| ({\rm I_d}  + \delta v) - T_\rho^p  \|_\rho^2 - \frac{1}{2} \| \delta v \|_\rho^2. \label{eq:pf-Fidff-3}
\end{align}
Note that, because $v \in L^2(\rho)$,
\[
\| \delta v \|_\rho^2 = O(\delta^2)
\]
and 
\begin{align*}
   \|  ({\rm I_d}  + \delta v) - T_\rho^p  \|_\rho^2
& = \int_{\R^d}  \| ({\rm I_d}  + \delta v)(x) - T_\rho^p (x) \|^2 \rho(x) dx
\\ &\ge  \W (  ({\rm I_d}  + \delta v)_\# \rho , p)^2.
\end{align*}
Putting together, this gives that the r.h.s. of \eqref{eq:pf-Fidff-3} is greater than or equal to 
\[
\frac{1}{2}  \W (  ({\rm I_d}  + \delta v)_\# \rho , p)^2 + O(\delta^2) 
\]
which implies \eqref{eq:pf-Fdiff-2}.
\end{proof}

\subsection{Proofs of Lemmas}

\begin{proof}[Proof of Lemma \ref{lemma:NG-density}]
It suffices to show that for any $A$ s.t. $Leb(A) = 0$, $T_\# P (A) = 0$.
By the definition of push-forward, $T_\# P (A) = P( T^{-1} (A) )  $,
which is zero because $Leb( T^{-1} (A)  ) = 0$ (since $T$ is  non-degenerate)
and  $P \ll Leb$ .
\end{proof}
\begin{proof}[Proof of Lemma \ref{lemma:pn+1inP2r-exact-T-min}]
First, the minimizer makes the r.h.s. finite because $T = {\rm I_d}$ makes it finite:
When $T$ is identity, the r.h.s. equals $G(p_n) < \infty$.
As a result, $\tilde p:= T_\# p_n$ needs to have density because otherwise the KL divergence 
$G(\tilde p ) = +\infty$.

It remains to show that $M_2( \tilde p ) < \infty$. By definition,
\begin{align*}
M_2( \tilde p) 
& = \int_{\R^d} \| x \|^2 \tilde p(x) dx  \\
& = \E_{x \sim p_n} \| T (x) \|^2 \\
& \le 2 ( \E_{x \sim p_n} \| x \|^2 + \E_{x \sim p_n} \|  x- T (x) \|^2 ),
\end{align*}
where $ \E_{x \sim p_n} \| x \|^2  = M_2 (p_n) < \infty $, and, at the minimizer $T$, $\E_{x \sim p_n} \|  x- T (x) \|^2$ also needs to be finite
due to that it is in the 2nd term of  \eqref{eq:JKO-obj-1c}.
\end{proof}

\begin{proof}[Proof of Lemma \ref{lemma:Tp-in-P2r-new}]
We first show that $ T \in L^2(p)$ iff $T_\# p  \in \calP_2$.
This is because $M_2( T_\# p ) = \E_{x \sim p} \|T(x)\|^2$, and thus is finite iff. $T$ is in $L^2(p)$. 

As a result, when $T \in L^2(p)$, $T_\# p \in \calP_2$. 
If $T$ is also non-degenerate, Lemma \ref{lemma:NG-density} implies that $T_\# p$ also has density. 
This proves that $T_\# p \in \calP_2^r$.
\end{proof}

\section{Proofs and lemmas in Section \ref{sec:theory-foward}}\label{app:proof-sec-4}

\subsection{Technical lemmas in Section \ref{subsec:general-conditions}}\label{app:lemmas-sec-4.1}

\begin{lemma}\label{lem:H(rho)-convex-agg}
$\calH(\rho)$ is convex a.g.g. in $\calP_2$. 
\end{lemma}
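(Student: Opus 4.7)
The plan is to exploit the classical fact that the entropy functional is displacement convex (a result due to McCann), and to adapt the argument to the generalized geodesic between $\mu_1$ and $\mu_2$ based at $\nu$. The key identity is the change-of-variables formula under a pushforward by the gradient of a convex function, combined with the concavity of $A \mapsto \log \det A$ on the cone of symmetric positive semidefinite matrices.

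First, I would reduce to the nontrivial case. If either $\calH(\mu_1) = +\infty$ or $\calH(\mu_2) = +\infty$, the claimed inequality is automatic, so I may assume both $\mu_i \in \calP_2^r$ with densities $\rho_i$ and finite entropy. Since $\nu \in \calP_2^r$, Brenier's theorem (Theorem \ref{thm:brenier}) provides OT maps $T_\nu^i = \nabla \varphi_i$ with $\varphi_i$ convex, defined $\nu$-a.e. Setting $\psi_t := (1-t) \varphi_1 + t \varphi_2$, the interpolant $T_t := (1-t) T_\nu^1 + t T_\nu^2 = \nabla \psi_t$ is again the gradient of a convex function, so $\mu_t^{1 \to 2} = (T_t)_\# \nu$ is well-defined and, by standard convex-analytic Jacobian arguments, admits a density.

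Next, I would apply the Monge--Amp\`ere change of variables for gradients of convex maps: for each $t \in [0,1]$ and $\nu$-a.e.\ $x$, the Aleksandrov second derivative $D^2 \psi_t(x)$ exists and is symmetric positive semidefinite, and
\[
\calH(\mu_t^{1 \to 2}) = \calH(\nu) - \int_{\R^d} \log \det D^2 \psi_t(x) \, d\nu(x),
\]
with analogous formulas for $\calH(\mu_1)$ and $\calH(\mu_2)$ (using $D^2 \varphi_i$). Since $D^2 \psi_t = (1-t) D^2 \varphi_1 + t\, D^2 \varphi_2$ is a convex combination of symmetric PSD matrices, the concavity of $\log \det$ on this cone gives
\[
\log \det D^2 \psi_t(x) \ge (1-t) \log \det D^2 \varphi_1(x) + t \log \det D^2 \varphi_2(x)
\quad \text{for $\nu$-a.e.\ $x$.}
\]
Integrating this against $\nu$ and substituting into the three entropy identities yields $\calH(\mu_t^{1\to 2}) \le (1-t)\calH(\mu_1) + t \calH(\mu_2)$, which is the required $\lambda = 0$ convexity along the generalized geodesic.

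The main technical obstacle is making the change-of-variables step rigorous when $\varphi_1, \varphi_2$ are merely convex and the Aleksandrov Hessians $D^2\varphi_i$ may be degenerate on a set of positive $\nu$-measure. The standard way around this, which I would follow, is to invoke the Jacobian identity for Brenier maps (as developed in \cite{ambrosio2005gradient}, Section 5.5, or McCann's original displacement-convexity paper), which states that $(T_t)_\# \nu$ has density $f(x) / \det D^2\psi_t(x)$ on $\{\det D^2\psi_t > 0\}$ and that the change-of-variables formula is valid in the Aleksandrov sense. The nondegeneracy $\det D^2 \psi_t > 0$ $\nu$-a.e.\ follows from the finiteness of $\calH(\mu_t^{1\to 2})$ (or can be enforced by adding $\varepsilon |x|^2$ to both $\varphi_i$ and letting $\varepsilon \to 0$ with a monotone-convergence argument). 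Once this analytic machinery is set up, the remaining estimate via concavity of $\log \det$ is immediate.
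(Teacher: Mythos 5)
Your proposal is correct, but it takes a different route from the paper. The paper disposes of this lemma in one line by citing the general result on a.g.g.\ convexity of internal energies $\calF(\rho)=\int F(\rho(x))\,dx$ (Proposition 9.3.9 of \cite{ambrosio2005gradient}), for which one only checks that $F(s)=s\log s$ is proper, l.s.c., convex, and that $s\mapsto s^d F(s^{-d})=-d\log s$ is convex and non-increasing. You instead unpack that proposition and give the direct McCann-style argument specialized to the entropy: write $\calH(\mu_t^{1\to2})=\calH(\nu)-\int\log\det D^2\psi_t\,d\nu$ via the Aleksandrov-sense Monge--Amp\`ere change of variables for $\nabla\psi_t$ with $\psi_t=(1-t)\varphi_1+t\varphi_2$, and conclude from concavity of $\log\det$ on the PSD cone. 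This is exactly the mechanism behind the cited proposition (which, for general $F$, uses concavity of $\det^{1/d}$ together with the $s^dF(s^{-d})$ condition; for the entropy the two routes coincide), so your version is more self-contained and elementary at the cost of having to handle the Aleksandrov-regularity technicalities yourself, which you correctly identify and reference. One small slip: you say the nondegeneracy $\det D^2\psi_t>0$ $\nu$-a.e.\ ``follows from the finiteness of $\calH(\mu_t^{1\to2})$,'' which is circular since the absolute continuity of $\mu_t^{1\to2}$ is part of what you are establishing; the clean argument is that $\det D^2\varphi_i>0$ $\nu$-a.e.\ because $(\nabla\varphi_i)_\#\nu=\mu_i\ll dx$, and then Minkowski's determinant inequality $\bigl(\det D^2\psi_t\bigr)^{1/d}\ge(1-t)\bigl(\det D^2\varphi_1\bigr)^{1/d}+t\bigl(\det D^2\varphi_2\bigr)^{1/d}>0$ gives the nondegeneracy of the interpolant directly (your $\varepsilon\|x\|^2$-regularization fallback also works). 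With that correction the argument is complete.
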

\begin{proof}
The a.g.g.-convexity of functional in the form of 
$\calF(\rho) = \int F( \rho(x)) dx$ in $\calP_2$
is established in Proposition 9.3.9 of \cite{ambrosio2005gradient} when $F: [ 0, +\infty ) \to (-\infty, \infty]$ is a proper, lower semi-continuous convex function satisfying that 
$s \mapsto s^d F(s^{-d})$ is convex and non-increasing on $(0, +\infty)$.
The entropy $\calH(\rho) = \calF(\rho)$ with $F(s) = s\log s$, and this $F$ satisfies the above conditions. 
\end{proof}
\begin{lemma}
\label{lem:E(rho)-lambda-convex-agg}
Under Assumption \ref{assump:V-45}, $\calE(\rho)$ is    $\lambda$-convex a.g.g. in $\calP_2$. 
\end{lemma}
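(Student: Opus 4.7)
The plan is a direct computation using the definition of generalized geodesics and the $\lambda$-strong convexity of $V$ pointwise, then integrating against the base measure $\nu$ and using the pushforward formula.

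First I would unfold definitions. Fix $\nu \in \calP_2^r$ and $\mu_1, \mu_2 \in \calP_2$, and let $T_\nu^i$ denote the OT map from $\nu$ to $\mu_i$ (which exists uniquely $\nu$-a.e.\ by Theorem~\ref{thm:brenier}(i), since $\nu$ has a density). The generalized geodesic is $\mu_t^{1\to 2} = ((1-t)T_\nu^1 + t T_\nu^2)_\# \nu$, so by the change-of-variables formula for pushforwards,
\begin{equation*}
\calE(\mu_t^{1\to 2}) = c + \int_{\R^d} V\bigl((1-t) T_\nu^1(x) + t T_\nu^2(x)\bigr) \, d\nu(x).
\end{equation*}

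Next I would apply the $\lambda$-strong convexity of $V$ pointwise at each $x$ where both OT maps are defined:
\begin{equation*}
V\bigl((1-t) T_\nu^1(x) + t T_\nu^2(x)\bigr) \le (1-t) V(T_\nu^1(x)) + t V(T_\nu^2(x)) - \frac{\lambda}{2} t(1-t) \|T_\nu^1(x) - T_\nu^2(x)\|^2,
\end{equation*}
then integrate against $\nu$. By pushforward, $\int V \circ T_\nu^i \, d\nu = \int V \, d\mu_i$, and by the definition of $\calW_\nu^2$, $\int \|T_\nu^1 - T_\nu^2\|^2 d\nu = \calW_\nu^2(\mu_1, \mu_2)$. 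Adding back the constant $c$ gives the required inequality
\begin{equation*}
\calE(\mu_t^{1\to 2}) \le (1-t) \calE(\mu_1) + t \calE(\mu_2) - \frac{\lambda}{2} t(1-t) \calW_\nu^2(\mu_1, \mu_2).
\end{equation*}

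The only delicate point is to ensure the integrals above are well-defined and the inequality can be integrated (e.g.\ in the extended real sense). This is where I would use the boundedness of $V^- = \max\{-V, 0\}$ from Assumption~\ref{assump:V-45}: it guarantees that $\int V^- d\mu$ is finite for any probability measure $\mu$, so $\int V d\mu \in (-\infty, +\infty]$ is unambiguously defined and monotone convergence / Fatou-style arguments go through even when $V$ takes the value $+\infty$ on its effective domain. If any of $\calE(\mu_1), \calE(\mu_2)$ equals $+\infty$ the inequality is trivial, so we may assume both are finite and the integration is legitimate. I do not anticipate a genuine obstacle here; the entire argument is a one-line convexity inequality pushed through the base measure, and the assumption on $V^-$ is tailored precisely to rule out measurability/integrability pathologies.
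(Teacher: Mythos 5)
Your proof is correct and is essentially the paper's argument written out in full: the paper simply cites Proposition 9.3.2(i) of Ambrosio--Gigli--Savar\'e (noting that boundedness of $V^-$ supplies the growth condition needed there), and the proof of that proposition is exactly your computation --- pointwise $\lambda$-convexity of $V$ along the segment $(1-t)T_\nu^1(x)+tT_\nu^2(x)$, integrated against $\nu$ via the pushforward formula. Your handling of the integrability issues (finiteness of $\calW_\nu^2$ from $\mu_i\in\calP_2$, well-definedness of $\int V\,d\mu$ in $(-\infty,+\infty]$ from the bound on $V^-$) matches the role those hypotheses play in the cited result.
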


\begin{proof}
This is a direct result of Proposition 9.3.2(i) of \cite{ambrosio2005gradient}, noting that assuming the boundedness of $V^-$ implies the growth condition needed in Section 9.3 therein.
The proof of  Proposition 9.3.2(i) shows that $\calE(\rho)$ is $\lambda$-convex along any interpolation curve which implies $\lambda$-convexity a.g.g.
\end{proof}

\begin{lemma}\label{lemma:KL-satisfies-assump}
Under Assumptions \ref{assump:V-45}, 
the KL divergence $G(\rho)$ defined in \eqref{eq:def-KL-G} satisfies Assumption \ref{assump:genera-G}.
\end{lemma}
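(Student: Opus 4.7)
The plan is to verify each of the three requirements of Assumption \ref{assump:genera-G} for $G(\rho) = \calH(\rho) + \calE(\rho)$ by treating the entropy and potential-energy parts separately and combining them.

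First, for lower semi-continuity and the domain condition, I would argue as follows. The entropy $\calH(\rho) = \int \rho \log \rho$ is well-known to be lower semi-continuous on $\calP_2$ (with respect to narrow / $\W$-convergence) and takes the value $+\infty$ on any $\rho$ that does not admit a density with respect to Lebesgue measure; this already forces ${\rm Dom}(G) \subset {\rm Dom}(\calH) \subset \calP_2^r$. The potential part $\calE(\rho) = c + \int V\, d\rho$ is lower semi-continuous on $\calP_2$ under Assumption \ref{assump:V-45}: $V$ is proper and lower semi-continuous and $V^-$ is bounded, so $\int V\, d\rho$ is bounded below and is lsc against narrow convergence (this is the standard lower-semicontinuity for integral functionals with lsc integrand bounded from below, see e.g.\ Section 5.1 of \cite{ambrosio2005gradient}). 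Adding the two, $G$ is lsc and its domain is contained in $\calP_2^r$.

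Second, for the $\lambda$-convexity a.g.g., I would simply add Lemma \ref{lem:H(rho)-convex-agg} and Lemma \ref{lem:E(rho)-lambda-convex-agg}. Along any generalized geodesic $\mu_t^{1\to 2}$ with base $\nu \in \calP_2^r$, we have
\begin{align*}
\calH(\mu_t^{1\to 2}) &\le (1-t)\calH(\mu_1) + t\calH(\mu_2), \\
\calE(\mu_t^{1\to 2}) &\le (1-t)\calE(\mu_1) + t\calE(\mu_2) - \tfrac{\lambda}{2} t(1-t)\, \calW_\nu^2(\mu_1,\mu_2),
\end{align*}
and summing these two inequalities yields exactly the $\lambda$-convexity a.g.g.\ of $G$. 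One should also check that there is no ambiguity in the sum: on $\{G = +\infty\}$ at least one of the two summands is $+\infty$ and the inequalities remain valid with the usual conventions, so the combined bound holds on all of $\calP_2$.

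The argument is essentially a bookkeeping exercise built on the two prior lemmas, so there is no real obstacle; the only subtlety worth noting is the finiteness/compatibility check when combining $\calH$ and $\calE$ (in particular, that $\calE(\mu_t^{1\to 2})$ is well defined thanks to $V^-$ being bounded and the interpolants remaining in $\calP_2$, and that $G$ is genuinely $(-\infty,+\infty]$-valued rather than $\pm\infty$). Once that is observed, the three parts of Assumption \ref{assump:genera-G} follow directly.
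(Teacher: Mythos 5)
Your proposal is correct and follows essentially the same route as the paper's proof: lower semi-continuity from that of $\calH$ together with the conditions on $V$, the domain restriction to $\calP_2^r$ because $\calH$ diverges off densities, and the $\lambda$-convexity a.g.g.\ obtained by summing Lemma \ref{lem:H(rho)-convex-agg} and Lemma \ref{lem:E(rho)-lambda-convex-agg}. Your additional bookkeeping about the sum being well defined on $\{G=+\infty\}$ is a harmless elaboration of what the paper leaves implicit.
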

\begin{proof}
The lower semi-continuity follows from that of $\calH(\rho)$ and the condition on $V$ in Assumption \ref{assump:V-45}.
The domain of $G$ is restricted to $\rho$ with density because $\calH(\rho)$ diverges otherwise.
The a.g.g. $\lambda$-convexity of $G$
    directly follows from Lemma \ref{lem:H(rho)-convex-agg} and Lemma \ref{lem:E(rho)-lambda-convex-agg}.
\end{proof}

\subsection{Proofs in Section \ref{subsec:convergence-forward}}

\begin{proof}[Proof of Lemma \ref{lemma:mono-G}]
The unique existences of $T_p^\rho$ and $T_p^\pi$ are by Brenier Theorem.
Since $\rho \in \calP_2^r$, the map $T_p^\rho$ has an inverse denoted by $T_\rho^p$ which is defined $\rho$-a.e. 
Under Assumption \ref{assump:genera-G} first part, the strong subdifferential of $\partial_{\W} G (\rho)$  is well-defined, and we assume $\eta$ is one of them.

Let $v: = T_p^\pi\circ T_\rho^p - {\rm I_d}$.
One can verify that  $v \in L^2( \rho)$, 
since $\|T_p^\pi\circ T_\rho^p \|_\rho^2 = M_2(\pi)$,
$\| {\rm I_d} \|_\rho^2 = M_2(\rho)$,
and both are finite.
By definition, for $\delta \in [0,1]$, 
\begin{align}\label{eq:pf-mono-1}
    \pa{ {\rm I_d} +\delta v}_\# \rho 
& = ( {\rm I_d} +\delta (T_p^\pi\circ T_\rho^p - {\rm I_d}) )_\# \rho \notag
\\ & = ( T_p^\rho + \delta (T_p^\pi - T_p^\rho) )_\# p.
\end{align}
We also have 
\begin{align}\label{eq:pf-mono-2}
    \an{\eta, v }_\rho 
    = \an{\eta \circ {T_p^\rho}, T_p^\pi - T_p^\rho }_p. 
\end{align}

Since $v \in L^2( \rho)$, by that $\eta \in \partial_{\W} G( \rho)$ and the definition of strong subdifferential (Definition \ref{def:strong-subdiff}),
with $\delta \to 0+$ we have
\begin{align}\label{eq:xi_phi}
G \pa{\pa{ {\rm I_d} +\delta v}_\# \rho } 
\geq G( \rho ) + \delta\an{\eta, v }_\rho + o(\delta).
\end{align}
Combined with \eqref{eq:pf-mono-1}\eqref{eq:pf-mono-2}, this gives 
\begin{align}\label{eq:pf-mono-3}
& G \pa{ ( T_p^\rho + \delta (T_p^\pi - T_p^\rho) )_\# p }  - G( \rho ) \nonumber 
\\ &\geq \delta \an{\eta \circ {T_p^\rho}, T_p^\pi - T_p^\rho }_p + o(\delta).
\end{align}

Meanwhile, by the $\lambda$-convexity of $G$ a.g.g. (Definition \ref{def:convex-agg}), and specifically \eqref{eq:convex-agg-2}, we have
\begin{align}\label{eq:pf-mono-4}
& G \pa{ ( T_p^\rho + \delta (T_p^\pi - T_p^\rho) )_\# p } \nonumber \\ 
& \leq  (1-\delta) G( \rho )  + \delta G( \pi ) 
     - \frac{\lambda}{2}\delta(1-\delta)\W( \rho, \pi)^2.
\end{align}
Comparing \eqref{eq:pf-mono-3} and \eqref{eq:pf-mono-4}, we have 
\begin{align*}
    G(\pi) - G (\rho) 
    & \geq  
    \an{\eta \circ {T_p^\rho}, T_p^\pi - T_p^\rho }_p
    \\&\quad + \frac{\lambda}{2}(1-\delta)\W( \rho, \pi )^2 + o(1).
\end{align*}
We get the conclusion by letting $\delta \to 0+$.
\end{proof}

\begin{proof}[Proof of Lemma \ref{lemma:evi}]
In the $n$-th step, 
$T_{n+1}$ is in $L^2(p_n)$ and is non-degenerate under Assumption \ref{assump:1st-order-condition-error},
thus from $p_n \in \calP_2^r$,  
$p_{n+1} = (T_{n+1})_\# p_n$  is also in $\calP_2^r$ by Lemma \ref{lemma:Tp-in-P2r-new}.
This holds for $n=0,\cdots, N-1$, and thus all $p_n$ are in $\calP_2^r$,
where $p_0\in \calP_2^r$ is by the lemma assumption. 
By the Brenier Theorem, the OT map from $p_n$ to $p_{n+1}$ is denoted as $T_n^{n+1}$, which is uniquely defined $p_n$-a.e.
Let $T_{n+1}^n$ be the OT map from $p_{n+1}$ to $p_n$, and it is also the $p_{n+1}$-a.e. inverse of $T_{n}^{n+1}$.
We use the short-hand notation
\[
X_{n+1} := T_{n}^{n+1}.
\]

Under the assumption on $G$, Lemma \ref{lemma:ot_F} applies which gives the relationship between $\partial_{\W} F_{n+1}$ and $\partial_{\W} G$.
Together with the assumption on $\xi_{n+1}$ by Assumption \ref{assump:1st-order-condition-error}, we have that for each $n$, $\exists \eta_{n+1} \in \partial_{\W}G(p_{n+1})$ s.t.
\[
\gamma \xi_{n+1} - \gamma \eta_{n+1} =  {\rm I_d}  - T_{n+1}^n,  \quad p_{n+1}\text{-a.e.}
\]
and equivalently,
\begin{equation}\label{eq:pf-evi-1}
 {\rm I_d} - X_{n+1}  =  \gamma ( \eta_{n+1} -  \xi_{n+1} ) \circ X_{n+1}  . \quad p_{n}\text{-a.e.}
\end{equation}

Denote by $T_n^\pi$ the unique OT map from $p_n$ to $\pi$. 
Expanding $\ve{X_{n+1}-T_n^\pi}_{p_n}^2$ as 
\begin{align*}
&\quad\  \| X_{n+1} - T_n^\pi \|_{p_n}^2
\\& =  \| ( {\rm I_d} - X_{n+1}) - ( {\rm I_d} - T_n^\pi )\|_{p_n}^2 \\
& =  \|  {\rm I_d} - T_n^\pi \|_{p_n}^2 
	- 2 \langle {\rm I_d} - T_n^\pi  ,   {\rm I_d} - X_{n+1}  \rangle_{p_n}
	\\ &\quad + \|  {\rm I_d} - X_{n+1} \|_{p_n}^2  \\
& =  \|  {\rm I_d} - T_n^\pi \|_{p_n}^2 
	- 2 \langle X_{n+1} - T_n^\pi   ,   {\rm I_d} - X_{n+1}  \rangle_{p_n}
	\\ &\quad - \|  {\rm I_d} - X_{n+1} \|_{p_n}^2	 \\
& \le 	\|  {\rm I_d} - T_n^\pi \|_{p_n}^2 
	- 2 \langle X_{n+1} - T_n^\pi   ,   {\rm I_d} - X_{n+1}  \rangle_{p_n},
\end{align*}
where in the last inequality we use that $\|  {\rm I_d} - X_{n+1} \|_{p_n}^2 \ge 0$.
By that 
\[
\|  {\rm I_d} - T_n^\pi \|_{p_n}^2  = \W( p_n, \pi)^2, 
\]
and together with \eqref{eq:pf-evi-1},  
we have 
\begin{align}
 &   \| X_{n+1} - T_n^\pi \|_{p_n}^2 
 \le  \W( p_n, \pi)^2 
  \nonumber
 \\ &
 ~~~
 - 2  \gamma \langle X_{n+1} - T_n^\pi   ,   ( \eta_{n+1} -  \xi_{n+1} ) \circ X_{n+1}   \rangle_{p_n}.\label{eq:pf-evi-3}
\end{align}

Applying Lemma \ref{lemma:mono-G} with $p = p_n$ and $\rho = p_{n+1}$, we have 
\begin{align}
& G(\pi) - G(p_{n+1}) \nonumber \\ & \geq  
	\langle T_n^\pi - X_{n+1} ,  \eta_{n+1} \circ X_{n+1} \rangle_{p_n} 
	+ \frac{\lambda}{2}\W(p_{n+1}, \pi )^2. \label{eq:pf-evi-4}
\end{align}
Meanwhile, by Cauchy Schwartz,
\begin{align*}
& | \langle X_{n+1} - T_n^\pi   ,   \xi_{n+1}  \circ X_{n+1}   \rangle_{p_n} |\\
& \le  \| X_{n+1} - T_n^\pi  \|_{p_n}  \| \xi_{n+1}  \circ X_{n+1} \|_{p_n}  \nonumber \\
& \le \varepsilon \| X_{n+1} - T_n^\pi  \|_{p_n} \nonumber 
\end{align*}
where the 2nd inequality is by that 
$\| \xi_{n+1}  \circ X_{n+1} \|_{p_n} = \| \xi_{n+1}  \|_{p_{n+1}}  \le \varepsilon $ 
(Assumption \ref{assump:1st-order-condition-error}).
Since $\lambda > 0$, we have 
\begin{equation}\label{eq:pf-evi-CS}
\varepsilon \| X_{n+1} - T_n^\pi  \|_{p_n} 
\le \frac{\varepsilon^2}{\lambda} + \frac{\lambda}{4} \| X_{n+1} - T_n^\pi  \|_{p_n}^2.
\end{equation}
Putting together, this gives 
\begin{equation} \label{eq:pf-evi-5}
| \langle X_{n+1} - T_n^\pi   ,   \xi_{n+1}  \circ X_{n+1}   \rangle_{p_n} | 
 \le\frac{\varepsilon^2}{\lambda} + \frac{\lambda}{4} \| X_{n+1} - T_n^\pi  \|_{p_n}^2.
\end{equation}
Inserting \eqref{eq:pf-evi-4}\eqref{eq:pf-evi-5} into \eqref{eq:pf-evi-3} gives 
\begin{align}\label{eq:pf-evi-6}
& \ (1-\frac{ \gamma \lambda}{2} ) \| X_{n+1} - T_n^\pi \|_{p_n}^2\nonumber \\
   &\le  \W( p_n, \pi)^2\nonumber	+ 2  \gamma 
	   \left( G(\pi) - G(p_{n+1}) - \frac{\lambda}{2}\W(p_{n+1}, \pi )^2 \right)
 	\nonumber \\ & \quad +  \frac{2 \gamma}{\lambda}\varepsilon^2.
\end{align}
Because $( X_{n+1}, T_n^\pi )_\# p_{n}$ is a coupling between $p_{n+1}$ and $\pi$, we have
\begin{equation}\label{eq:pf-evi-2}
    \W (p_{n+1}, \pi)^2  \leq \ve{X_{n+1}-T_n^\pi}_{p_n}^2.
\end{equation}
Under the condition of the lemma, $ 0 <\gamma  \lambda  < 2$, 
and thus $1-\frac{ \gamma \lambda}{2}  >0$ and
then the l.h.s. of \eqref{eq:pf-evi-6} $\ge  (1-\frac{ \gamma \lambda}{2} )  \W(p_{n+1}, \pi )^2$.
This proves \eqref{eq:lemma-evi}.
\end{proof}

\begin{proof}[Proof of Theorem \ref{thm:N-step-forward-no-inv-error}]
Taking $\pi = q$ and apply Lemma~\ref{lemma:evi}, by that 
$2 \gamma \pa{ G(p_{n+1}) - G(\pi)} \ge 0$, 
\eqref{eq:lemma-evi} gives that for all $n$, 
\begin{equation}\label{eq:pf-thm-convegence-1}
   \left( 1+\frac{ \gamma \lambda }{2}\right) \W^2(p_{n+1}, q )
   \leq 
   \W^2(p_n, q ) 
   + \frac{2 \gamma }{\lambda} \varepsilon^2.
\end{equation}
Define the numbers $\rho$ and $\alpha$ as 
\[
\rho: =   \left( 1+ \frac{ \gamma \lambda }{2}\right)^{-1}, \quad 0 < \rho < 1,
\quad \alpha := \sqrt{\frac{2 \gamma }{\lambda}} \varepsilon,
\]
and define
\[
E_n: = \W(p_{n}, q )^2, 
\]
then \eqref{eq:pf-thm-convegence-1} can be written as
\[
 E_{n+1} \le \rho( E_n + \alpha).
\]
Recursively applying from 0 to $n-1$ gives that 
\[
E_n 
\le \rho^n E_0 + \alpha \frac{\rho (1-\rho^n)}{1-\rho}
\le\rho^n E_0 + \alpha \frac{\rho}{1-\rho},
\]
which by definition is equivalent to \eqref{eq:thm-exp-rate}.

By \eqref{eq:thm-exp-rate}, one will have $\W(p_{n}, q )^2 \le 5 \varepsilon^2/\lambda^2$ if 
\[
 \left( 1+ \frac{\gamma \lambda}{2} \right)^{-n} \W^2(p_0, q)  \le  \frac{\varepsilon^2}{\lambda^2},
\]
which is fulfilled as long as
\[
n \ge  \frac{2 \left( \log  \W(p_0, q)  + \log ({\lambda}/{\varepsilon})  \right)}{\log( 1+ \gamma \lambda/2)}.
\]
This requirement of $n$ is satisfied under \eqref{eq:thm-exp-n-threshold} by that $ 0 < \gamma \lambda < 2$ 
and the elementary relation that 
$\log (1+x) \ge x/2$ for $x \in (0,1)$.
We have proved the $\W$-error bound.

To show the smallness of the objective gap $G(p_n) - G(q)$, we use \eqref{eq:lemma-evi} again,
 and by that 
$\W^2(p_{n+1}, q )  \ge 0$, 
\begin{equation}
  2\gamma \pa{ G(p_{n+1}) - G(q)}
   \leq 
   \W^2(p_n, q) 
   + \frac{2 \gamma }{\lambda} \varepsilon^2.
\end{equation}
When $n$ already makes $\W(p_{n}, q )^2 \le 5 \varepsilon^2/\lambda^2$, we have
\begin{equation}
  2\gamma \pa{ G(p_{n+1}) - G(q)}
   \leq 
   ( 5 + 2 \gamma \lambda  ) \frac{\varepsilon^2}{\lambda^2}
   \leq 
   9 \frac{\varepsilon^2}{\lambda^2},
\end{equation}
where in the 2nd inequality we use that $\gamma \lambda < 2 $
because $ 0< \lambda \le 1$ and $ 0 < \gamma < 2$.
This proves the bound of $G(p_n) - G(q)$ in \eqref{eq:thm-exp-bound}.
\end{proof}

\section{Proofs in Section \ref{sec:theory-reverse}}\label{app:proofs-sec5}

\subsection{Proofs in Section \ref{subsubsec:bwd-no-inv-P2r}}

\begin{proof}[Proof of Lemma \ref{lemma:sym-KL}]
Let $X_1 \sim p$, $X_2 \sim q$, and 
\[
Y_1 = T(X_1), \quad Y_2 = T(X_2).
\]
Then $Y_1$ and $Y_2$ also have densities, $Y_1 \sim \tilde p:= T_\# p$ and $Y_2 \sim \tilde q := T_\# q$.
By the data processing inequality 
concerning two probability distributions through the same stochastic transformation for the KL divergence
(see, e.g., the introduction of \cite{raginsky2016strong}),
\[
{\rm KL}( \tilde p || \tilde q) \le {\rm KL}( p || q).
\]
In the other direction, $X_i = T^{-1} (Y_i)$, $i=1,2$, then data processing inequality also implies 
\[
{\rm KL}( p || q) \le {\rm KL}( \tilde p || \tilde q).
\]
\end{proof}

\begin{proof}[Proof of Corollary \ref{cor:KL-P-P2r}]
Under Assumption \ref{assump:V-45}, 
the KL divergence $G(\rho)$ satisfies Assumption \ref{assump:genera-G} (Lemma \ref{lemma:KL-satisfies-assump}),
also $q \in \calP_2^r$ and $G(q) = 0$ is the global minimum of $G$. 
The needed assumptions of Theorem \ref{thm:N-step-forward-no-inv-error} are all satisfied,
by which we have that
for the $N$ defined in the corollary, 
\begin{equation}\label{eq:bound-GpN-pf}
 {\rm KL}( p_N || q)  = 
G(p_{N}) \le  \frac{9}{2 \gamma } \left(\frac{\varepsilon}{\lambda} \right)^2.
\end{equation}

Under Assumption \ref{assump:1st-order-condition-error}, 
$T_n$ are all invertible, and thus
$T_1^N$ as defined in \eqref{eq:def-T1N} is invertible.
In addition, by Definition \ref{eq:def-ND}, one can verify that if $T_1$ and $T_2$ are non-degenerate, then so is $T_2\circ T_1$. 
Using the arguments for $N-1$ times, we have that $T_1^N$ is non-degenerate. 
Similarly, since each $T_n^{-1}$ is non-degenerate, we have that $(T_1^N)^{-1}$ is also non-degenerate.
Now, $p_0 = p $ has density, and then $p_N = (T_1^N)_\# p_0$ also has density (Lemma \ref{lemma:NG-density}).
Meanwhile, $q_N = q$ has density (Assumption \ref{assump:V-45}), then $q_0 = ((T_1^N)^{-1})_\# q_N$ also has density.
We now have that 
$p_0$, $q_0$, $p_N = (T_1^N)_\# p_0$ and $q_N = (T_1^N)_\# q_0$ all have densities.
Then Lemma \ref{lemma:sym-KL} gives that 
\[
 {\rm KL}( p_0 || q_0) = {\rm KL}( p_N || q_N)  ={\rm KL}( p_N || q), 
 \] 
 which, by \eqref{eq:bound-GpN-pf}, is bounded as stated in the corollary.
The TV bound is followed by Pinsker's inequality.
\end{proof}

\subsection{Proofs in Section \ref{subsubsec:P-P2-short-diffusion}}

\begin{lemma}[$\rho_\delta$ and $\W$ closeness]\label{lemma:rho-delta-W2}
Suppose $P \in \calP_2$, 
and $\rho_t$ is the density of $X_t$ in an OU process as in \eqref{eq:OU-SDE},
then 

(i) $\rho_t \in \calP_2^r$ for any $t>0$, 

(ii)  $\forall \varepsilon > 0$, 
$\exists \delta > 0$ s.t. $\W(\rho_\delta, P) <\varepsilon$.
In this case, one can choose $\delta \sim \varepsilon^2$.
\end{lemma}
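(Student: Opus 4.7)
The plan is to exploit the explicit solution of the OU process \eqref{eq:OU-SDE}, which reads
\[
X_t = e^{-t} X_0 + \sqrt{2}\int_0^t e^{-(t-s)}\, dW_s,
\]
where the stochastic integral $Z_t := \sqrt{2}\int_0^t e^{-(t-s)} dW_s$ is independent of $X_0$ and is a centered Gaussian with covariance $(1-e^{-2t}) I_d$. This decomposition is the workhorse for both parts.

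For part (i), I would first argue the density by noting that the law of $X_t$ is the convolution of the law of $e^{-t} X_0$ (which pushes $P$ forward by a scaling) with a nondegenerate Gaussian $\calN(0, (1-e^{-2t}) I_d)$; convolution with a nondegenerate Gaussian kernel always produces a density with respect to Lebesgue measure, so $\rho_t \ll dx$ for every $t>0$. The second moment bound is then elementary: using $\|a+b\|^2 \le 2\|a\|^2 + 2\|b\|^2$ and independence,
\[
M_2(\rho_t) = \E\|X_t\|^2 \le 2 e^{-2t} M_2(P) + 2 d(1-e^{-2t}) < \infty,
\]
so $\rho_t \in \calP_2^r$.

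For part (ii), I would use the pair $(X_0, X_t)$ itself as a coupling between $P$ and $\rho_t$; by the definition of $\W^2$ as an infimum over couplings,
\[
\W^2(\rho_t, P) \le \E\|X_t - X_0\|^2 = \E\bigl\|(e^{-t}-1) X_0 + Z_t\bigr\|^2 = (1-e^{-t})^2 M_2(P) + d(1-e^{-2t}),
\]
again invoking independence of $X_0$ and $Z_t$. As $t \to 0^+$, the right-hand side is $O(t^2) + O(t) = O(t)$, so $\W(\rho_t, P) \lesssim \sqrt{t}$; choosing $\delta = c\,\varepsilon^2$ for a small enough constant $c$ (depending only on $M_2(P)$ and $d$) yields $\W(\rho_\delta, P) < \varepsilon$, which gives the claimed scaling $\delta \sim \varepsilon^2$.

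I do not anticipate any serious obstacle here: the argument is essentially a two-line computation once one writes down the explicit OU solution and couples $X_0$ with $X_t$ on the same probability space. The only mild care needed is to justify that convolution with a nondegenerate Gaussian produces an $L^1$ density regardless of whether $P$ itself has a density, which follows from Fubini since the Gaussian kernel is bounded and integrable.
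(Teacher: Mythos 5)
Your proposal is correct and follows essentially the same route as the paper: the explicit decomposition $X_t = e^{-t}X_0 + \sigma_t Z$ with $\sigma_t^2 = 1-e^{-2t}$, the Gaussian-convolution argument for the density and the second-moment bound for (i), and the coupling $(X_0, X_t)$ giving $\W^2(\rho_t,P) \le (1-e^{-t})^2 M_2(P) + d(1-e^{-2t}) = O(t)$ for (ii). The only cosmetic difference is that you bound the second moment via $\|a+b\|^2 \le 2\|a\|^2+2\|b\|^2$ where the paper uses the exact independence identity; both are fine.
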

\begin{proof}[Proof of Lemma \ref{lemma:rho-delta-W2}]
For the OU process, we have $V(x) = \|x\|^2/2$ in \eqref{eq:diffusion-sde-2}.
Then for any $t>0$, $\rho_t = {\calL}_t (P)$ has the expression as, with $\sigma_t^2 := 1-e^{-2 t}$,  
\begin{equation}
\rho_t(x) =  \int_{\R^d}  \frac{1}{(2\pi \sigma_t^2)^{d/2}} e^{- \| x - e^{-t} y \|^2/(2 \sigma_t^2)}  dP(y).
\end{equation}
Equivalently, $\rho_t$ is the probability density of the random vector
\[
Z_t: = e^{-t} X_0 + \sigma_t Z, \quad Z \sim \calN(0,I_d), 
\]
where  $Z$ is independent from $X_0$.
Since $\E \| Z_t \|^2 = e^{-2t} M_2(P) + \sigma_t^2 d < \infty$, we have $\rho_t \in \calP_2$ and this proves (i).

To prove (ii):
Because the law of $(Z_t, X_0)$ is a coupling of $\rho_t$ and $P$,
\begin{align*}
\W(\rho_t, P )^2
& \le \E \| Z_t - X_0  \|^2 \\
& = \E \| (e^{-t} -1)X_0 + \sigma_t Z  \|^2 \\
& = (1- e^{-t} )^2 M_2(P) +  (1-e^{-2t}) d \\
& \le t^2 M_2(P) + 2t  d, 
\end{align*}
where in the last inequality we used that 
$1- e^{-x} \le x$, $\forall x \ge 0$.
Since $M_2(P)  < \infty$, we have bounded $\W(\rho_t, P )^2$ to be $O(t)$.
\end{proof}

\begin{proof}[Proof of Corollary \ref{cor:KL-mixed-P2}]
For the $\varepsilon$ in Assumption \ref{assump:1st-order-condition-error},
the existence of $\delta$ to make $\W(\rho_\delta, P) <\varepsilon$ is by Lemma \ref{lemma:rho-delta-W2},
and we also have $\rho_\delta \in \calP_2^r$.
The rest of the proof is the same as in Corollary \ref{cor:KL-P-P2r} by starting from $p_0 = \rho_\delta$.
\end{proof}

\subsection{Proofs and lemmas in Section \ref{subsec:theory-with-inv}}\label{app:proofs-sec5.2}

\begin{lemma}[Lipschitz bound of ODE solution map]
\label{lemma:ode-lip}
Suppose for $\gamma > 0$,
 $\hat{v}(x,t)$ is $C^1$ in $(x,t)$ 
  and Lipschitz in $x$ uniformly on $\R^d \times [0, \gamma]$  with Lipschitz constant $K \ge 0$.
 Let $x(t)$ be the solution to the ODE
 \begin{equation}\label{eq:ode-0-gamma}
 \dot{x}(t) = \hat v( x(t), t), \quad t \in [0,\gamma], 
\end{equation}
and  define the solution map from 0 to $\gamma$ as $T: \R^d \to \R^d$, that is,
\begin{equation}
    T(x_0) = x_0 + \int_{0}^{\gamma} \hat v( x(t), t) dt, \quad x(0) = x_0.
\end{equation}
Then $T$ is invertible on $\R^d$,
and both $T$ and $T^{-1}$ are Lipschitz on $\R^d$ with Lipschitz constant $e^{ \gamma K}$.
\end{lemma}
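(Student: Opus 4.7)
The plan is to prove both invertibility and the Lipschitz bound via the standard Gr\"onwall-type comparison argument for ODEs with a globally Lipschitz velocity field, then obtain the inverse bound by running the same argument on the time-reversed ODE.

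First, I would verify that $T$ is well-defined on all of $\R^d$. Since $\hat v$ is $C^1$ and uniformly $K$-Lipschitz in $x$ on $\R^d\times[0,\gamma]$, the Picard--Lindel\"of theorem guarantees a unique local solution of \eqref{eq:ode-0-gamma} for each initial condition $x_0\in\R^d$, and the uniform Lipschitz bound prevents finite-time blow-up, so the solution extends to all of $[0,\gamma]$. Thus $T(x_0):=x(\gamma)$ is defined for every $x_0\in\R^d$. For invertibility, I would consider, for any $y_0\in\R^d$, the terminal value problem $\dot y(t)=\hat v(y(t),t)$ on $[0,\gamma]$ with $y(\gamma)=y_0$; equivalently, setting $\tilde y(s):=y(\gamma-s)$, this becomes a forward ODE $\dot{\tilde y}(s)=-\hat v(\tilde y(s),\gamma-s)$ with $\tilde y(0)=y_0$, whose right-hand side is again $C^1$ and $K$-Lipschitz in the state variable. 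By the same global existence argument, $\tilde y$ is defined on $[0,\gamma]$, and $T^{-1}(y_0):=\tilde y(\gamma)=y(0)$ is a two-sided inverse of $T$ by uniqueness.

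Next, for the Lipschitz constant of $T$, I would take two initial conditions $x_0,x_0'\in\R^d$ with corresponding trajectories $x(t),x'(t)$, set $\Delta(t):=\|x(t)-x'(t)\|^2$, and differentiate:
\begin{equation*}
\tfrac{d}{dt}\Delta(t) = 2\langle x(t)-x'(t),\,\hat v(x(t),t)-\hat v(x'(t),t)\rangle \le 2K\,\Delta(t),
\end{equation*}
where the inequality follows from Cauchy--Schwarz combined with the $K$-Lipschitz bound on $\hat v(\cdot,t)$. Gr\"onwall's inequality then yields $\Delta(\gamma)\le e^{2K\gamma}\Delta(0)$, i.e.\ $\|T(x_0)-T(x_0')\|\le e^{\gamma K}\|x_0-x_0'\|$.

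Finally, for $T^{-1}$, I would repeat the argument on the reversed-time flow $\dot{\tilde y}(s)=-\hat v(\tilde y(s),\gamma-s)$. The right-hand side still satisfies the same uniform $x$-Lipschitz bound $K$, so the identical Gr\"onwall computation gives $\|T^{-1}(y_0)-T^{-1}(y_0')\|\le e^{\gamma K}\|y_0-y_0'\|$. There is no real obstacle here; the only subtlety worth stating carefully is the global-in-$x$ Lipschitz hypothesis, which is what allows both global existence of trajectories (hence well-definedness of $T$ and $T^{-1}$ on all of $\R^d$) and the clean Gr\"onwall bound without any truncation or local-in-space caveats.
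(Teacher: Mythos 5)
Your proposal is correct and follows essentially the same route as the paper's proof: a Gr\"onwall estimate on the squared distance between two trajectories to get the $e^{\gamma K}$ Lipschitz bound, with invertibility and the bound for $T^{-1}$ obtained from the time-reversed ODE, whose velocity field satisfies the same $K$-Lipschitz condition. Your added detail on global existence via the uniform Lipschitz hypothesis is a sound elaboration of the well-posedness step the paper handles by citation.
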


\begin{proof}[Proof of Lemma \ref{lemma:ode-lip}]
Let $x_1(t)$ and $x_2(t)$ be the solution to the ODE \eqref{eq:ode-0-gamma}
from $x_1(0) = y$, and $x_2(0) = z$ respectively. 
By definition, 
\[
T(y) = x_1( \gamma), \quad T(z) = x_2( \gamma).
\]
Under the condition of $\hat v$, the ODE is well-posed \cite{Sideris2013OrdinaryDE}.
This implies the invertibility of $T$, and $T^{-1}$ is the solution map of the reverse time ODE from $t= \gamma$ to $t=0$. 

We now prove the Lipschitz constant of $T$ on $\R^d$, and that of $T^{-1}$ can be proved similarly by considering the reverse time ODE.
We want to show that 
\[
\| T(y) - T(z) \|  \le e^{\gamma K} \| y - z \|, \quad \forall y, z \in \R^d,
\]
and this is equivalent to that for any $x_1(0), \, x_2(0) \in \R^d$, 
\begin{equation}\label{eq:pf-Lip-ode-1}
\| x_1 (\gamma) - x_2 (\gamma ) \|  \le e^{\gamma K}  \|  x_1(0) - x_2 (0) \|.
\end{equation}
For fixed $x_1(0), \, x_2(0)$, define 
\[
E(t) := \frac{1}{2} \|  x_1(t) - x_2(t)\|^2, 
\]
then $E(0) =  \|  x_1(0) - x_2 (0) \|^2/2$, and
\begin{align*}
\dot{E}(t) 
& = (x_1(t) - x_2(t))^T ( \dot{x}_1(t) - \dot{x}_2(t) ) \\
& = (x_1(t) - x_2(t))^T ( \hat v({x}_1(t),t) - \hat v( {x}_2(t), t) ).
\end{align*}
Thus, by that $\|  \hat v({x}_1(t),t) - \hat v( {x}_2(t), t) \| \le K  \| x_1(t) - x_2(t) \|$, we have
\[
\dot{E}(t)  
\le K \| x_1(t) - x_2(t) \|^2  =2  K E(t).
\]
By Gr\"onwall's inequality, $E(t) \le E(0) e^{ 2K t}$, and this gives
\[
 \|  x_1(t) - x_2(t)\|^2 \le e^{2K t} \|  x_1(0) - x_2 (0) \|^2, \quad t \in [0,\gamma].
\]
Setting $t = \gamma $ proves \eqref{eq:pf-Lip-ode-1}.
\end{proof}

\begin{lemma}\label{lemma:Lip-is-L2}
Suppose $p \in \calP_2^r$, and $T:\R^d \to \R^d$ is Lipschitz on $\R^d$,
then $T \in L^2(p)$. 
\end{lemma}
\begin{proof}
We are to show that
\[
 \E_{x \sim p} \| T (x) \|^2 < \infty.
\]
Suppose $T$ is $L$-Lipschitz on $\R^d$, then $\forall x\in \R^d$, 
\[
\| T(x) \| 
\le \| T(0) \| + \|T(x) - T(0) \|
\le \| T(0) \| + L \|x \|.
\]
Thus,
\begin{align*}
 \E_{x \sim p} \| T (x) \|^2
 & \le 2(   \| T(0) \|^2 + L^2 \E_{x \sim p} \|x \|^2)
 \\ &= 2(   \| T(0) \|^2 + L^2 M_2 (p) ) < \infty,   
\end{align*}
because $M_2(p) <\infty$. 
\end{proof}

\begin{proof}[Proof of Proposition \ref{prop:w2-q0}]
By construction, for $n=1, \cdots, N$, 
\[
q_{n-1} = (T_n^{-1} )_\# q_n, \quad
\tilde q_{n-1} = (S_n )_\# \tilde q_n.
\]
We also have that $q_N = \tilde q_N = q \in \calP_2^r$ by assumption.

For the sequence of $q_n$, we know that $T_n^{-1}$ is non-degenerate (Assumption \ref{assump:1st-order-condition-error}).
Meanwhile, for $q_n \in \calP_2^r$, $T_n^{-1}$ being globally Lipschitz on $\R^d$ also implies that it is in $L^2(q_n)$ (Lemma \ref{lemma:Lip-is-L2}).
Then by Lemma \ref{lemma:Tp-in-P2r-new}, $q_{n-1} = (T_n^{-1} )_\# q_n$ is also in $\calP_2^r$.
For the sequence of $\tilde q_n$, by Assumption \ref{assump:inv-error}, $S_n$ is non-degenerate and in $ L^2( \tilde q_n)$, 
thus from $\tilde q_n \in \calP_2^r$, $\tilde q_{n-1}= (S_n )_\# \tilde q_n$ is also in $\calP_2^r$ by Lemma \ref{lemma:Tp-in-P2r-new} again.
Thus by induction, we have that $q_n$ and $\tilde q_n$ are all in $\calP_2^r$.

For each $n$, we have
\begin{align*}
    \W(\tilde q_{n-1}, q_{n-1}) 
    & = \W ((S_n )_\# \tilde q_n,  (T_n^{-1} )_\# q_n )\\
    & \leq \underbrace{\W ((S_n )_\#  \tilde q_n, (T_n^{-1} )_\# \tilde q_n)}_{\textcircled{1}} \\ & + \underbrace{\W ((T_n^{-1}  )_\# \tilde q_n, (T_n^{-1}  )_\#  q_n )}_{\textcircled{2}}.
\end{align*}
To bound $\textcircled{1}$, we use Assumption \ref{assump:inv-error} and the Lipschitzness of $T_n^{-1}$. 
Define $L :=e^{\gamma K}$.
Using $(S_n, T_n^{-1} )_\# \tilde q_n$ as the coupling, we have
\begin{align*}
& \W^2 ((S_n )_\#  \tilde q_n, (T_n^{-1} )_\#  \tilde  q_n) \\
& \le \int_{\R^d} \| S_n(x) - T_n^{-1}(x) \|^2  \tilde  q_n(x) dx \\
& \le \int_{\R^d} L^2 \| T_n \circ S_n(x) -x \|^2  \tilde  q_n(x) dx  
\\ 
& = L^2 \| T_n \circ S_n - {\rm I_d} \|_{\tilde q_n}^2,
\end{align*}
where for the second inequality, we use the fact that $T_n^{-1}$ is $L$-Lipschitz. Thus, 
\begin{equation}
\textcircled{1} \le L  \varepsilon_{\rm inv}.
\end{equation}
To bound $\textcircled{2}$, we use that $T_n^{-1}$ is $L$-Lipschitz on $\R^d$ again.
Specifically, let $Y_n$ be the unique OT  map from $q_n$ to $\tilde q_n$ which is well-defined by the Brenier Theorem,
then $(T_n^{-1} \circ Y_n, T_n^{-1} )_\# q_n$ is a coupling of $(T_n^{-1} )_\# \tilde q_n$ and $(T_n^{-1})_\#  q_n$. 
We have that 
\begin{align*}
& \W^2(( T_n^{-1} )_\# \tilde q_n, ( T_n^{-1} )_\#  q_n) \\
& \le  \int_{\R^d} \| T_n^{-1} \circ Y_n (x) - T_n^{-1} (x)  \|^2 q_n(x) dx \\
& \le  \int_{\R^d} L^2 \|  Y_n (x) - x  \|^2 q_n(x) dx  \\
& = L^2  \W^2( \tilde q_n ,q_n).
\end{align*}
Thus 
\begin{equation}
\textcircled{2} \le L  \W( \tilde q_n ,q_n).
\end{equation}
Putting together, we have
\[
 \W(\tilde q_{n-1}, q_{n-1})  \le e^{\gamma K } ( \varepsilon_{\rm inv} +  \W( \tilde q_n ,q_n)).
\]
Note that $\W( \tilde q_N ,q_N) = 0$ by that $\tilde q_N = q_N = q$.
Applying recursively from $n= N$ to $n=1$ gives that
\begin{equation}
\W( \tilde q_0, q_0 ) \le \varepsilon_{\rm inv} \frac{ e^{\gamma K }  (e^{\gamma K N} -1 )}{ e^{\gamma K } -1},
\end{equation}
which proves \eqref{eq:W-tilq0-q0} by that $e^x -1 \ge x$ for any $x \in \R$. 
\end{proof}

\begin{proof}[Proof of Corollary \ref{cor:mixed-bound-inv}]
Under the condition of the corollary,
Corollary \ref{cor:KL-P-P2r} applies to bound ${\rm KL}( p || q_0)$ and $TV ( p, q_0)$ as in \eqref{eq:bound-cor-KL-P-Pr2},
and 
Proposition \ref{prop:w2-q0} applies to bound $\W( \tilde q_0, q_0) $ as in \eqref{eq:W-tilq0-q0}.
It suffices to show that the r.h.s. of \eqref{eq:W-tilq0-q0} is less than or equal to that of \eqref{eq:w2-tidleq0-q0-bound-final}.

By the choice of $N$ in  \eqref{eq:def-big-N},
\[
N  \le  \frac{8 }{ \gamma \lambda} \left( \log  \W(p_0, q)  + \log ({\lambda}/{\varepsilon})  \right)  +1,
\]
and thus
\[
e^{ \gamma K (N+1) } \le  e^{2\gamma K}    \left(\W(p_0, q)  \frac{ \lambda}{\varepsilon} \right)^{{8 K}/{ \lambda} }   ,
\]
which proves the needed inequality.
\end{proof}

\section{Numerical evidence to support Assumption \ref{assump:1st-order-condition-error}}\label{app:exp}

\begin{figure}[t]
\centering
\includegraphics[height=.35\linewidth]{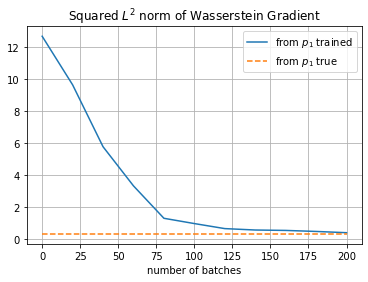} 
\caption{
Computed values of $\| \nabla_{\W} F_{n+1} (p_{n+1}) \|_{p_{n+1}}^2 $ from $N=2000$ samples, where $p_{n+1}$ is pushforwarded by a trained neural network transport $T$ from a Gaussian initial $p_n$ in $\R^2$, $n=0$. 
The blue line shows the value as the training progresses, and the dashed line is a base value computed from the analytical solution $p_{n+1}^{\rm true}$ (where the Wasserstein gradient vanishes). 
}
\label{fig:wass-2d-exp}
\end{figure}

\begin{figure*}[t]
\centering
\includegraphics[height=.5\linewidth]{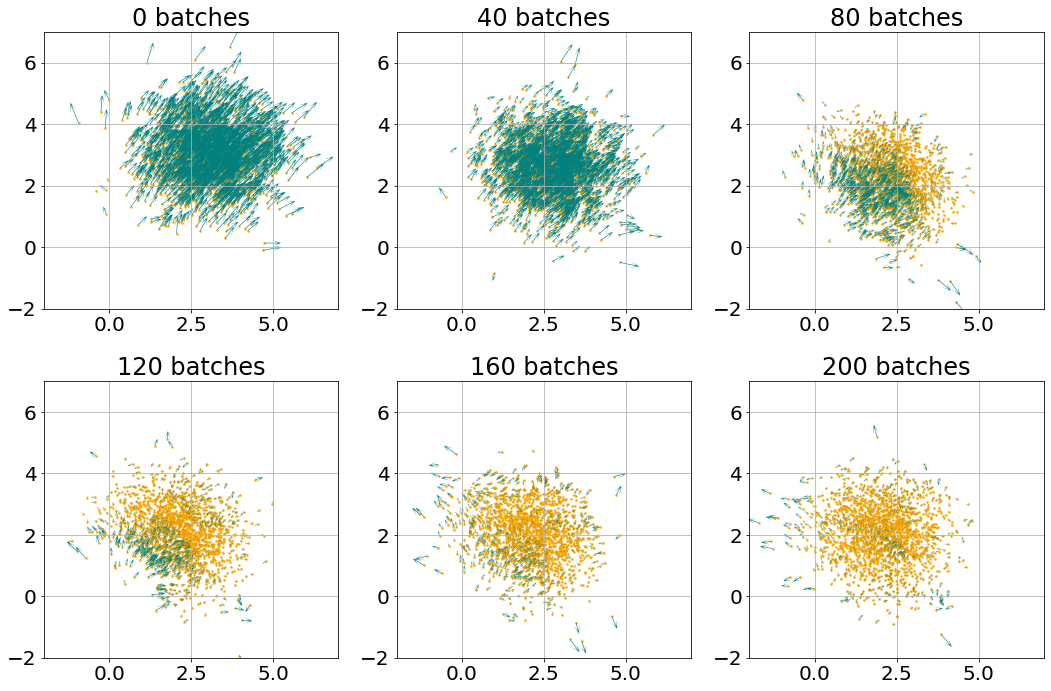} 
\caption{
The Wasserstein gradient vector field $\xi$ at samples $x_i^{(1)} = T(x_i)$ (shown by green arrows), where $T$ is the trained neural network transport map, plotted as the training progresses. 
The yellow dots are samples $x_i^{(1)}$.
The length of the arrow is proportional to the magnitude of $\| \xi(x_i^{(1)})\|$. 
}
\label{fig:vec_field-exp}
\end{figure*}

We conduct training of one JKO block to verify \eqref{eq:A1-small-|xi_n|} in Assumption \ref{assump:1st-order-condition-error}.
The code is available at \url{https://github.com/yixintan-zeta/jko_wass_grad}.

Data is in $\R^2$.
Let $n=0$, $p_0  = \calN( [3,3]^T, I)$, 
$q = \calN(0, I)$, and $G(\rho) = {\rm KL}(\rho || q)$.
The step size $\gamma = 0.5$.
We train a JKO flow network block to minimize $F_{n+1}$ in \eqref{eq:JKO-obj-1b} via the training objective \eqref{eq:JKO-obj-1c}, where $T$ is parametrized by a neural ODE block consisting of two hidden layers with 128 hidden dimensions and using the softplus activation ($\beta =20$).
We use 10,000 training samples, batch size 2000, and 200 total iterations, with learning rate $10^{-4}$.
The implementation of the JKO flow network follows the setup in \cite{xu2022jko}.

Making use of the explicit expression of $\nabla_{\W} G(\rho)$, let $\xi $ be $\xi_1$, then \eqref{eq:expression-W2-grad-Fn+1} gives that 
\[
\xi = (\nabla V + \nabla \log p_1)  -  \frac{T_{p_1}^{p_0} - {\rm I_d}}{\gamma},
\]
where $V(x) = \|x\|^2/2$. 
To compute $\xi$ as a vector field, we used $N=2000$ samples $x_i \sim p_0$, and then the neural-network trained $T$ will push-forward $x_i$ to $T(x_i)  \sim p_1$. Let $X_0 = \{ x_i^{(0)}:= x_i\}_{i=1}^N$ and $X_1 = \{ x_i^{(1)}:= T(x_i)\}_{i=1}^N$ be two data clouds. 
We approximate the OT map $T_{p_1}^{p_0}$ evaluated on $x_i^{(i)}$ by the solution of a discrete OT problem, computed by the Python POT package \cite{POTlink}. 
The score function $\nabla \log p_1$ is approximately computed by a kernel approach, where we used a Gaussian kernel with a properly chosen bandwidth parameter. 
Once $\xi(x_i^{(1)})$ is computed at every sample, we can approximately compute the squared $L^2$ norm $\| \xi \|_{p_1}^2$ by a sample average. 

We compute $\xi$ at $p_1$ induced by trained $T$ not only at the end of training but also during intermediate iterations. This will show the change in the Wasserstein gradient as the neural network training progresses. 
At 0, 20, ..., 200 batches, the estimated $L^2$ norm is shown in Figure \ref{fig:wass-2d-exp}. The dash-line shows the computed value of the Wasserstein gradient at the true solution $p_1^{\rm true} = \calN( [2,2]^T,I )$, since the JKO step is from a Gaussian $p_0$ and thus the true population minimizer $p_1$ is analytically available. The numerical value is not exactly zero because it is also computed on $N=2000$ finite samples. The dash-line shows a baseline of the numerical $L^2$ norm, and it can be seen that at the end of training, the neural network learned $p_1$ archives a comparable value.

We further illustrate the vector field $\xi$ on $x_i^{(1)}$'s in Figure \ref{fig:vec_field-exp}, which shows the evolution of the Wasserstein gradient over training iterations. It can be seen that the magnitude of the vector field decreases, and the values are getting small at least within the region where the distribution density $p_1$ has a large value. 
In the outskirts, the vector field $\xi$ does not numerically get small because it is at the tail of the Gaussian distribution,
where a small $L^2$ norm does not imply pointwise smallness of $\xi$ in these regions
and the estimation of the vector field may also be of lower accuracy.

\end{document}